\documentclass[10pt,twocolumn,letterpaper]{article}

\usepackage[margin=0.75in]{geometry}
\usepackage{times} %
\usepackage{microtype}
\usepackage{graphicx}
\usepackage{subcaption}
\usepackage{booktabs}
\usepackage{hyperref}
\usepackage[table]{xcolor}

\usepackage{amsmath, amssymb, mathtools, amsthm}
\newtheorem{theorem}{Theorem}[section]
\newtheorem{proposition}[theorem]{Proposition}
\newtheorem{lemma}[theorem]{Lemma}

\theoremstyle{definition}
\newtheorem{definition}[theorem]{Definition}
\usepackage{natbib}
\theoremstyle{remark}

\newtheorem{example}{Example}
\usepackage{algorithm}
\usepackage{algorithmic}

\usepackage[framemethod=tikz]{mdframed}
\usepackage{tikz}
\usetikzlibrary{positioning, calc, backgrounds, shapes.geometric, decorations.pathmorphing, shapes.callouts}
\usepackage[most]{tcolorbox}
\definecolor{darkgreen}{rgb}{0.0, 0.5, 0.0}%
\definecolor{pastelblue}{rgb}{0.30, 0.60, 0.80}
\definecolor{salmon}{rgb}{0.98, 0.50, 0.45}

\newcommand{\states}{\mathcal{X}}
\newcommand{\actions}{\mathcal{A}}
\newcommand{\R}{\mathbb{R}}
\newcommand{\EE}{\mathbb{E}}

\DeclareMathOperator*{\argmin}{arg\,min}
\usepackage{hyperref}

\newcommand{\BR}{\mathbf{BR}}

\newcommand{\noises}{\mathcal{C}}
\newcommand{\exploitability}{\mathcal{E}}
\newcommand{\unif}{\mathrm{Unif}}
\newcommand{\cP}{\mathcal{P}}
\newcommand{\cR}{\mathcal{R}}
\hypersetup{
    colorlinks=true,      %
    allcolors=blue!60!black,       %
}
\begin{document}

\twocolumn[
  \begin{center}
    {\LARGE \textbf{Bench-MFG: A Benchmark Suite for Learning\\ in Stationary Mean Field Games} \par}
    \vspace{1.5em}
    {\large Lorenzo Magnino$^1$, Jiacheng Shen$^{2,3}$, Matthieu Geist$^4$, Olivier Pietquin$^4$, Mathieu Lauri\`ere$^{5}$ \par}
    \vspace{1em}
    {\small $^1$University of Cambridge \quad $^2$NYU Shanghai \quad $^3$NYU Center for Data Science \quad $^4$Earth Species Project \par}
    {\small $^5$NYU Shanghai Center for Data Science; NYU-ECNU Institute of Mathematical Sciences at NYU Shanghai \par}
    \vspace{1.5em}
  \end{center}
]

\begin{abstract}
The intersection of Mean Field Games (MFGs) and Reinforcement Learning (RL) has fostered a growing family of algorithms designed to solve large-scale multi-agent systems. However, the field currently lacks a standardized evaluation protocol, forcing researchers to rely on bespoke, isolated, and often simplistic environments. This fragmentation makes it difficult to assess the robustness, generalization, and failure modes of emerging methods. To address this gap, we propose a comprehensive benchmark suite for MFGs (Bench-MFG), focusing on the discrete-time, discrete-space, stationary setting for the sake of clarity. We introduce a taxonomy of problem classes, ranging from no-interaction and monotone games to potential and dynamics-coupled games, and provide prototypical environments for each. Furthermore, we propose MF-Garnets, a method for generating random MFG instances to facilitate rigorous statistical testing. We benchmark a variety of learning algorithms across these environments, including a novel black-box approach (MF-PSO) for exploitability minimization. Based on our extensive empirical results, we propose guidelines to standardize future experimental comparisons. Code available at \href{https://github.com/lorenzomagnino/Bench-MFG}{https://github.com/lorenzomagnino/Bench-MFG}.
\end{abstract}
\vspace{2em}
\section{Introduction}
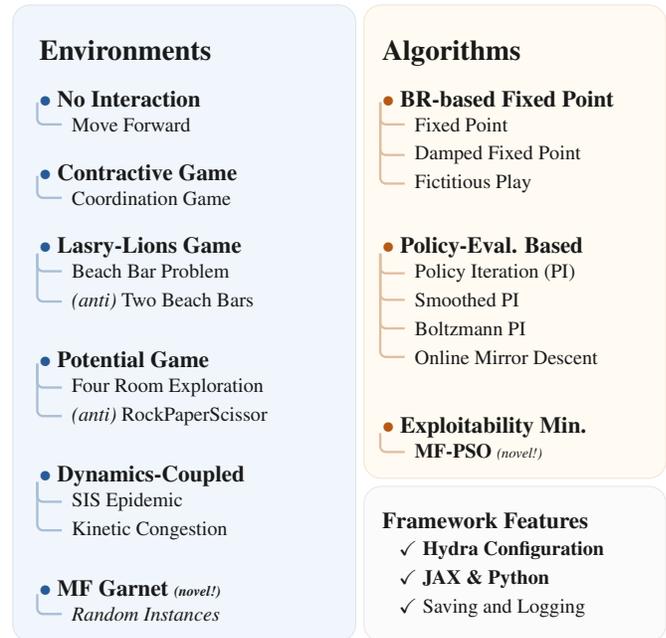
\begin{figure}[t]
    \centering
    \resizebox{0.5\textwidth}{!}{%
        \begin{tikzpicture}[font=\sffamily, x=1cm, y=1cm]

    \definecolor{envBlue}{RGB}{40, 90, 150}      %
    \definecolor{envBg}{RGB}{240, 246, 252}     %
    \definecolor{algoOrange}{RGB}{180, 90, 20}   %
    \definecolor{algoBg}{RGB}{255, 251, 242}    %
    \definecolor{textDark}{RGB}{25, 25, 25}     %
    \definecolor{borderGray}{RGB}{200, 200, 200}

    \tikzset{
        header/.style={font=\bfseries\small, text=textDark, anchor=north west},
        parent/.style={font=\bfseries\footnotesize, text=textDark, anchor=north west, inner ysep=4pt},
        child/.style={font=\scriptsize, text=textDark, anchor=north west, inner ysep=2pt},
        connector/.style={thick, rounded corners=2pt}
    }

    \def\xParentEnv{0}
    \def\xChildEnv{0.4}
    
    \node[header, font=\bfseries\normalsize] (EnvTitle) at (\xParentEnv, 0) {Environments};

    \node[parent] (NI) at (\xParentEnv, -0.6) {\textcolor{envBlue}{$\bullet$} No Interaction};
    \node[child] (NI_1) at (\xChildEnv, -1.0) {Move Forward};
    \draw[connector, draw=envBlue!40] (NI.south west) ++(0.1,0.2) |- (NI_1.west);

    \node[parent] (Cont) at (\xParentEnv, -1.5) {\textcolor{envBlue}{$\bullet$} Contractive Game};
    \node[child] (Cont_1) at (\xChildEnv, -1.9) {Coordination Game};
    \draw[connector, draw=envBlue!40] (Cont.south west) ++(0.1,0.2) |- (Cont_1.west);

    \node[parent] (LL) at (\xParentEnv, -2.4) {\textcolor{envBlue}{$\bullet$} Lasry-Lions Game};
    \node[child] (LL_1) at (\xChildEnv, -2.8) {Beach Bar Problem};
    \node[child] (LL_2) at (\xChildEnv, -3.15) {\textit{(anti)} Two Beach Bars};
    \draw[connector, draw=envBlue!40] (LL.south west) ++(0.1,0.2) |- (LL_1.west);
    \draw[connector, draw=envBlue!40] (LL.south west) ++(0.1,0.2) |- (LL_2.west);

    \node[parent] (Pot) at (\xParentEnv, -3.8) {\textcolor{envBlue}{$\bullet$} Potential Game};
    \node[child] (Pot_1) at (\xChildEnv, -4.2) {Four Room Exploration};
    \node[child] (Pot_2) at (\xChildEnv, -4.55) {\textit{(anti)} RockPaperScissor};
    \draw[connector, draw=envBlue!40] (Pot.south west) ++(0.1,0.2) |- (Pot_1.west);
    \draw[connector, draw=envBlue!40] (Pot.south west) ++(0.1,0.2) |- (Pot_2.west);

    \node[parent] (DC) at (\xParentEnv, -5.2) {\textcolor{envBlue}{$\bullet$} Dynamics-Coupled};
    \node[child] (DC_1) at (\xChildEnv, -5.6) {SIS Epidemic};
    \node[child] (DC_2) at (\xChildEnv, -5.95) {Kinetic Congestion};
    \draw[connector, draw=envBlue!40] (DC.south west) ++(0.1,0.2) |- (DC_1.west);
    \draw[connector, draw=envBlue!40] (DC.south west) ++(0.1,0.2) |- (DC_2.west);

    \node[parent] (Gar) at (\xParentEnv, -6.6) {\textcolor{envBlue}{$\bullet$} MF Garnet \textit{\tiny (novel!)}};
    \node[child] (Gar_1) at (\xChildEnv, -7.0) {\textit{Random Instances}};
    \draw[connector, draw=envBlue!40] (Gar.south west) ++(0.1,0.2) |- (Gar_1.west);

    \def\xParentAlgo{4.2}
    \def\xChildAlgo{4.6}

    \node[header, font=\bfseries\normalsize] (AlgoTitle) at (\xParentAlgo, 0) {Algorithms};

    \node[parent] (BR) at (\xParentAlgo, -0.6) {\textcolor{algoOrange}{$\bullet$} BR-based Fixed Point};
    \node[child] (BR_1) at (\xChildAlgo, -1.0) {Fixed Point};
    \node[child] (BR_2) at (\xChildAlgo, -1.35) {Damped Fixed Point};
    \node[child] (BR_3) at (\xChildAlgo, -1.7) {Fictitious Play};
    \draw[connector, draw=algoOrange!40] (BR.south west) ++(0.1,0.2) |- (BR_1.west);
    \draw[connector, draw=algoOrange!40] (BR.south west) ++(0.1,0.2) |- (BR_2.west);
    \draw[connector, draw=algoOrange!40] (BR.south west) ++(0.1,0.2) |- (BR_3.west);

    \node[parent] (PE) at (\xParentAlgo, -2.4) {\textcolor{algoOrange}{$\bullet$} Policy-Eval. Based};
    \node[child] (PE_1) at (\xChildAlgo, -2.8) {Policy Iteration (PI)};
    \node[child] (PE_2) at (\xChildAlgo, -3.15) {Smoothed PI};
    \node[child] (PE_3) at (\xChildAlgo, -3.5) {Boltzmann PI};
    \node[child] (PE_4) at (\xChildAlgo, -3.85) {Online Mirror Descent};
    \draw[connector, draw=algoOrange!40] (PE.south west) ++(0.1,0.2) |- (PE_1.west);
    \draw[connector, draw=algoOrange!40] (PE.south west) ++(0.1,0.2) |- (PE_2.west);
    \draw[connector, draw=algoOrange!40] (PE.south west) ++(0.1,0.2) |- (PE_3.west);
    \draw[connector, draw=algoOrange!40] (PE.south west) ++(0.1,0.2) |- (PE_4.west);

    \node[parent] (Exp) at (\xParentAlgo, -4.6) {\textcolor{algoOrange}{$\bullet$} Exploitability Min.};
    \node[child] (Exp_1) at (\xChildAlgo, -5.0) {\textbf{MF-PSO} \textit{\tiny (novel!)}};
    \draw[connector, draw=algoOrange!40] (Exp.south west) ++(0.1,0.2) |- (Exp_1.west);

    \def\yNotesStart{-5.8}
    \node[header, font=\bfseries\footnotesize] (NotesTitle) at (\xParentAlgo, \yNotesStart) {Framework Features};
    \node[child, font=\scriptsize] (Note1) at (\xParentAlgo+0.2, \yNotesStart-0.4) {$\checkmark$ \textbf{Hydra Configuration}};
    \node[child, font=\scriptsize] (Note2) at (\xParentAlgo+0.2, \yNotesStart-0.75) {$\checkmark$ \textbf{JAX \& Python}};
    \node[child, font=\scriptsize] (Note3) at (\xParentAlgo+0.2, \yNotesStart-1.1) {$\checkmark$ Saving and Logging};

    \begin{scope}[on background layer]
        \draw[fill=envBg, draw=envBlue!15, rounded corners=6pt] (-0.2, 0.3) rectangle (4.0, -7.5);
        \draw[fill=algoBg, draw=algoOrange!15, rounded corners=6pt] (4.1, 0.3) rectangle (7.8, -5.5);
        \draw[fill=gray!3, draw=borderGray!40, rounded corners=6pt] (4.1, -5.6) rectangle (7.8, -7.5);
    \end{scope}

\end{tikzpicture}
    }
    \caption{\small \textbf{Bench-MFG} Overview}
    \vspace{-0.5cm}
\end{figure}
The scalability of Multi-Agent Reinforcement Learning (MARL) remains a fundamental obstacle in the pursuit of intelligent decentralized systems. While Deep Reinforcement Learning (DRL) has demonstrated superhuman performance in zero-sum games such as Go and Chess~\citep{silver2016mastering,silver2017masteringchess}, and complex team-based environments like StarCraft~\citep{samvelyan2019starcraft}, these successes rarely translate effortlessly to systems with massive populations. As the number of agents increases, the joint state-action space expands exponentially, rendering standard MARL approaches computationally infeasible~\citep{busoniu2008comprehensive,yang2020overview,zhang2021multi}.

The Mean Field Game (MFG) framework, pioneered by~\citet{MR2295621} and~\citet{huang2006largeMKV}, circumvents this curse of dimensionality by invoking a continuum approximation. Rather than modeling pairwise interactions, MFGs focus on a representative agent interacting with the statistical distribution of the population. This formulation has proven efficient in domains requiring the modeling of large crowds, such as finance~\citep{cardaliaguet2018mean,carmona2017mean}, economics~\citep{MR2647032,achdou2022income}, and epidemiology~\citep{laguzet2015individual,olmez2022modeling,aurell2022optimal}. In the stationary setting, the problem reduces to finding a fixed point: a Nash Equilibrium where the agent's optimal policy induces a stationary distribution that, in turn, justifies the policy.

Historically, MFGs were solved using numerical methods for partial differential equations (PDEs)~\citep{achdou2020mean}. However, these grid-based methods struggle in high-dimensional state spaces. Consequently, the research community has pivoted toward learning-based approaches that leverage RL to solve MFGs without knowledge of the underlying model dynamics~\citep{guo2019learningMFG,subramanian2019reinforcementMFG,cui2021approximately}. Recent surveys~\citep{lauriere2022learning} highlight a surge in algorithms adapting well-known methods such as Fictitious Play (FP)~\citep{perrin2021meanflock,lauriere2022scalable,magnino2025solving}, Actor-Critic architectures~\citep{fu2019actorcriticMFG,angiuli2020unifiedRLMFGMFC} and Online Mirror Descent~\citep{hadikhanloo2018learningphd,perolat2021scalingMFGOMD} to this setting.

\textbf{Challenges.} Despite this methodological progress, empirical evaluation in the MFG literature remains fragmented. Unlike general RL, which benefits from standardized benchmarks like the Arcade Learning Environment~\citep{bellemare2013arcade}, MuJoCo~\citep{todorov2012mujoco}, BenchMARL ~\citep{bettini2024benchmarl} or ~\cite{gorsane2022towards}, \textbf{MFG algorithms are typically evaluated on ad-hoc, isolated examples}. These environments, often limited to Linear-Quadratic (LQ) models or simple grid worlds, fail to capture the rich diversity of interaction types found in real-world applications. For instance, an algorithm that succeeds in a monotonic setting (where congestion is penalized) may fail catastrophically in games with cyclical dynamics or complex coupling between the population distribution and state transitions.

In this work, we address the lack of a unified evaluation framework by proposing a benchmark suite specifically designed for stationary MFGs. Our contributions are: \looseness=-1
\begin{mdframed}[
    linecolor=blue!20,      %
    linewidth=0pt,        %
    roundcorner=5pt,        %
    backgroundcolor=blue!5, 
    innertopmargin=7pt, 
    innerbottommargin=5pt
]
\begin{enumerate}
    \item \textbf{Taxonomy and Prototypical Examples:} We categorize MFGs into classes based on their mathematical structure, No-Interaction, Lasry-Lions Monotone, Potential, and Dynamics-Coupled games. For each, we provide open-source, prototypical environments that isolate specific algorithmic challenges (our JAX implemenation gives 2000x of speedup compared to classical open-source methods).
    \item \textbf{Procedural Generation (MF-Garnets):} We introduce MF-Garnets. This generator constructs random MFG instances with controllable density and interaction types (additive vs. multiplicative), enabling statistical stress-testing of solvers beyond fixed examples.
    \item \textbf{Systematic Evaluation:} We benchmark a wide range of algorithms, including Fixed Point Iteration, Policy Iteration, and a novel black-box exploitability minimizer (MF-PSO), across our suite.
    \item \textbf{Guidelines:} We synthesize our findings into a set of best practices to foster reproducibility and rigor in future MFG research.
\end{enumerate}
\end{mdframed}

\section{Model}

\noindent\textbf{Notation.} If  $\mathcal L$ is a finite set, $\Delta_{\mathcal L}$ denotes the set of probability distributions on $\mathcal L$; $\unif(\mathcal{L})$ denotes the uniform distribution over $\mathcal{L}$. $P^\top$ denotes the transpose of a matrix $P$. $\mathbb N := \{0,1,2,\dots\}$.

\noindent\textbf{MFG model.} We focus on stationary models with discounted cumulative reward. A {\bf mean field game (MFG) model} is defined by a tuple $(\states, \actions, p, r, \gamma)$ where:
$\states, \actions$ are the finite state space and action space,
$p: \states \times \actions \times \Delta_{\states} \to \Delta_{\states}$ is the one-step transition probability kernel,
$r: \states \times \actions \times \Delta_{\states} \to \mathbb R$ is the one-step reward function,
$\gamma \in [0,1)$ is the discount factor. 
Instead of $p$, it is often convenient to write:
\(
    x_{n+1} = F(x_n, a_n, \mu_n, \epsilon_n), \quad \epsilon_n \sim \xi \in \Delta_{\noises} \text{ i.i.d.},
\)
where $F: \states \times \actions \times \Delta_{\states} \times \noises \to \states$ is a {\bf transition function}, $\xi$ is the noise distribution and $\noises$ is the noise space, 
so that $p(x'|x,a,\mu) = \xi(\{ e \in \noises : F(x,a,\mu,e) = x' \})$.

\noindent\textbf{Policies.} Let $\Pi = (\Delta_\actions)^{\states} = \Delta_\actions^{\states}$ denote the set of stationary policies. A policy $\pi \in \Pi$ maps a state $x \in \states$ to a distribution $\pi(\cdot|x) \in \Delta_\actions$.

\noindent\textbf{Objective function.} Assume the population distribution is stationary and given by $\mu \in \Delta_{\states}$. The {\bf total discounted reward} for a player using policy $\pi \in \Pi$ facing $\mu$ is:
\begin{equation}\label{eq: cost_def}
    J_{\gamma}(\pi;\mu) = \EE\left[ \sum_{n=0}^{\infty}\gamma^n r(x_n, a_n, \mu)\right],
\end{equation}
where the representative agent evolves as (for $n\ge 0$)
\begin{equation}\label{eq: dynamics_def}
    x_0 \sim \mu, \quad 
    a_n \sim \pi(\cdot|x_n), \quad 
    x_{n+1} \sim p(\cdot|x_n, a_n, \mu).
\end{equation}

\noindent\textbf{Distribution evolution.} For stationary policy $\pi$ and population distribution $\mu$, define the state-to-state transition matrix
\(
    P_{\mu,\pi}(x'|x) := \sum_{a\in\actions} p(x'|x,a,\mu) \, \pi(a|x),
\)
so that the distribution of the agent at time $n$ evolves as
\[
    \mu_0^{\mu,\pi} = \mu_0, \qquad \mu_{n+1}^{\mu,\pi} = P_{\mu,\pi}^\top \mu_n^{\mu,\pi}, \quad n \ge 0.
\]
We assume that for every stationary policy $\pi$, there exists a unique stationary distribution $\mu$ solving $\mu = P_{\mu,\pi}^\top \mu$.

\noindent\textbf{Nash Equilibrium.} Define the {\bf best response} map
\[
    \BR: \Delta_{\states} \to 2^{\Pi}, \qquad \BR(\mu) := \operatorname*{argmax}_{\pi \in \Pi} J_\gamma(\pi;\mu),
\]
and the {\bf population behavior} map
\begin{equation}\label{eq: population_behaviour_map}
    \mathbf M: \Pi \to 2^{\Delta_{\states}}, \quad 
    \mathbf M(\pi) := \{\mu \in \Delta_{\states} \mid \mu = P_{\mu,\pi}^\top \mu\}.
\end{equation}

\begin{definition}\label{def: NE}
A pair $(\pi^*, \mu^*) \in \Pi \times \Delta_{\states}$ is a {\bf mean field Nash equilibrium (MFNE)} if
$\pi^* \in \BR(\mu^*),$ and $\mu^* = \mathbf M(\pi^*).$
\end{definition}
Equivalently, $(\pi^*, \mathbf M(\pi^*))$ is a MFNE if and only if $\exploitability(\pi^*)=0$, where $\exploitability$ is
the \textbf{exploitability} of a policy, which quantifies the maximal gain achievable by deviating:
$
    \exploitability(\pi) = \max_{\pi' \in \Pi} J_\gamma(\pi'; \mu^\pi) - J_\gamma(\pi; \mu^\pi).
$

\section{Algorithms}
The goal of this section is this to present different methods that aim to find the Nash Equilibria in the setting presented above. We focus on three types of algorithms. First, fixed point-based methods, which have been used since the introduction of MFGs theoretically (to show existence of solutions) and numerically. Second, policy iteration-based methods which have attracted a growing interest. Last, since MFNE policies are characterized as minimizers of the exploitability, we also include a class of algorithms that attempt to directly minimize the exploitability; here we propose \textbf{MF-PSO}, a black-box optimization method that adapts Particle Swarm Optimization (PSO) in the MFG setting.

\begin{table}[t]
\centering
\small
\renewcommand{\arraystretch}{1.5}
\begin{tabular}{p{0.35\linewidth} p{0.55\linewidth}}
\hline
\textbf{Algorithm} & \textbf{Description} \\ \hline

\multicolumn{2}{c}{\cellcolor{darkgreen!5}\textbf{\color{darkgreen} BR-Based Fixed Point}} \\
\hline

\textbf{Damped Fixed Point} & 
Iteratively computes the best response (BR) $\pi^{\ell+1}$ against the current mean field (MF) $\mu^\ell$, then updates the mean field to $\mu^{\ell+1}$ by taking a weighted average with damping coefficient $\delta_\ell$ of $\mu^\ell$ and the MF $\mu^{\pi^{\ell+1}}$ induced by $\pi^{\ell+1}$:
$\mu^{\ell+1} = \delta_\ell \mu^\ell + (1-\delta_\ell)\mu^{\pi^{\ell+1}}$. 
\\ \hline

\textbf{Fixed Point} & 
Damped FP with $\delta_k \equiv 0$. 
\\ \hline

\textbf{Fictitious Play} & 
Damped FP with $\delta_k = k/(k+1)$. 
\\ \hline\hline

\multicolumn{2}{c}{\cellcolor{darkgreen!5}\textbf{\color{darkgreen} Policy-Eval.\ Based}} \\
\hline

\textbf{Policy Iteration (PI)} & 
Alternates between evaluating the current policy $\pi^\ell$ by computing a value function $Q^\ell$, and improving the policy by defining $\pi^{\ell+1}$ as a greedy policy for $Q^\ell$, then updating the MF $\mu^{\ell+1} = \mu^{\pi^{\ell+1}}$ assuming the population fully adopts this new policy. 
\\ \hline

\textbf{Smoothed PI (SPI)} & 
Similar to PI but updates the MF by averaging it with the previous MF (using either constant weights or $1/k$). 
\\ \hline

\textbf{Boltzmann PI (BPI)} & 
Similar to PI but updates the policy by taking a softmax instead of greedy. 
\\ \hline

\textbf{Online Mirror Descent (OMD)} & 
Similar to BPI but updates $Q^\ell$ by summing with previous $Q$-functions instead of using only the latest one. 
\\ \hline\hline

\multicolumn{2}{c}{\cellcolor{darkgreen!5}\textbf{\color{darkgreen} Exploitability Minimization Methods}} \\
\hline

\textbf{Particle Swarm Optimization (PSO)} & 
Each particle updates its position based on its own best position and the swarm's best position.
\\ \hline

\end{tabular}
\caption{Summary MFG algorithms; see Appx.~\ref{app:algos} for details.}
\label{tab:mfg_algorithms}
\vspace{-0.4cm}
\end{table}

To compute the stationary mean-field map $\mathbf{M}(\pi)$ for an arbitrary policy $\pi$ we approximate this map using the operator $\mathbf{M}^N$ that consists in recursively computing the one-step mean field transition $\mu^{(\ell+1)} = P_{\mu^{(\ell)},\pi}^\top \mu^{(\ell)}$ for $\ell=0, \dots, N$, so that $\mu^{(N)}\approx\mathbf{M}(\pi)$. 
We average $k$ policies $(\pi^{*,1}, \dots, \pi^{*,k})$ using weighted average (see Appx.~\ref{app:average_policy}).
Eventually, the best response  $\pi^{*,k} = \arg\max_{\pi} J_{\gamma} (\pi;\bar\mu^{k-1})$ is obtained from $Q^*(s, a)$ computed by backward induction (see Appx.~\ref{app:br}).

\vspace{-0.3cm}
\section{Classes of MFGs}

We now describe several classes of MFGs. For each class, we provide a general definition and some illustrative examples that will be studied numerically in the next section.

\subsection{No-Interaction Games (NI-MFG)}

\begin{mdframed}[
    linewidth=0pt,        %
    roundcorner=5pt,        %
    backgroundcolor=blue!5, 
    innertopmargin=4pt, 
    innerbottommargin=6pt
]
\begin{definition}
    An MFG is a {\bf no-interaction (NI)} MFG if $p$ and $r$ are independent of $\mu$. 
\end{definition}
\end{mdframed}
\vspace{-0.2cm}
Since there are no interactions, solving an NI-MFG amounts to solving an MDP (any optimal policy is also an MFNE).
\begin{example}[\textbf{Move forward}]
    \label{ex:NI1}
    Let $\states = \{0,1,2,3,4,5,6\}$, $\actions = \{-1, 0, 1\}$, $\noises = \{-1,0,1\}$, $F(x,a,\mu,e) = x+a+e$, $\xi = \unif(\noises)$, $r(x, a, \mu) = -c |a| + x$ with $c=0.1$. This represents a one-dimensional grid world in which the reward increases as the agent moves towards states of larger index. 
\end{example}

\subsection{Contractive Games (C-MFG)}
While general MFGs require advanced algorithms, a subset of games admits a unique equilibrium, that can be found via simple iterative updates. This occurs when the feedback loop between the population distribution and the agent's optimal policy is a contraction mapping. This class serves as a crucial ``sanity check'' for learning algorithms: solvers should be expected to converge rapidly in this regime.
\vspace{0.2cm}
\begin{mdframed}[
    linewidth=0pt,        %
    roundcorner=5pt,        %
    backgroundcolor=blue!5, 
    innertopmargin=4pt, 
    innerbottommargin=6pt
]
\begin{definition}[Contractive MFG]
\label{def:contractive_mfg}
    Assume that for every $\mu \in \Delta_{\states}$, the best response $\pi_\mu = \BR(\mu)$ is unique. Define the \textbf{fixed point operator} $\Gamma: \Delta_{\states} \to \Delta_{\states}$ as the composition of the best response and the population behavior map: $\Gamma(\mu) := \mathbf{M}(\BR(\mu))$.
    An MFG is \textbf{Contractive} if $\Gamma$ is a contraction with respect to the $L_1$-norm (total variation distance). That is, there exists a constant $\kappa < 1$ such that for all $\mu, \nu \in \Delta_{\states}$:
    \(
        \|\Gamma(\mu) - \Gamma(\nu)\|_1 \le \kappa \|\mu - \nu\|_1.
    \)
\end{definition}
\end{mdframed}
By the Banach Fixed Point Theorem, a Contractive MFG admits a unique Mean Field Nash Equilibrium, and the sequence $\mu_{k+1} = \Gamma(\mu_k)$ converges linearly to it. However appealing this class is, it is unfortunately impossible to have interesting MFGs that are contractive when the spaces are discrete; see Lemma~\ref{lem:contraction-constant} in appendix in our setting and \citep[Theorem 2]{cui2021approximately} in the finite-horizon setting. The key difficulty is related to the continuity of the BR map w.r.t. $\mu$. It is possible to ensure such continuity after \emph{changing} the problem by adding an entropy regularization term in the reward, but we do not use this approach here because (1) this changes the Nash equilibrium and (2) we want to stick to classical MFGs. For the sake of benchmarking examples, we provide below an example which satisfies the contraction condition (see Appx.~\ref{sec:contractive-ex-proof} for the proof), at the expense of having a trivial BR map (constant with respect to $\mu$).
\begin{example}[\textbf{Coordination game}]
\label{ex:contractive-example}
    Consider $\states = \{0, 1\}$, $\actions = \{\text{Stay}, \text{Switch}\}$, and deterministic dynamics:
    $x_{n+1} = x_n$ if $a = \text{Stay} $, and $x_{n+1} = 1-x_n$ otherwise (if $a = \text{Switch}$). 
    The agent incurs a high cost $C > 0$ for switching states, and a smaller congestion penalty scaled by $\alpha > 0$.
    \(
        r(x, a, \mu) = -C \cdot \mathbb{I}(a=\text{Switch}) - \alpha \mu(x),
    \)
    where $\mathbb{I}$ denotes the indicator function. 
If the switching cost $C$ dominates the maximum possible gain from avoiding congestion, the unique best response is to always ``Stay''. More precisely,
    if the parameters satisfy the condition:
    \(
        C > \frac{\alpha}{1-\gamma},
    \)
    then the Best Response is unique and constant: $\pi^*(a=\text{Stay}|x) = 1$ for all $\mu$. Consequently, the fixed-point operator $\Gamma$ is a contraction with constant 0 (see Prop.~\ref{prop:contractive-ex-property}).
\end{example}

\subsection{Lasry-Lions Monotone Game (LL-MFG)}
Introduced by \citet{MR2295621}, the Lasry-Lions monotonicity condition has become a fundamental assumption in the theory of MFGs, particularly to show uniqueness of equilibrium or to prove the convergence of algorithms, see e.g.~\citep{perrin2020fictitious,perolat2021scalingMFGOMD}.
The underlying principle is that the reward discourages agents from being in crowded areas, i.e., states $x$ where $\mu(x)$ is large. 
\begin{mdframed}[
    linewidth=0pt,        %
    roundcorner=5pt,        %
    backgroundcolor=blue!5, 
    innertopmargin=2pt, 
    innerbottommargin=2pt
]
\begin{definition}%
\label{def:ll_mono}
    Assume that the reward is separable, i.e. there exists two functions $\psi:\mathcal X \times \mathcal A\to \mathbb R$,  $g:\mathcal X \times \Delta_{\mathcal X}\to \mathbb R$ such that the reward can be written as $r(x,a,\mu)=\psi(x,a) + g(x, \mu)$. Then, the (resp. strict) \textbf{Lasry-Lions (LL) monotonicity condition} is: for any $\mu, \nu \in \Delta(\mathcal X)$, 
    $\sum_{x \in \mathcal X}\left(g(x, \mu) - g(x, \nu)\right)(\mu(x) - \nu(x)) \leq 0$ (resp. $<0$).
\end{definition}

\begin{definition}[LL-MFG]
    An MFG is \textbf{Lasry-Lions (LL) monotone}  if the Lasry-Lions monotonicity condition in Def.~\ref{def:ll_mono} is satisfied.
\end{definition}\label{def:llm_game}
\end{mdframed}
\begin{example}[\textbf{Beach Bar Problem}]
    \label{ex:beach-bar}
    Let $\states, \actions, \noises$ be as in Ex.~\ref{ex:NI1}.
    Let $\xi(0) = p$ and $\xi(-1)= \xi(+1)= 1 -2p$. 
    Let the reward be: $r(x,a,  \mu) = -c_1|a| - c_2|x - x_{center}| - \alpha \mu(x),$ with $x_{center} = 3$, and $c_1, c_2, \alpha$ are positive weights. This models a congestion cost where the agents want to reach the center but are penalized when moving and for being in a crowded state (with large value of $\mu(x)$). 
\end{example}
The beach bar problem was introduced by~\citep{perrin2020fictitious}.

\begin{example}[\textbf{Two–Beach Bar Coordination MFG}]
\label{ex:two-beach-bar}
Let $\states, \actions, \noises$ and the noise distribution $\xi$ be as in Example~\ref{ex:beach-bar}.  
Fix two symmetric target locations
$x_L = 2,$ and $x_R = 4.$ Let $c_1,c_2,\alpha > 0$ and define the one–step reward by
\(
r(x,a,\mu)
=
- c_1 |a|
- c_2 \min\bigl\{|x-x_L|,\;|x-x_R|\bigr\}
+ \alpha \mu(x).
\)
The first term penalizes motion, the second term models attraction to the two beach bars, and the last term is a positive mean–field interaction favoring crowded states. This game \emph{does not} satisfy the LL monotonicity condition because $\sum_x (r(x,a,\mu) - r(x,a,\nu))(\mu(x)-\nu(x)) = \sum_x (\alpha\mu(x)-\alpha\nu(x))(\mu(x)-\nu(x)) \ge 0$. Furthermore, for parameters satisfying $\alpha \gg c_2 \gg c_1$, the model admits exactly two stationary mean field Nash equilibria, with population fully concentrated at one of the two bars.
\end{example}
This example is inspired by the social choice problem introduced in~\citep{salhab2017dynamic} (in continuous time and space).

\subsection{Potential Games  (P-MFG)}
Originally introduced by \citet{monderer1996potential} in relation to strategic form games, the concept of potential games has been studied in the continuous-time, continuous-space MFG framework by \citet{MR2295621} and \citet{cardaliaguet2017learning}, as well as in the discrete setting with discounted reward by~\citet{geist2022concave}.
\begin{mdframed}[
    linewidth=0pt,        %
    roundcorner=5pt,        %
    backgroundcolor=blue!5, 
    innertopmargin=2pt, 
    innerbottommargin=2pt
]
\begin{definition}[P-MFG]
\label{def: potential_game}
s    An MFG $(\mathcal X, \mathcal A,P, r, \gamma)$ is a \textbf{Potential} (P) MFG if the reward function $r:\mathcal X \times \mathcal A \times \Delta_{\mathcal X}\to \R$ can be derived from a potential, i.e., there exists a differentiable map $F: \mathcal A \times \Delta_{\mathcal X}\to \mathbb R$ such that: for all  $x \in \mathcal X$, $a \in \mathcal A,$ $\mu \in \Delta_{\mathcal X}$, $r(x, a, \mu) = \nabla F(a, \mu)(x)$. 
\end{definition}
\end{mdframed}
\begin{proposition}%
\label{prop: potential_characterization}
    Consider a separable reward function $r(x, a, \mu) = \psi(x, a) + g(x, \mu)$, where $g: \states \times \Delta_{\states} \to \mathbb R$ is continuously differentiable with respect to $\mu$. The game is a potential MFG if and only if the Jacobian matrix of the population-dependent term is symmetric on $\Delta_{\states}$. That is, for all $x, y \in \states$:
    \[
        \frac{\partial g(x, \mu)}{\partial \mu(y)} = \frac{\partial g(y, \mu)}{\partial \mu(x)}
    \]
\end{proposition}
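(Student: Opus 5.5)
The argument reduces to the classical symmetry criterion for gradient fields (a Poincaré-lemma argument on a convex set), applied to the map $\mu \mapsto g(\cdot,\mu)$. First I fix the interpretation of Def.~\ref{def: potential_game} for separable rewards. Given $\psi$ and $g$, the natural candidate potential has the form $F(a,\mu) = \sum_{x}\psi(x,a)\mu(x) + G(\mu)$ with $G:\Delta_{\states}\to\R$. Its gradient in $\mu$ is $\nabla F(a,\mu)(x) = \psi(x,a) + \partial_{\mu(x)} G(\mu)$. Hence the game is potential iff there is a differentiable $G$ with $\nabla G(\mu)(x) = g(x,\mu)$ for all $x$ and $\mu$. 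The $\psi$ part is always a gradient of a linear functional, so only $g$ matters.

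Conversely, if some $F(a,\mu)$ has $\nabla F(a,\mu)(x) = \psi(x,a)+g(x,\mu)$, then fixing any $a_0$ and setting $G(\mu) := F(a_0,\mu) - \sum_x \psi(x,a_0)\mu(x)$ gives $\nabla G = g$. So the statement becomes: the vector field $\mu\mapsto (g(x,\mu))_{x}$ is a gradient iff its Jacobian is symmetric.

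\emph{Necessity.} If $g(x,\mu) = \partial_{\mu(x)}G(\mu)$ with $g$ being $C^1$, then $G$ is $C^2$. Schwarz's theorem then gives $\partial_{\mu(y)} g(x,\mu) = \partial_{\mu(y)}\partial_{\mu(x)} G = \partial_{\mu(x)}\partial_{\mu(y)}G = \partial_{\mu(x)} g(y,\mu)$.

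\emph{Sufficiency.} Assuming symmetry, I would construct $G$ by a line integral from a base point $\mu_0$, say $\unif(\states)$:
\[
G(\mu) := \int_0^1 \sum_{x} g\bigl(x, \mu_0 + t(\mu-\mu_0)\bigr)\,(\mu(x)-\mu_0(x))\,dt .
\]
Differentiating under the integral and using the symmetry of the Jacobian gives $\partial_{\mu(y)} G(\mu) = \int_0^1 \frac{d}{dt}\bigl[t\, g(y,\mu_0+t(\mu-\mu_0))\bigr]\,dt = g(y,\mu)$. This is the standard Poincaré lemma computation, and it works because the domain is star-shaped; in fact it is convex.

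The main obstacle is that $\Delta_{\states}$ has empty interior in $\R^{\states}$. Partial derivatives $\partial/\partial\mu(y)$ are therefore only meaningful if $g$ is extended to an open neighbourhood, for instance to the positive orthant or to $\R^{\states}$, with symmetry holding there. Otherwise only tangential derivatives along directions $e_x - e_y$ make sense, and gradients are defined modulo constants. I would handle this by stating that $g$ is taken as a $C^1$ function on an open convex neighbourhood $U\supseteq\Delta_{\states}$, which the phrase ``differentiable with respect to $\mu$'' implicitly requires, and reading the symmetry condition on $U$. Both directions above then hold verbatim on $U$, since $U$ is convex and hence star-shaped. Restricting $G$, and thus $F$, to $\Delta_{\states}$ finishes the proof.
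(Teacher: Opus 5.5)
Your proof is correct and follows essentially the same route as the paper: Schwarz's theorem for necessity, and, for the converse, the fact that a curl-free field on a convex (hence simply connected) domain is a gradient. Your version is more careful than the paper's. The paper leaves implicit the reduction from $F(a,\mu)$ to a $\mu$-only potential $G$, the explicit line integral, and the point that $\partial/\partial\mu(y)$ only makes sense once $g$ is extended to an open neighbourhood of the simplex. Moreover, since your base point $\unif(\states)$ lies in $\Delta_{\states}$, your integral only uses the Jacobian along segments inside the simplex, so symmetry on $\Delta_{\states}$ alone already gives the converse direction.
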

The proof is provided in Appx. \ref{proof: potential_characterization}. 
\begin{example}[\textbf{4 rooms exploration}]
    \label{ex:P-MFG1}
    Let $\states = \{0, 1, \ldots, 10\}^2$ represent a $11 \times 11$ grid world, $\actions = \{(0,1), (1,0), (0, -1), (-1, 0), (0,0)\}$ corresponding to up, right, down, left, and stay respectively, $\noises = \actions $ with uniform distribution $\xi = \unif(\noises)$, and $F(x, a, \mu, e) = x + a + e$ with boundary conditions and obstacle constraints (cross walls at row 5 and column 5, with doors at $(2,5)$, $(7,5)$, $(5,7)$, $(5,2)$). The reward is $r(x, a, \mu) = -\alpha \log(\mu(x))$ with $\alpha > 0$, \textit{modeling an aversion to crowded states where agents are penalized for being in locations with high population density.}
\end{example}
The 4-room example was proposed in~\cite{geist2022concave} and then used at a testbed in several works, see e.g.~\citep{algumaei2023regularization}.

The characterization in Prop.~\ref{prop: potential_characterization} allows us to construct a class of MFGs that are provably non-potential. 
Consider a linear population reward given by an interaction matrix $A \in \mathbb R^{|\states| \times |\states|}$:
$
g(x, \mu) = [A \mu]_x = \sum_{y \in \states} A_{xy} \, \mu(y).
$
The Jacobian of $g$ with respect to $\mu$ is $A$. Thus, if $A$ is not symmetric ($A \neq A^\top$), the MFG does not admit a potential. 

A particular subclass consists of \textbf{Cyclic Games}, generated by skew-symmetric interaction matrices where $A^\top = -A$. In these games, the reward vector field possesses non-zero curl, often leading to limit cycles or rotational dynamics rather than stationary convergence.
\begin{example}[\textbf{Rock, Paper, Scissor}]
    \label{ex: RPS}
    Let $\states = \{1, 2, 3\}$ representing states Rock, Paper, and Scissors respectively. The interaction matrix is given by:
    \[
        A = \small\begin{bmatrix}
             0 & -1 &  1 \\
             1 &  0 & -1 \\
            -1 &  1 &  0
        \end{bmatrix}
    \]
    Here, $A_{xy} = 1$ implies state $x$ beats state $y$, and $A_{xy} = -1$ implies $x$ loses to $y$. The reward for being in state $x$ depends on the dominance over the current population distribution:
    \(
        g(x, \mu) = \mu(\text{prey of } x) - \mu(\text{predator of } x).
    \)
    Since $A$ is skew-symmetric and non-zero, $A_{12} = -1 \neq 1 = A_{21}$. By Prop.~\ref{prop: potential_characterization}, this MFG is \emph{not a potential game}.
\end{example}
A similar Rock, Paper, Scissor game was also considered e.g. in~\citep{muller2022learningpsro}.

\subsection{Dynamics-Coupled Games (DC-MFG)}

\begin{mdframed}[
    linewidth=0pt,        %
    roundcorner=5pt,        %
    backgroundcolor=blue!5, 
    innertopmargin=2pt, 
    innerbottommargin=2pt
]
\begin{definition}[DC-MFG]\label{def: dc-mfg}
    A Mean Field Game is said to be \textbf{Dynamics-Coupled} (DC) if the transition probability kernel $p$ depends explicitly on the population distribution $\mu$, while the reward function $r$ may or may not depend on $\mu$. That is, for states $x, x' \in \states$ and action $a \in \actions$:
    \(
        \frac{\partial p(x'|x, a, \mu)}{\partial \mu} \neq 0.
    \)
\end{definition}
\end{mdframed}
This type of interactions has thus far been less considered in the MFG literature than interactions through the reward function. Two important difficulties are that {\bf (1)} by definition, DC-MFGs are not LL-MFGs (Def.~\ref{def:ll_mono}) since the latter assumes that the dynamics does not include interactions, and {\bf (2)} during the learning process, the mean field changes and this implies that the transition probabilities of the player's MDP change. It makes it particularly interesting to check the performance of learning algorithms on DC-MFGs. 

The following example is inspired by \citep{cui2021approximately}.
\begin{example}[\textbf{SIS Epidemic}]
    \label{ex:dc1}
    Let $\states = \{0, 1\}$ representing Susceptible and Infected states respectively, $\actions = \{0, 1, \ldots, 4\}$ corresponding to discretized social interaction intensity levels in $[0, 1]$, and dynamics $F(x, a, \mu, e)$ where: if $x = 0$ (Susceptible), then $p(1|x, a, \mu) = \beta \cdot a \cdot \mu(1)$ (infection probability depends on action intensity and infected population prevalence), and if $x = 1$ (Infected), then $p(0|x, a, \mu) = \nu$ (recovery probability). The reward is $r(x, a, \mu) = a - C \cdot \mathbf{1}(x = 1)$, where $C > 0$ is the cost of being infected. This models a dynamics-coupled MFG where agents balance social activity benefits against infection risks that depend on the population distribution.
\end{example}

\begin{example}[\textbf{Kinetic Congestion}]
    \label{ex:dc2}
    Let $\states = \{0, 1, \ldots, 4\}^2$ representing a $5 \times 5$ grid world, $\actions = \{(-1,0), (1,0), (0,-1), (0,1), (0,0)\}$ corresponding to up, down, left, right, and stay respectively. Let the dynamics be $F(x, a, \mu, e)$ where movement success depends on the population density at the target state: $p(y|x, a, \mu) = 1 - \min(1, \phi(\mu(y)))$ if $y = x + a$ is the target state, where $\phi(\mu(y)) = \mu(y) / \tau$ if $\mu(y) < \tau$ and $\phi(\mu(y)) = 1$ otherwise, with capacity threshold $\tau > 0$. The reward is $r(x, a, \mu) = -\mathbf{1}(x \neq x_{\text{target}}) - c_{\text{move}} \cdot \mathbf{1}(a \neq (0,0))$, where $x_{\text{target}} = (4,4)$ is the target state and $c_{\text{move}} > 0$ is the movement cost. This represents a DC-MFG where high population density physically prevents movement, mimicking kinetic congestion or crowd dynamics.
\end{example}
 Ex.~\ref{ex:dc2} includes a kind of congestion effect through the dynamics. In contrast,  Ex.~\ref{ex:beach-bar} includes a form of congestion cost through the reward function. While we can expect that the population will spread through the state space in both cases, the effects will be quantitatively different.
\subsection{Randomly Generated Games (MF-Garnet)}

To evaluate the \textbf{robustness of numerical methods, we propose a procedure to generate synthetic MFGs}, extending the methodology of Garnet (which stands for Generic Average Reward Non-stationary Environment Testbed), see~\citep{archibald1995generation,bhatnagar2009natural}. The \textbf{key difference} is that we need to define \emph{random functions of the mean field}, and the space of such functions cannot be represented using a finite number number parameters. So we consider a few canonical choices, but many variants could be considered. 
\begin{mdframed}[
    linewidth=0pt,        %
    roundcorner=5pt,        %
    backgroundcolor=blue!5, 
    innertopmargin=3pt, 
    innerbottommargin=2pt
]
Generally speaking,
an \textbf{MF-Garnet} instance is defined by a tuple $(|\states|, |\actions|, \cP, \cR, \nu)$, where $\cP$ is a parameterized class of transition probabilities, $\cR$ is a parameterized class of reward functions, and $\nu$ is a probability distribution over the transition and reward parameters.
\end{mdframed}

\noindent\textbf{Randomized Dynamics Coupling.} 
Let $P^0 \in [0,1]^{|\states| \times |\actions| \times |\states|}$ be a base transition tensor, generally taken relatively sparse. We sample random coupling coefficients $\Gamma \in \mathbb{R}^{|\states| \times |\actions| \times |\states| \times |\states|}$ from $\mathcal{N}(0, 1)$, and scalar parameters $c_p, \rho_p \sim \unif[0, 1]$. 
We define the (unnormalized) population-dependent \textbf{intensity} $\tilde{P}(x'|x, a, \mu)$ using one of two structures and then normalize it:
\begin{itemize}
    \item \noindent\textbf{Additive Intensity:}
The mean field adds bias to the base transitions. Letting $\tilde P^0(x'|x,a,\mu) = c_p P^0(x'|x,a) + \rho_p \sum_{y \in \states} \Gamma(x, a, x', y) \mu(y)$, define
\(
    \tilde{P}(x'|x, a, \mu) = \operatorname{ReLU}\big( \tilde P^0(x'|x,a,\mu) \big),
\)
where $\operatorname{ReLU}(z) = \max(0, z)$.

\item \noindent\textbf{Multiplicative Intensity:}
The mean field acts as a gate, scaling the base transitions. Letting $\tilde \Gamma(x,a,\mu,x') = c_p + \rho_p \sum_{y \in \states} \Gamma(x, a, x', y) \mu(y)$, define
\(
    \tilde{P}(x'|x, a, \mu) = P^0(x'|x,a) \times \operatorname{ReLU}\big( \tilde \Gamma(x,a,\mu,x')\big).
\)
\end{itemize}
In both cases, the final transition probability is obtained by L1-normalization:
\(
    p(x'|x, a, \mu) = \frac{\tilde{P}(x'|x, a, \mu)}{\sum_{z \in \states} \tilde{P}(z|x, a, \mu) + \epsilon},
\)
where $\epsilon$ is a small positive constant to prevent division by zero. So there are two models of transition probability (additive or multiplicative) and in both cases, they are parameterized by $(P^0, c_p, \rho_p, \epsilon) \in [0,1]^{|\states| \times |\actions| \times |\states|} \times \mathbb{R}_+ \times \mathbb{R}_+ \times \mathbb{R}_+$.

\noindent\textbf{Randomized Reward Coupling.}
Similarly, let $R^0 \in \mathbb{R}^{|\states| \times |\actions|}$ be the base reward. We sample an interaction matrix $M \in \mathbb{R}^{|\states| \times |\states|}$ and scalars $c_r, \rho_r$. We consider two types of structures:
\begin{itemize}
    \item \textbf{Additive:}
    \(
        r(x, a, \mu) = c_r R^0(x, a) + \rho_r \sum_{y} M_{xy} \mu(y).
    \)

    \item \textbf{Multiplicative:}
    \(
        r(x, a, \mu) = R^0(x, a) \times \big( c_r + \rho_r \sum_{y} M_{xy} \mu(y) \big).
    \)
\end{itemize}
So there are two models of rewards (additive or multiplicative) and in both cases they are parameterized by $(R^0, M, c_r, \rho_r) \in \mathbb{R}^{|\states| \times |\actions|} \times \mathbb{R}^{|\states| \times |\states|} \times \mathbb{R} \times \mathbb{R}$.

Intuitively, the multiplicative forms is particularly useful for modeling shocks where the mean field amplifies or dampens the intrinsic transition probabilities or rewards of a state.

\section{Experiments}

\subsection{Experimental setup}
All experiments were implemented  using JAX ~\citep{jax2018github}, which provides significant computational speedups through just-in-time (JIT) compilation and efficient automatic differentiation. \textbf{We observed speedups of approximately $\boldsymbol{2000\times}$ compared to classical Python} (all experiments were conducted on a MacBook Air with Apple M2 chip (2022)). While our results use the JAX implementation for efficiency, we also provide a pure Python implementation for reference and reproducibility.
\begin{mdframed}[
    linewidth=0pt,        %
    roundcorner=5pt,        %
    backgroundcolor=green!5, 
    innertopmargin=2pt, 
    innerbottommargin=2pt
]

\noindent\textbf{Evaluation and Visualization.} We evaluate the algorithms using three types of visualizations: {\bf (1)} \textit{Exploitability curves} comparing all algorithms, where for each algorithm we select the best hyperparameter configuration based on the average final exploitability across $4$ seeds; {\bf (2)} \textit{Mean field and policy visualizations} for the best model; and {\bf (3)} \textit{Hyperparameter sweep results} shown in the appendix (see Appx.~\ref{app: sweep}), displaying the full grid search results for each algorithm and environment combination. See results in Figs.~\ref{fig_app: NoInteraction_Game-pi_mu} -- \ref{fig_app: Kinetic-pi_mu}. 
\end{mdframed}

\begin{figure}[h!]
    \centering
    \includegraphics[width=0.85\linewidth]{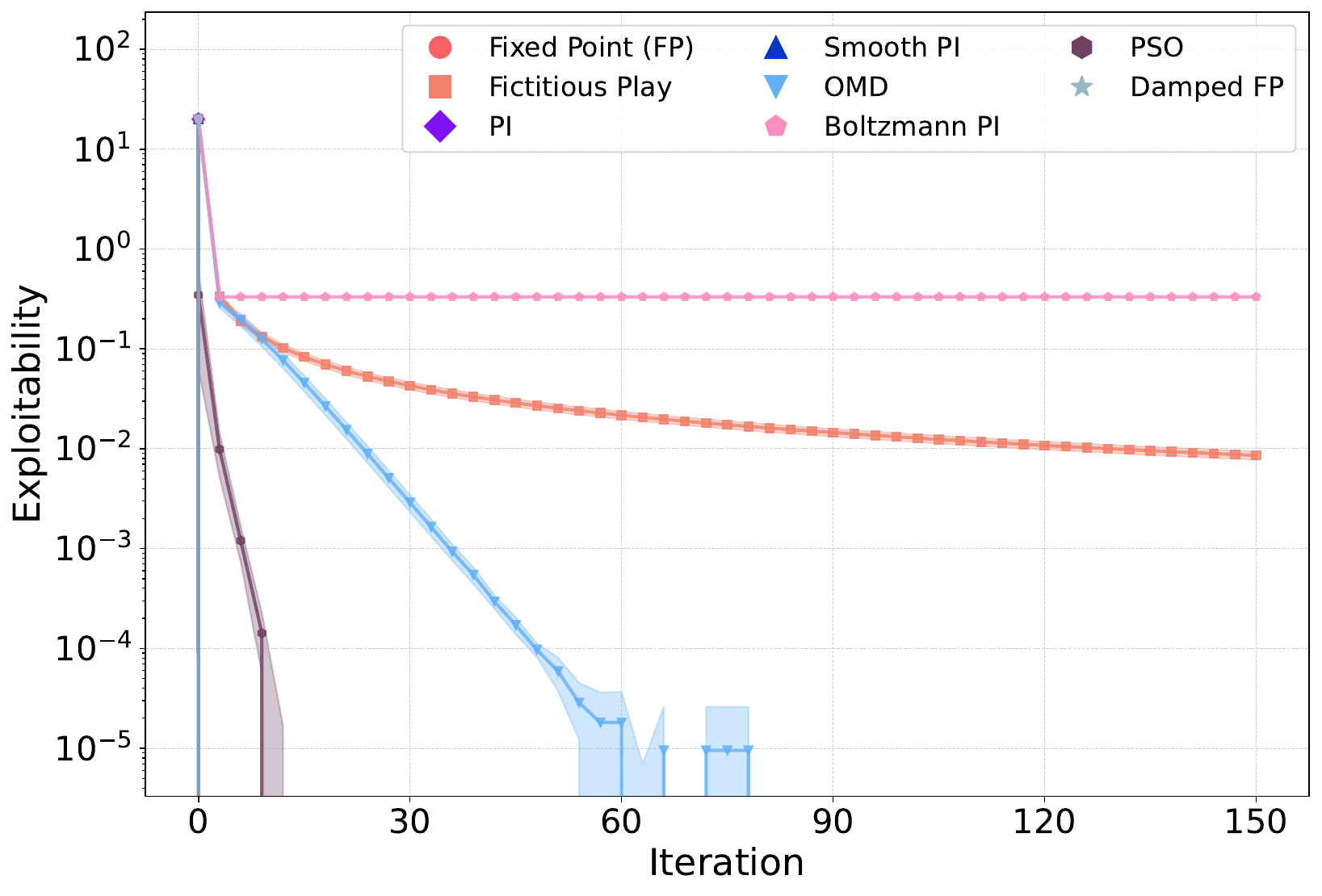}
    \centering
    \includegraphics[width=0.49\linewidth]{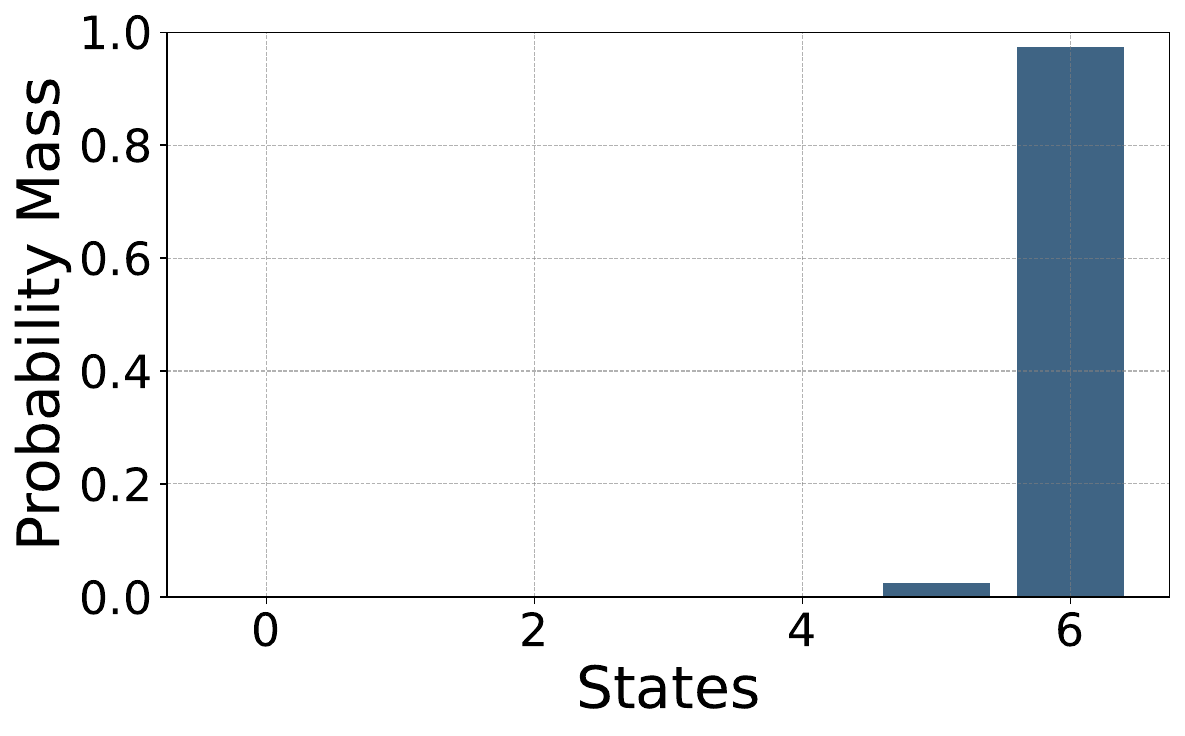}
    \includegraphics[width=0.49\linewidth]{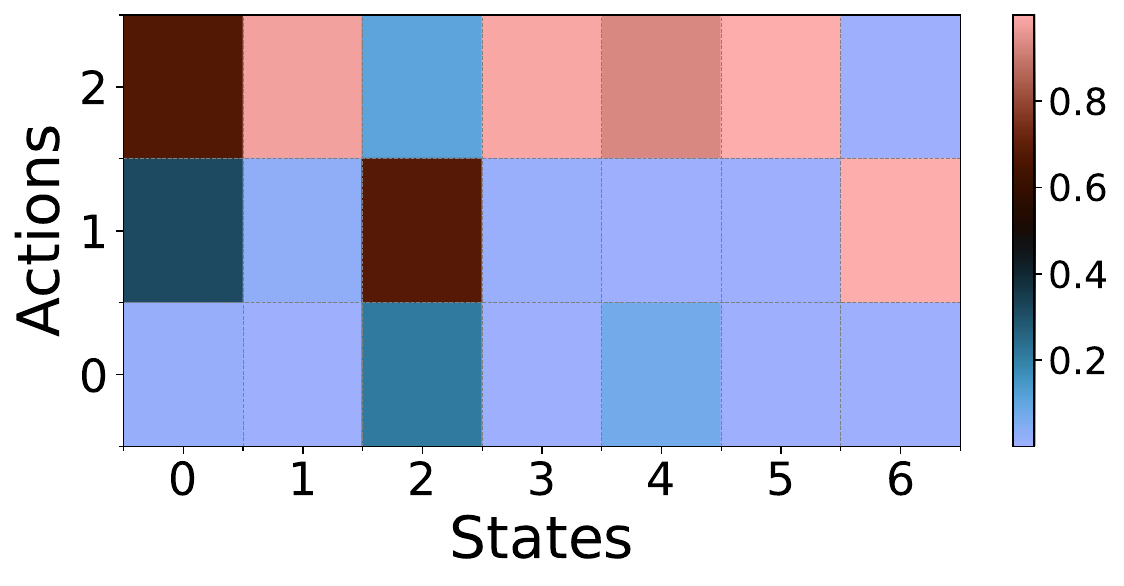}
    \caption{\textbf{NI-MFG}. \textbf{Move Forward.} (top) Exploitabilities (bot.) Equilibrium $(\mu^*, \pi^*)$ for PI.}
    \label{fig_app: NoInteraction_Game-pi_mu}
\end{figure}

\begin{figure}[h!]
    \centering
    \includegraphics[width=0.85\linewidth]{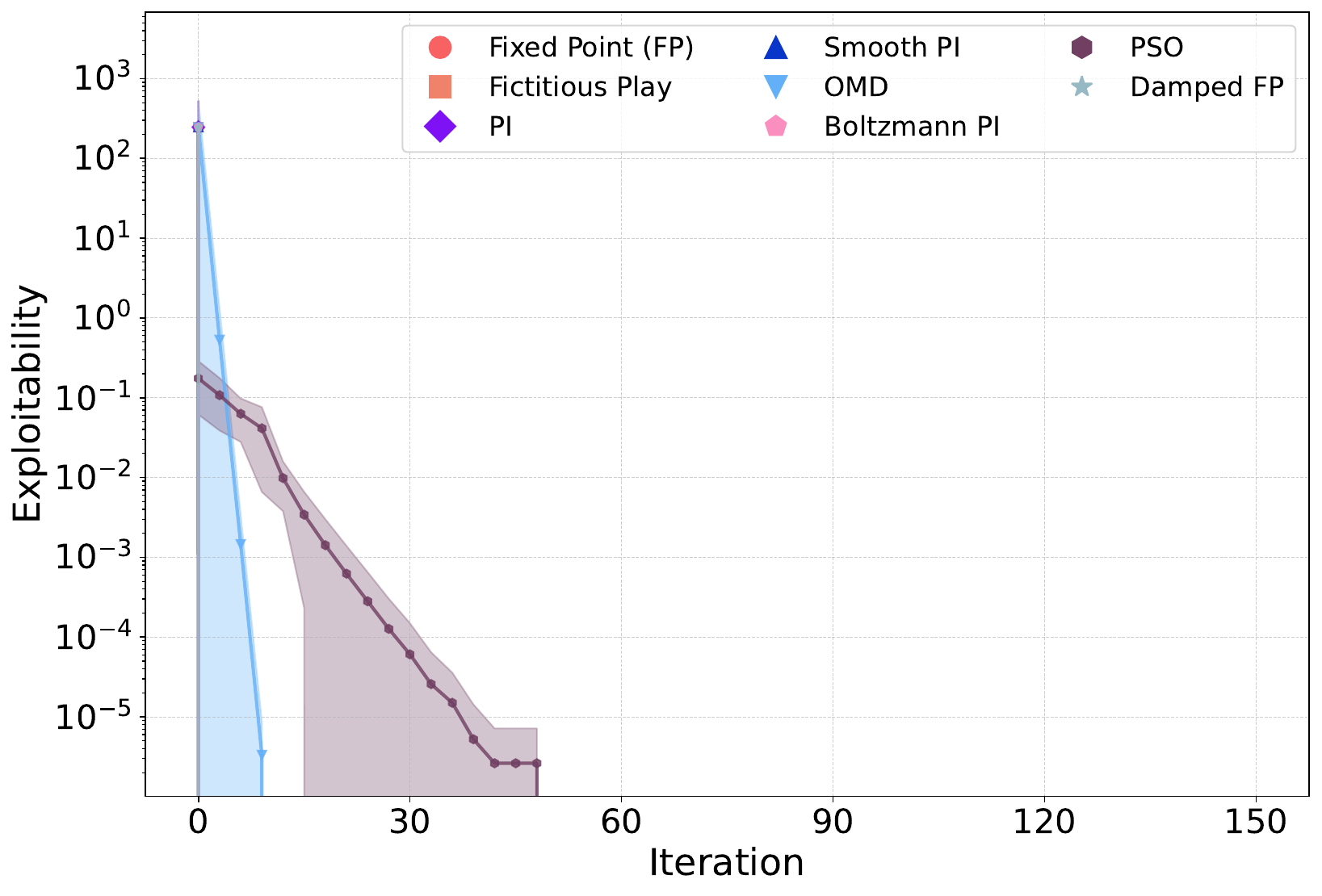}
    \centering
    \includegraphics[width=0.49\linewidth]{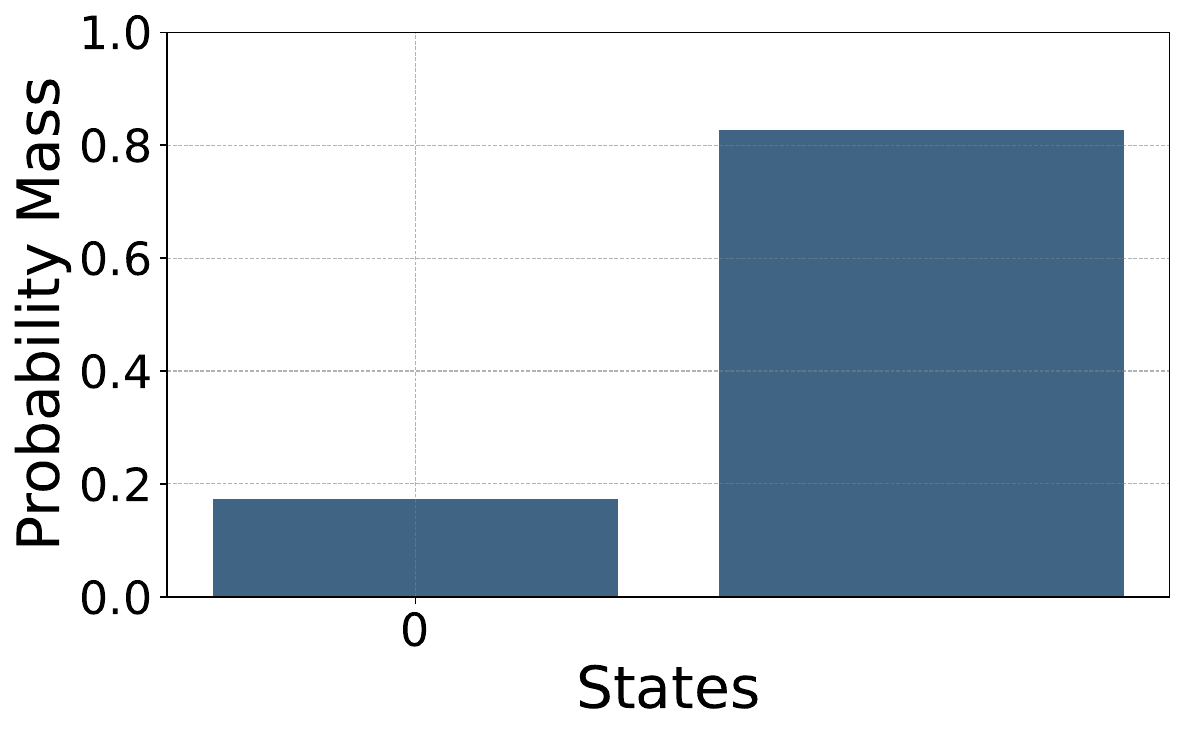}
    \includegraphics[width=0.49\linewidth]{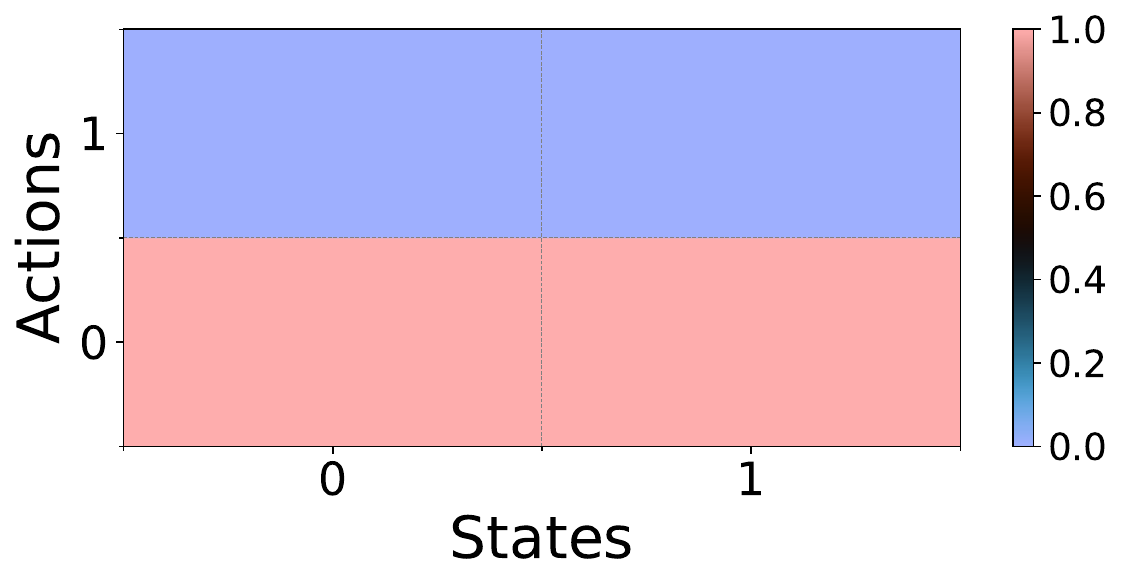}
    \caption{\textbf{C-MFG}. \textbf{Coordination Game.} Params. $C=80, \alpha=1$ (top) Exploitabilities (bot.) Equilibrium $(\mu^*, \pi^*)$ for PI.}
    \label{fig_app: Contraction-pi_mu}
\end{figure}

\begin{figure}[h!]
    \centering
    \includegraphics[width=0.85\linewidth]{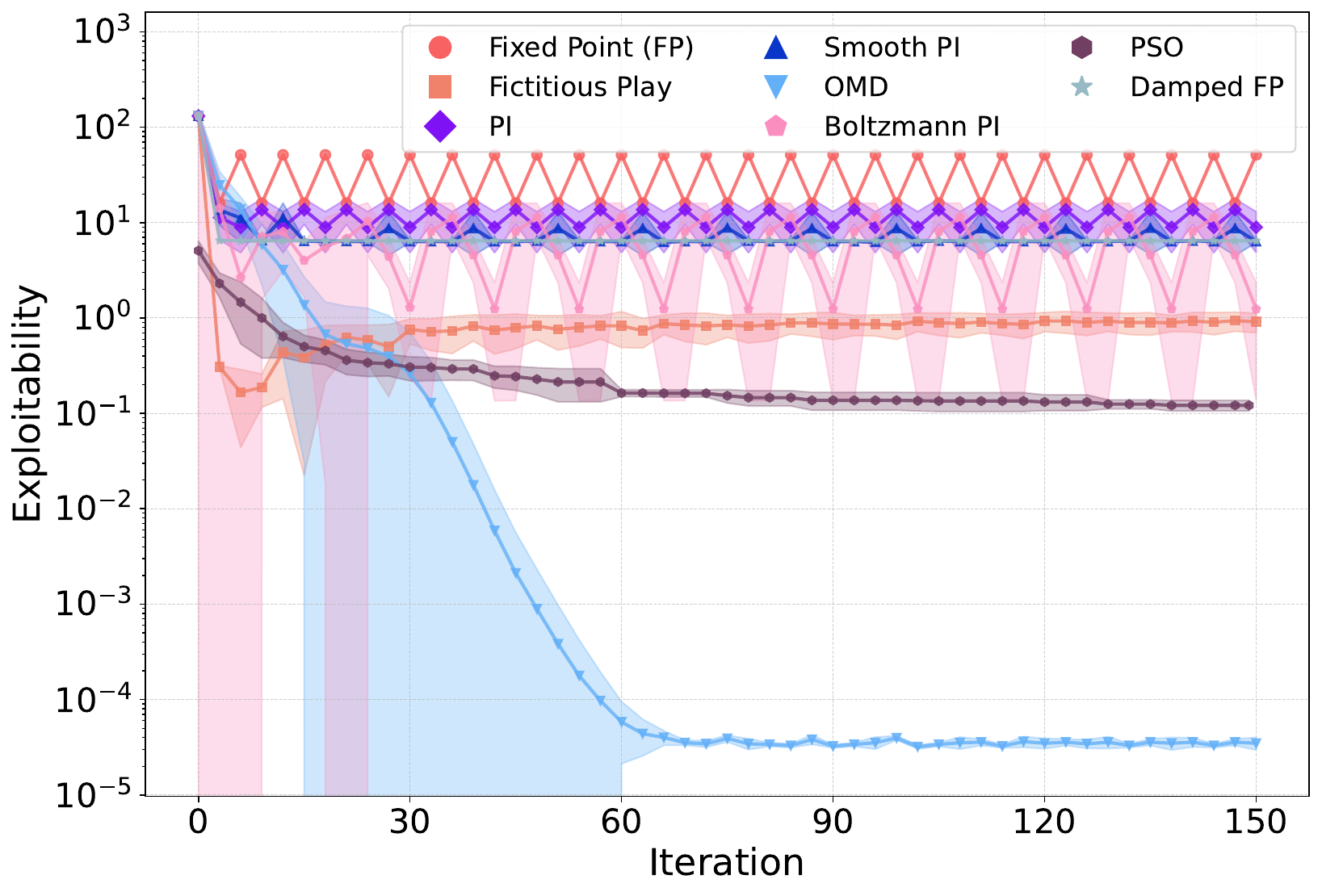}
    \centering
    \includegraphics[width=0.49\linewidth]{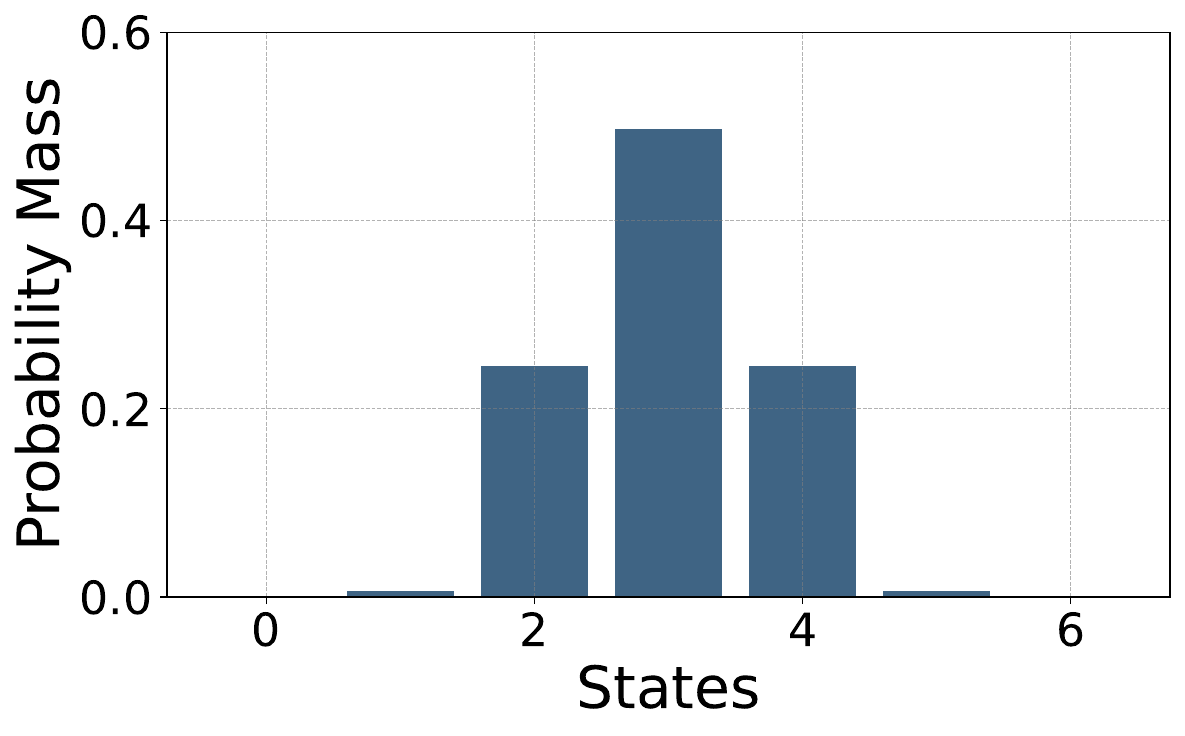}
    \includegraphics[width=0.49\linewidth]{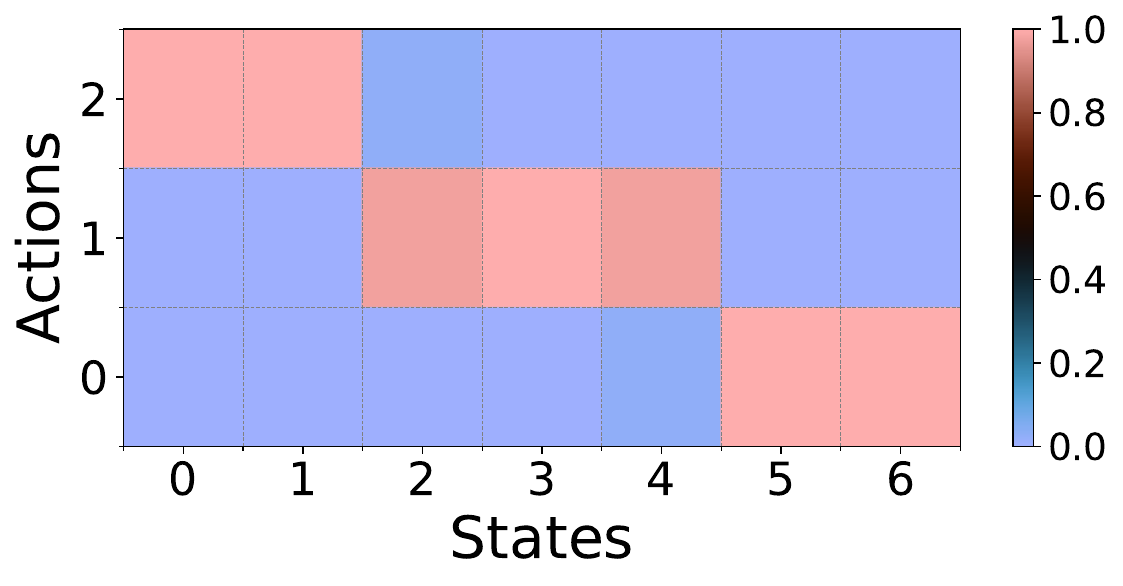}
    \caption{\textbf{LL-MFG}. \textbf{Beach Bar Problem.} Params. $\alpha=5, c_2=5, c_1=2$ (top) Exploitabilities (bot.) Equilibrium $(\mu^*, \pi^*)$ for OMD.}
    \label{fig_app: LL-pi_mu}
\end{figure}

\begin{figure}[h!]
    \centering
    \includegraphics[width=0.85\linewidth]{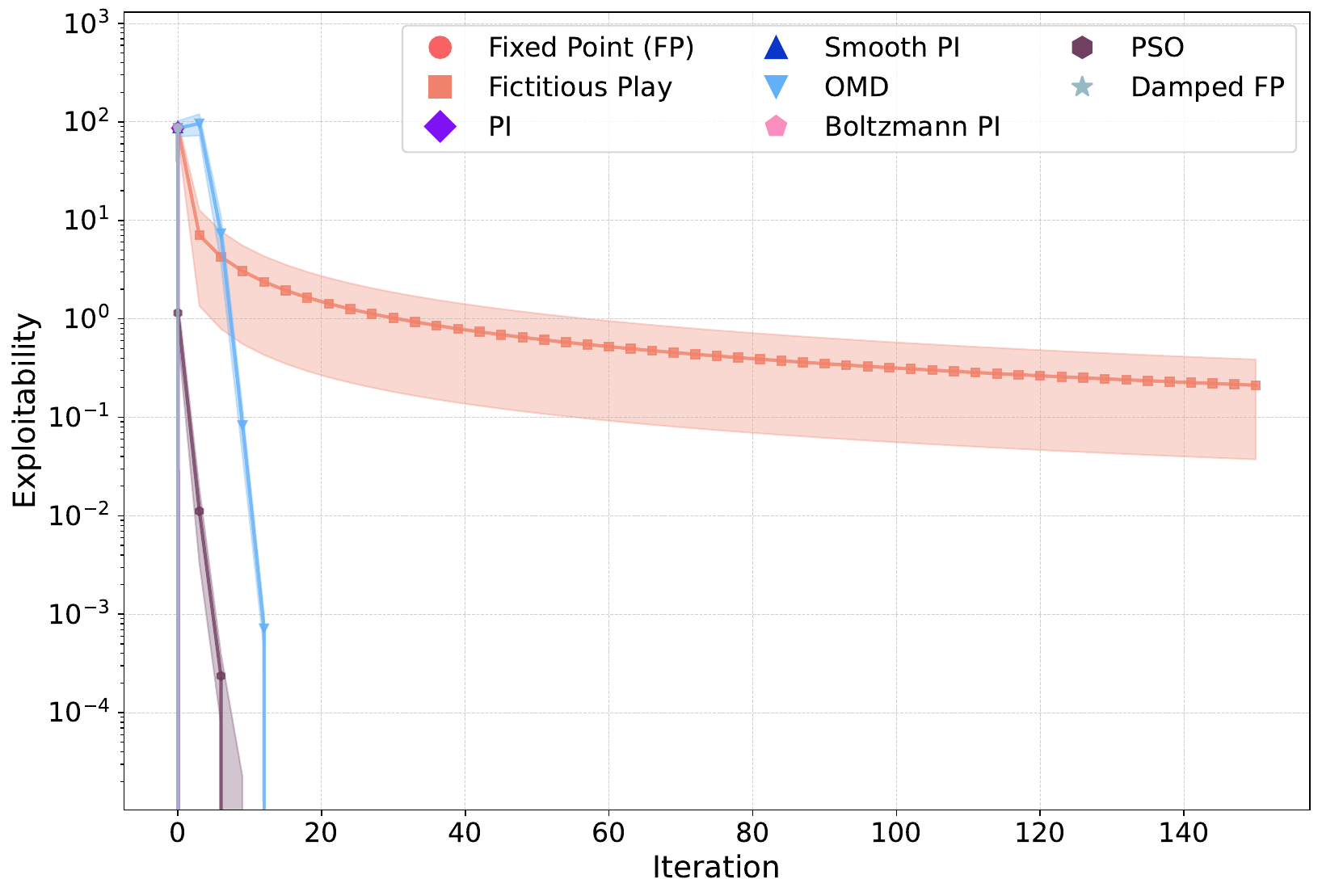}
    \centering
    \includegraphics[width=0.49\linewidth]{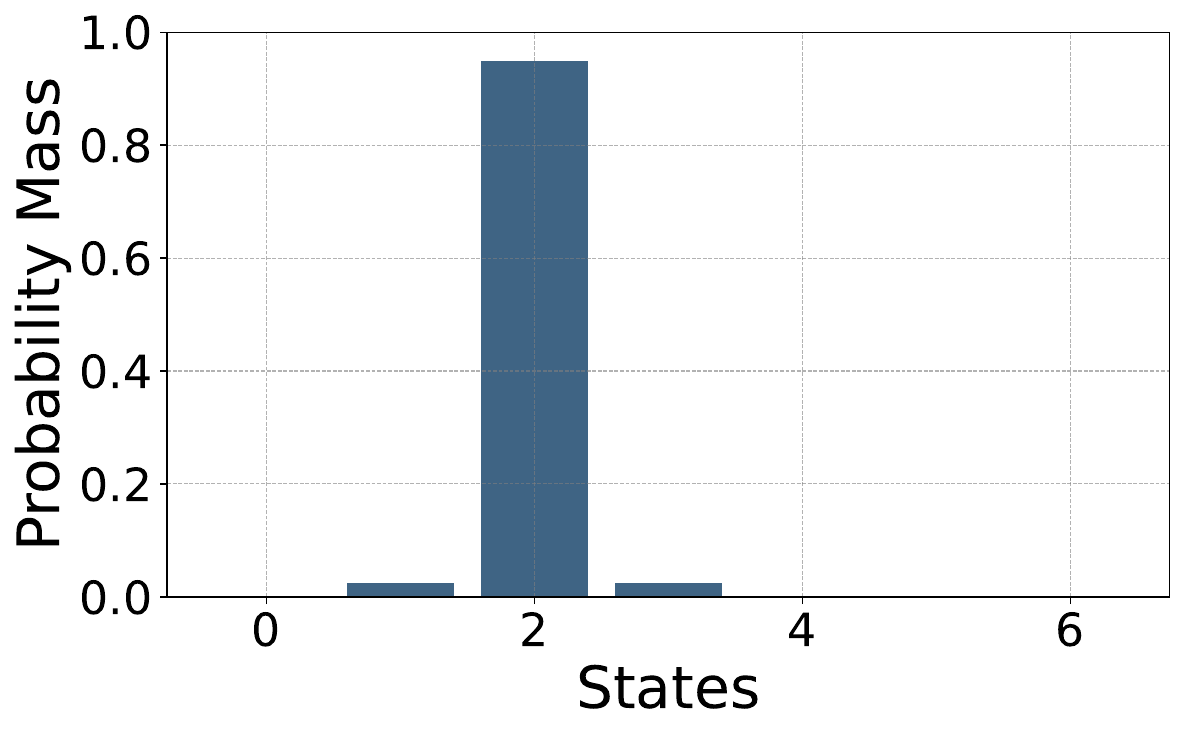}
    \includegraphics[width=0.49\linewidth]{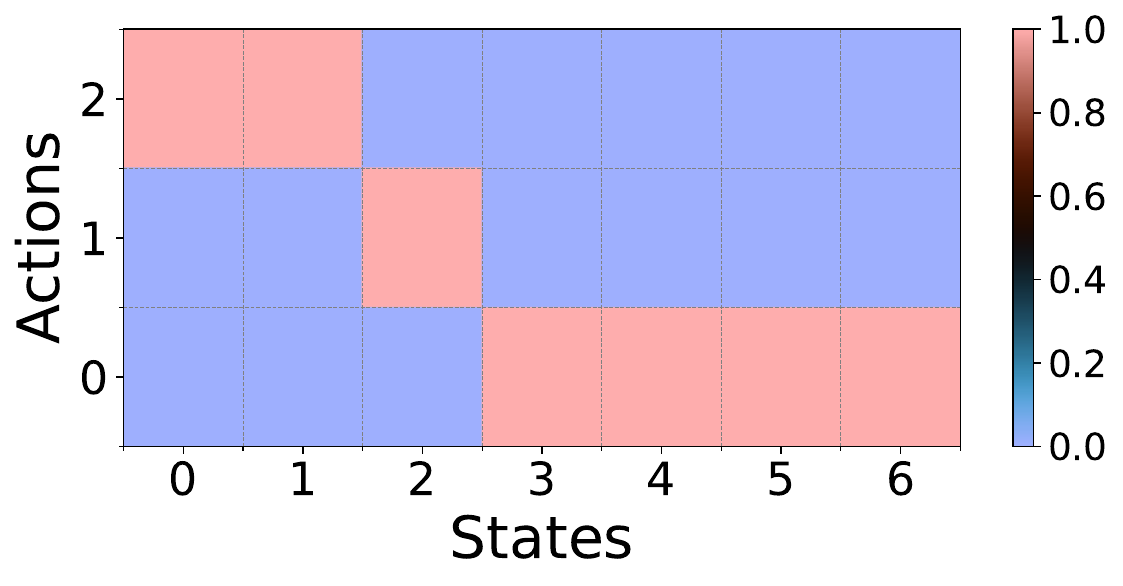}
    \caption{\textbf{Multiple Equilibria}. \textbf{Two Beach Bars Problem} Params. $\alpha=60, c_2=15, c_1=0.5$. (top) Exploitabilities (bot.) Equilibrium $(\mu^*, \pi^*)$ for FP.}
    \label{fig_app: LL-pi_mu}
\end{figure}

\begin{figure}[h!]
    \centering
    \includegraphics[width=0.85\linewidth]{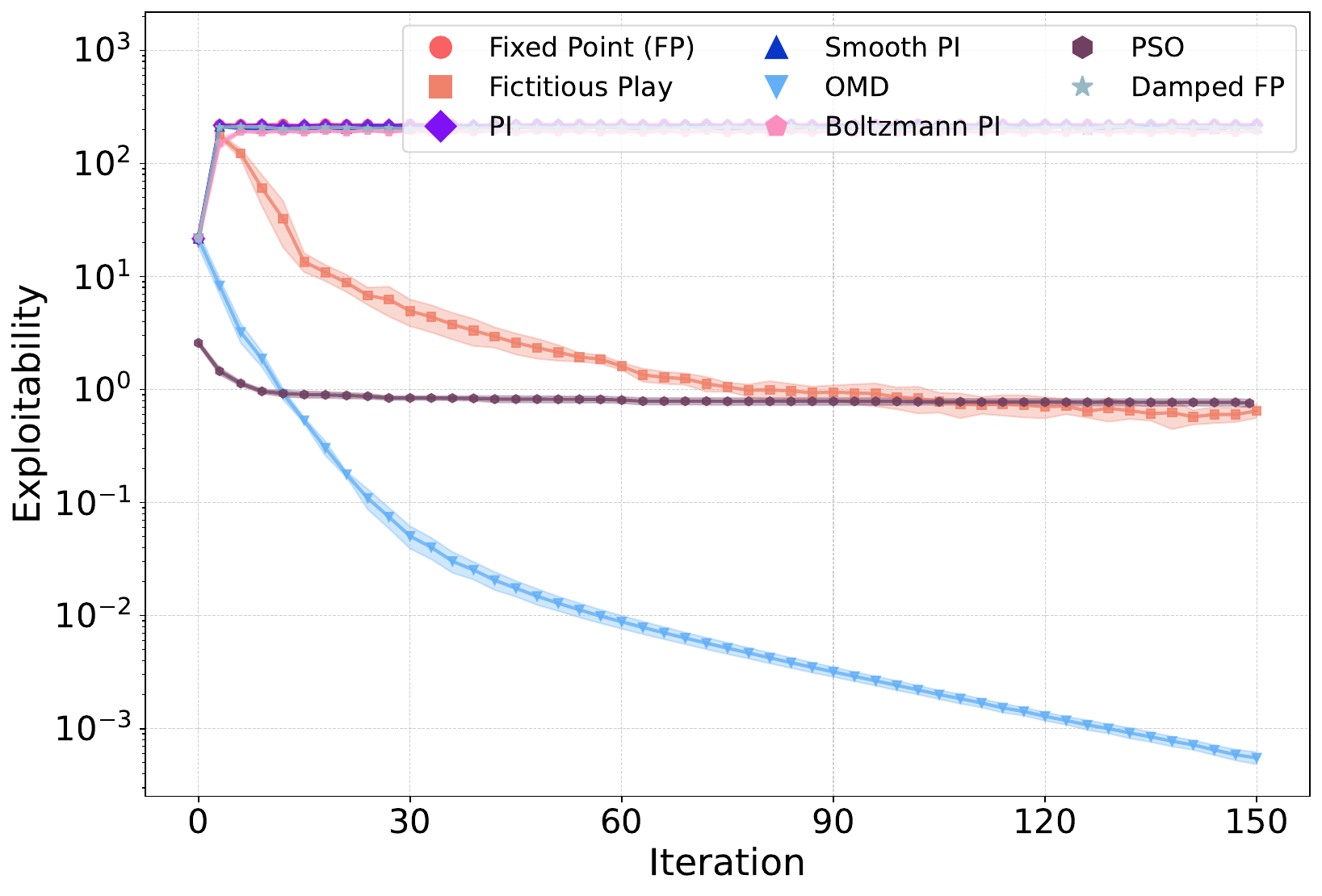}
    \centering
    \includegraphics[width=0.54\linewidth]{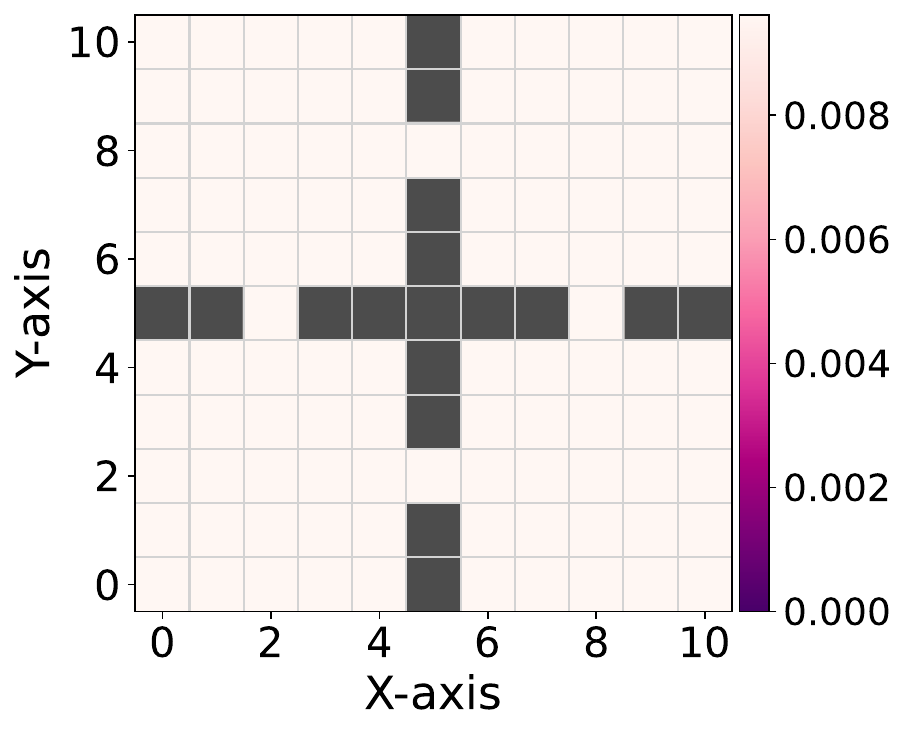}
    \includegraphics[width=0.45\linewidth]{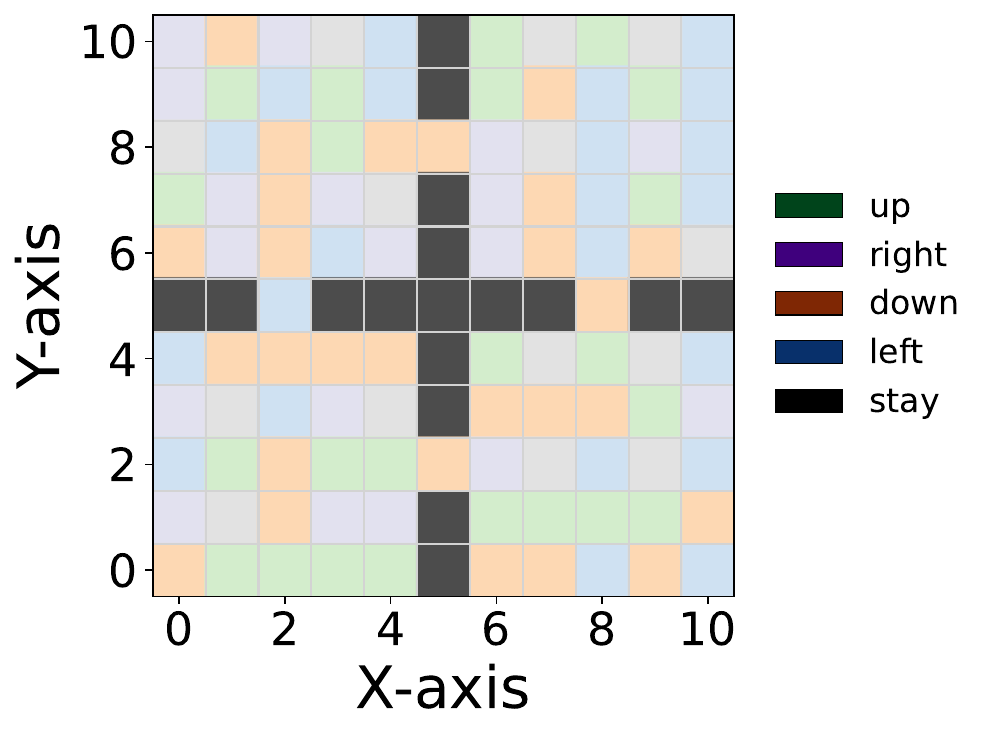}
    \caption{\textbf{P-MFG} \textbf{4Rooms Exploration.} Params. $\alpha=1$. (top) Exploitabilities (bot.) Equilibrium $(\mu^*, \pi^*)$ for OMD.}
    \label{fig_app: P-pi_mu}
\end{figure}

\begin{figure}[h!]
    \centering
    \includegraphics[width=0.85\linewidth]{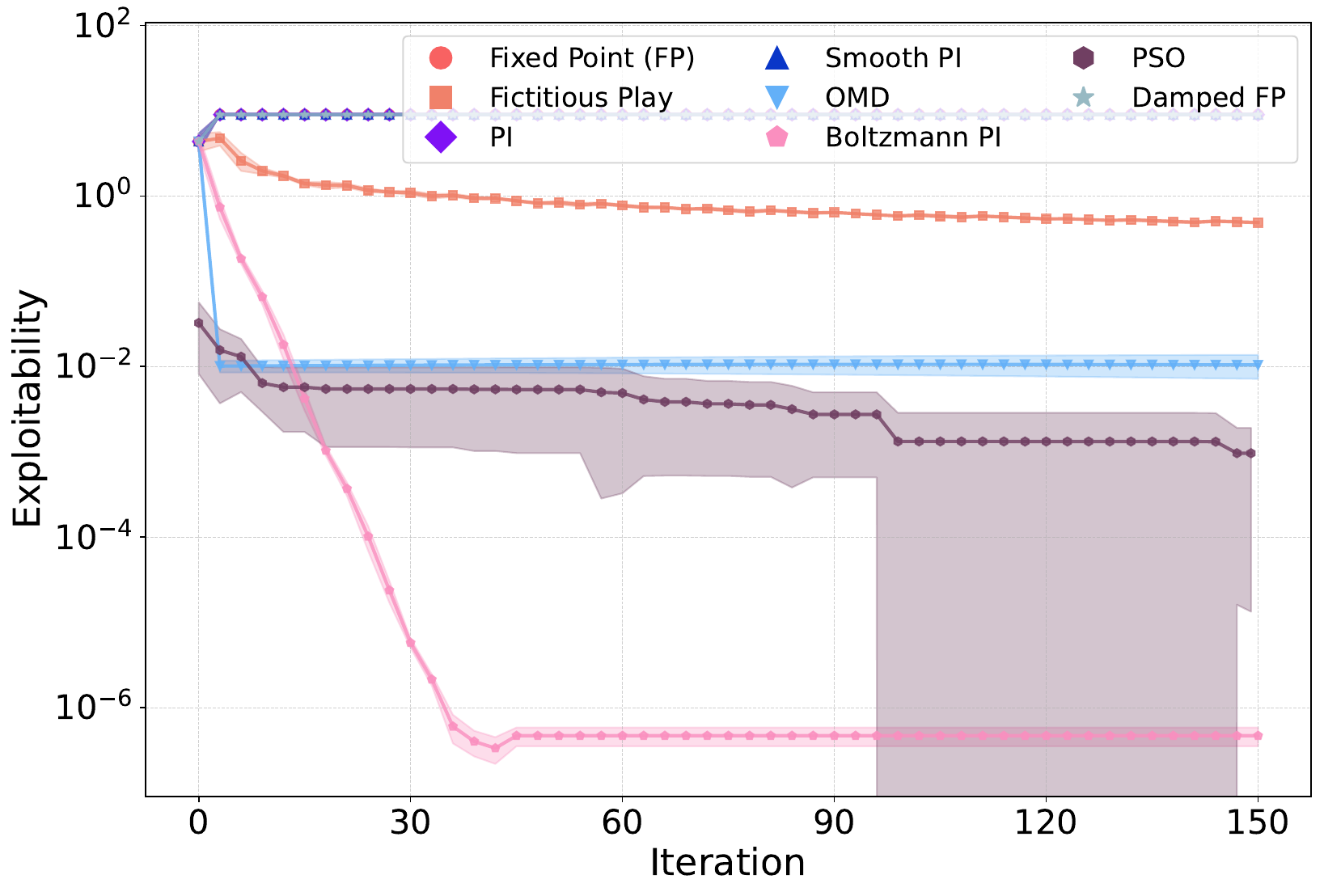}
    \centering
    \includegraphics[width=0.49\linewidth]{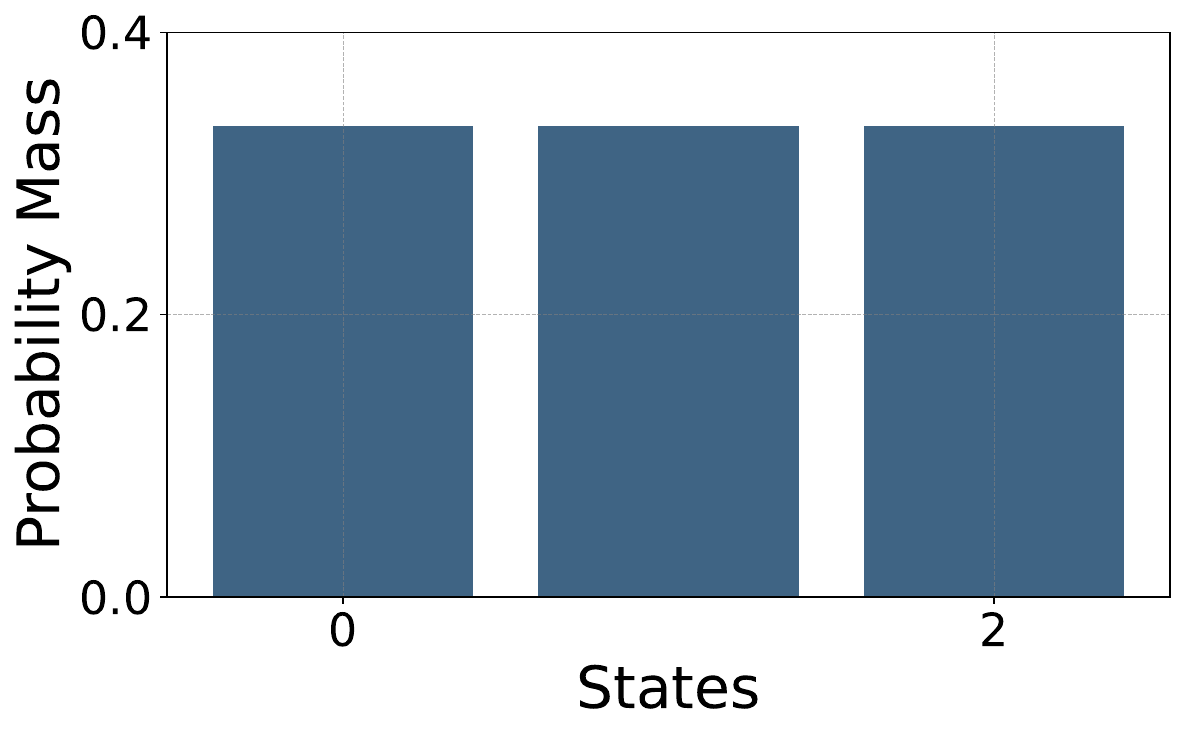}
    \includegraphics[width=0.49\linewidth]{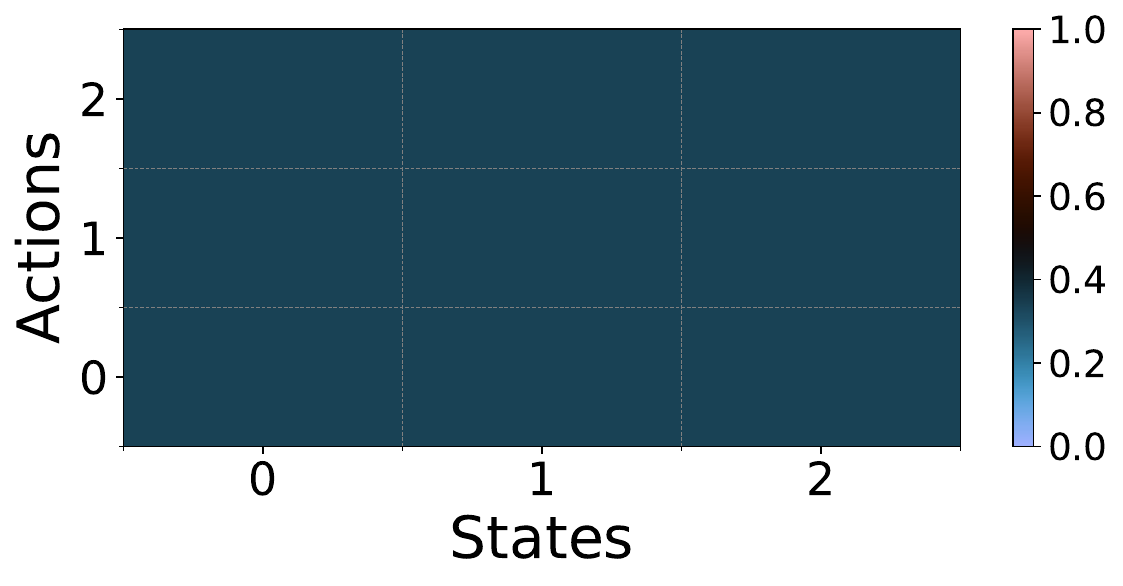}
    \caption{\textbf{Cycling Game}.  \textbf{Rock Paper Scissor.} (top) Exploitabilities (bot.) Equilibrium $(\mu^*, \pi^*)$ for Boltzamnn PI.}
    \label{fig_app: RSP-pi_mu}
\end{figure}

\begin{figure}[h!]
    \centering
    \includegraphics[width=0.85\linewidth]{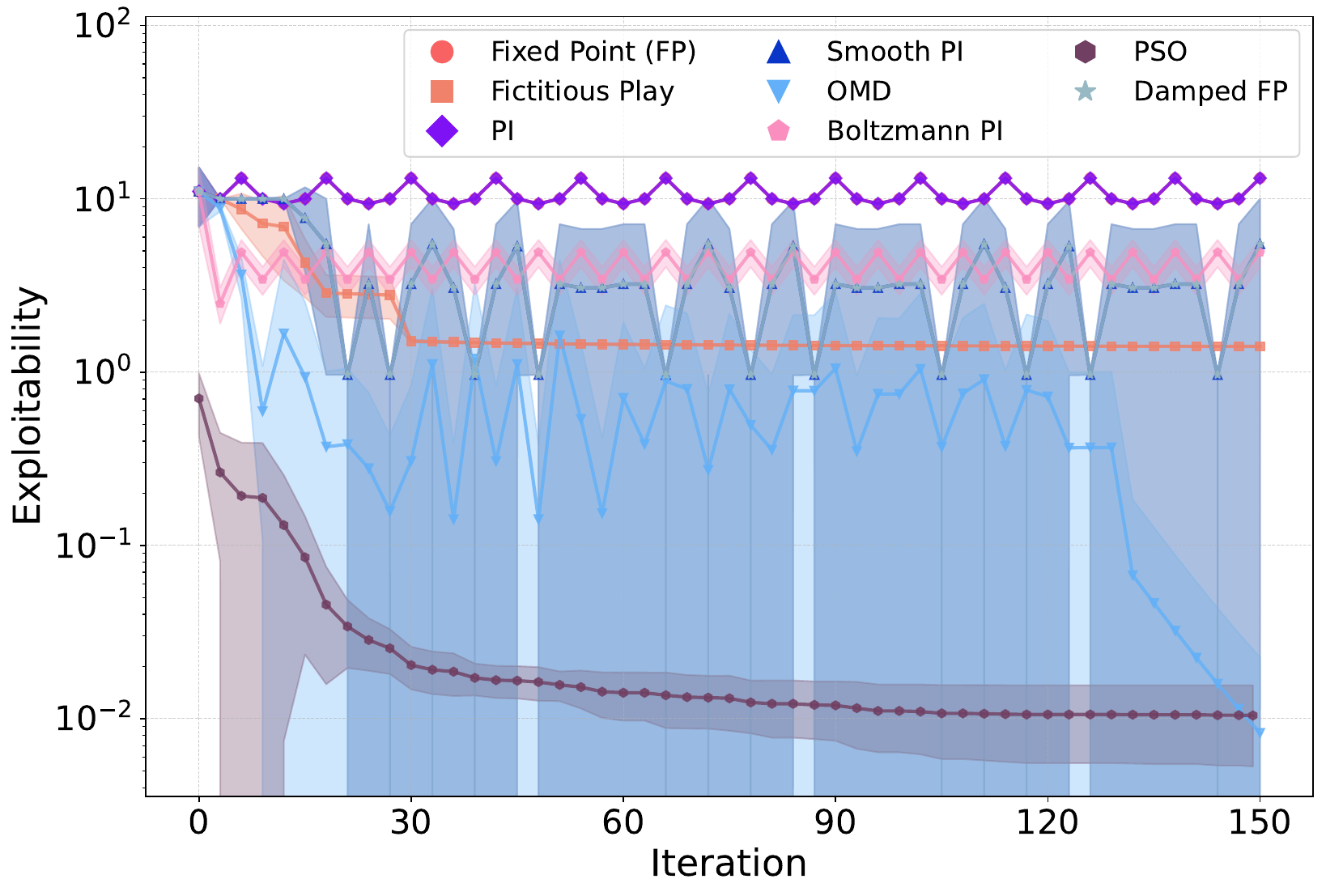}
    \centering
    \includegraphics[width=0.49\linewidth]{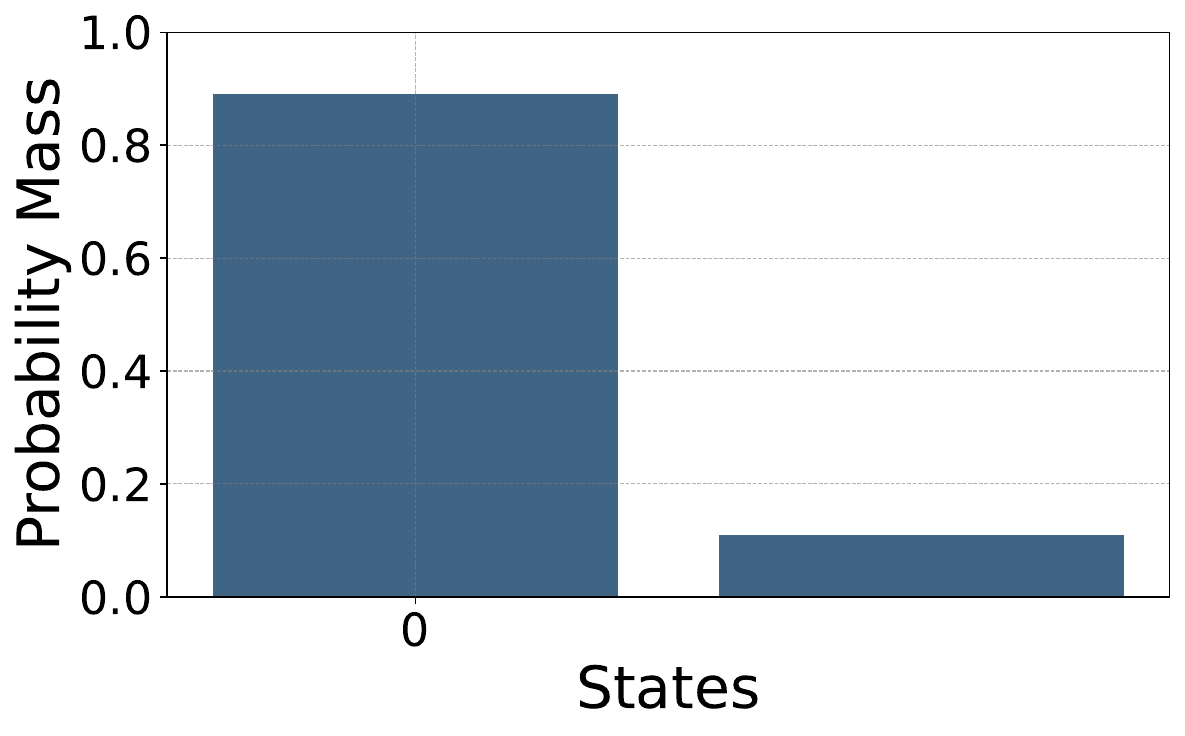}
    \includegraphics[width=0.49\linewidth]{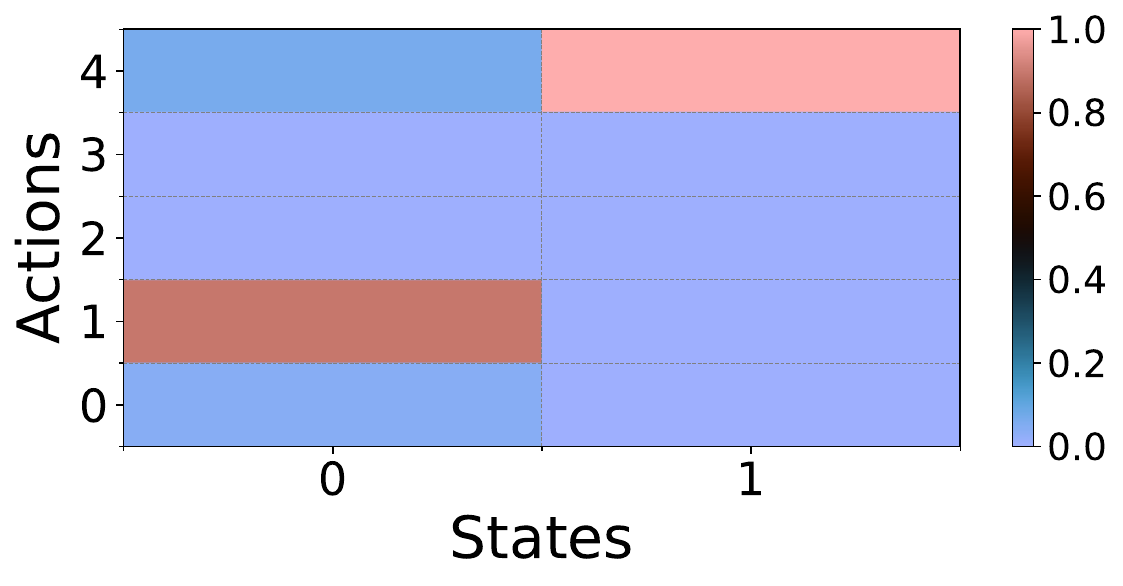}
    \caption{\textbf{DC-MFG}. \textbf{SISEpidemic}. Params. $\beta=0.5, \nu=0.1, C=5.0$ (top) Exploitabilities (bot.) Equilibrium $(\mu^*, \pi^*)$ for PSO.}
    \label{fig_app: SIS-pi_mu}
\end{figure}
\begin{figure}[h!]
    \centering
    \includegraphics[width=0.85\linewidth]{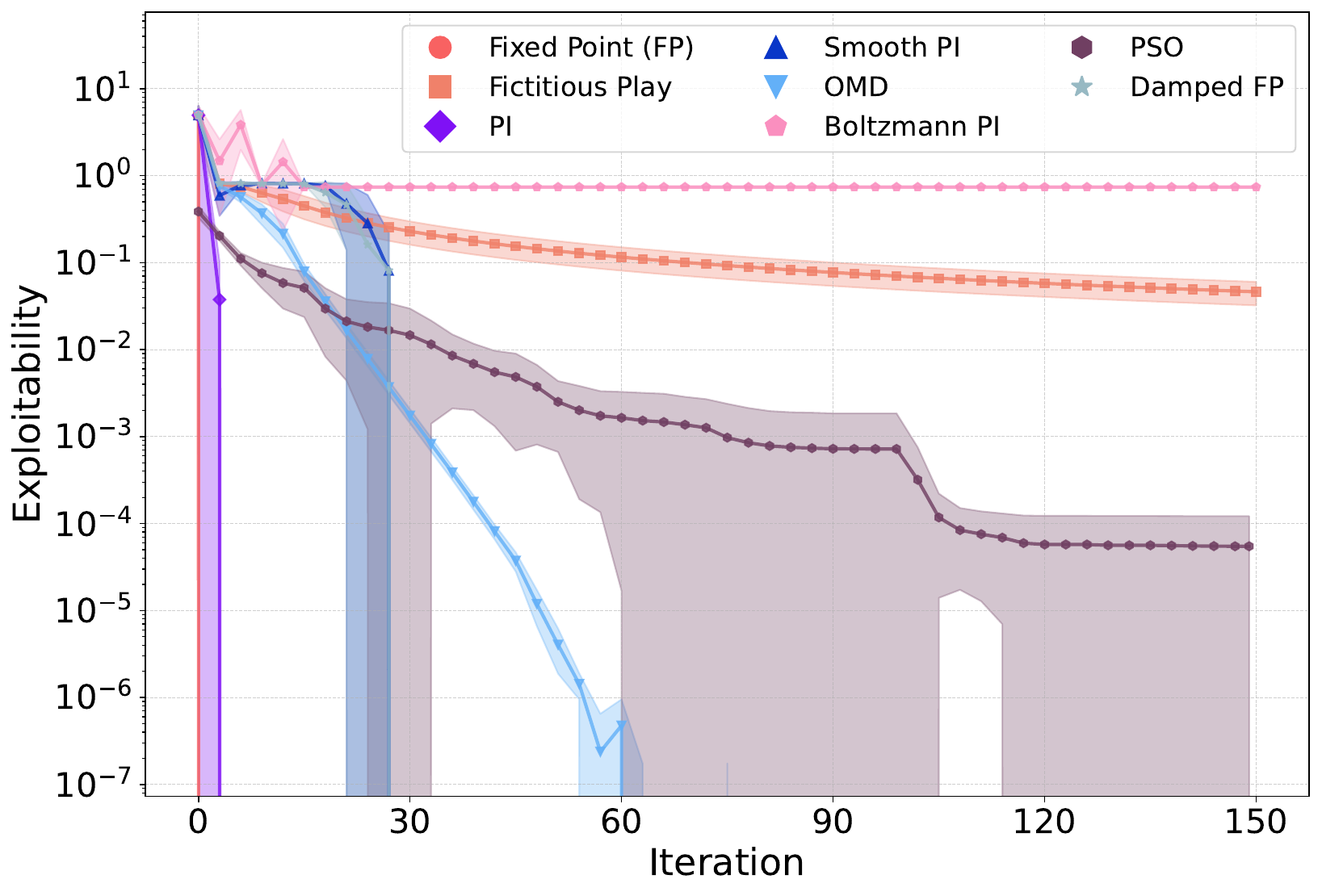}
    \centering
    \includegraphics[width=0.49\linewidth]{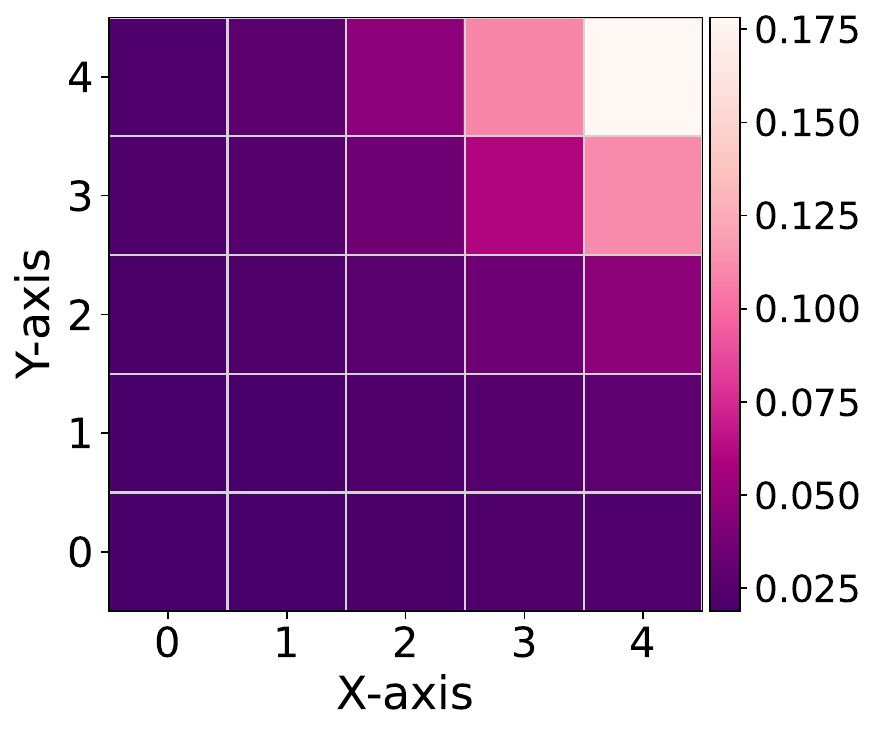}
    \includegraphics[width=0.49\linewidth]{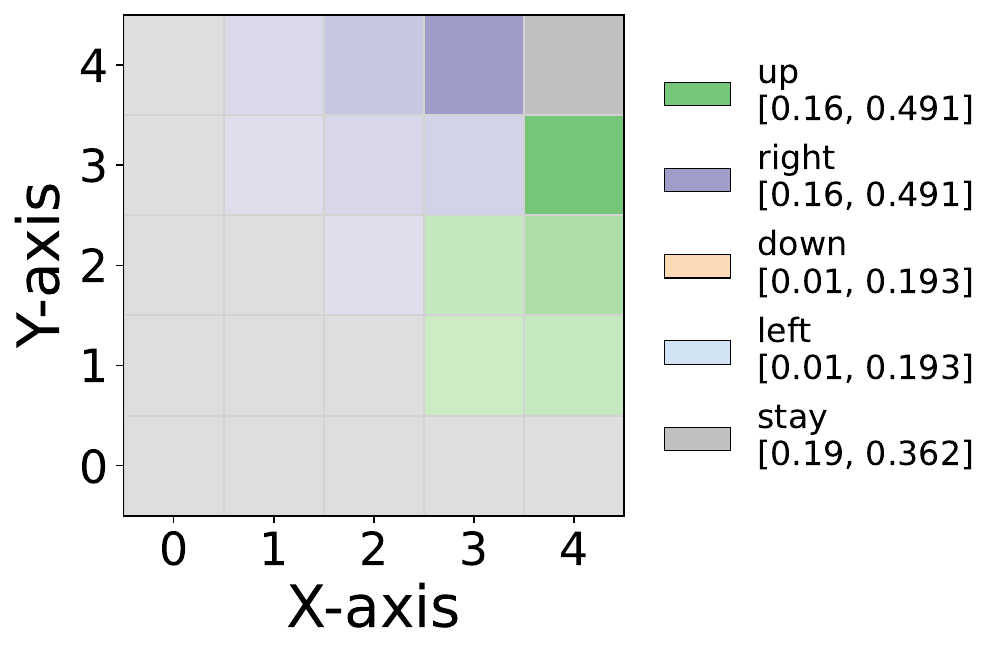}
    \caption{\textbf{DC-MFG}. \textbf{Kinetic Congestion}. Params. $\tau=0.18, c_{move}=0.1, x_{target}=(4,4)$ (top) Exploitabilities (bot.) Equilibrium $(\mu^*, \pi^*)$ for Boltzmann PI.}
    \label{fig_app: Kinetic-pi_mu}
\end{figure}

\noindent\textbf{MF-Garnet.}
Experimentation for the MF-Garnet game was conducted across 10 different MFG instances, where each instance has a unique Garnet seed (0--9) to ensure reproducibility. All experiments are characterized by a common configuration: number of states, number of actions, branching factor, dynamics structures (additive or multiplicative), and reward structures (additive or multiplicative).  For each algorithm-instance combination, we extract the final exploitability value. Results are aggregated by computing the mean and standard deviation of final exploitability across all 10 instances for each algorithm for each type of Garnet game. The results are presented in Table~\ref{tab:garnet-results}. 

\begin{table*}[t]
\centering
\footnotesize
\begin{tabular}{lllll}
\toprule
\textbf{Algorithm} & 5x5x5 (A/M) & 5x5x5 (M/M) & 25x10x10 (A/A) & 25x10x10 (M/A) \\
\midrule
Fixed Point (FP) & 1.6537 $\pm$ \textbf{\color{salmon!75!black}3.2993} & 0.0731 $\pm$ 0.2193 & \textbf{\color{pastelblue!88!black}9.50e-04} $\pm$ 0.0017 & \textbf{\color{pastelblue!85!black}2.13e-04} $\pm$ 6.35e-04 \\
Damped FP & 1.4504 $\pm$ 2.8451 & \textbf{\color{pastelblue!88!black}0.0396} $\pm$ 0.1189 & \textbf{\color{pastelblue}8.51e-04} $\pm$ 0.0017 & \textbf{\color{pastelblue}3.43e-05} $\pm$ 9.83e-05 \\
Ficitious Play & \textbf{\color{pastelblue!88!black} 0.6235} $\pm$ 1.6547 & \textbf{\color{pastelblue}0.0118} $\pm$ 0.0236 & 0.0026 $\pm$ 0.0036 & 3.32e-04 $\pm$ 7.40e-04 \\

Boltzmann PI & \textbf{\color{pastelblue!72!black}1.0570} $\pm$ 1.8618 & 0.7435 $\pm$ \textbf{\color{salmon!88!black}0.8503} & 0.9508 $\pm$ \textbf{\color{salmon!75!black}0.2879} & 0.9322 $\pm$ \textbf{\color{salmon!75!black}0.3213} \\
Smooth PI & 2.3228 $\pm$ \textbf{\color{salmon}4.8665} & 0.3223 $\pm$ \textbf{\color{salmon}0.9670} & \textbf{\color{pastelblue!75!black}9.80e-04} $\pm$ 0.0026 & \textbf{\color{pastelblue!75!black}2.91e-04} $\pm$ 8.68e-04 \\
PI & 1.8707 $\pm$ \textbf{\color{salmon!88!black}3.5905} & \textbf{\color{pastelblue!75!black}0.0412} $\pm$ 0.1237 & 0.0030 $\pm$ 0.0046 & 0.0027 $\pm$ 0.0080 \\
OMD & 2.3106 $\pm$ 3.0778 & 1.0101 $\pm$ \textbf{\color{salmon!75!black}0.4339} & 1.4371 $\pm$ \textbf{\color{salmon!88!black}0.3936} & 1.4728 $\pm$ \textbf{\color{salmon!88!black}0.4549} \\
PSO & \textbf{\color{pastelblue}0.2250} $\pm$ 0.2115 & 0.1844 $\pm$ 0.2449 & 3.8633 $\pm$ \textbf{\color{salmon}1.9042} &  4.0158 $\pm$ \textbf{\color{salmon}2.1616} \\
\bottomrule
\end{tabular}
\caption{\textbf{MF-Garnet} Results. Each MF-Garnet has $5$ parameters $|\states|\times|\actions|\times$Branching Factor (Dynamics/Reward Structure). We highlighted the three best \textbf{\color{pastelblue} exploitability}'s values and the worst three \textbf{\color{salmon} standard deviations} per game type. The results highlight that different algorithms can perform significantly differently depending on the type of game structure and the dimensionality of the problem.}
\label{tab:garnet-results}
\end{table*}

\section{Discussion and Guidelines}

\noindent\textbf{Findings.} Fictitious Play and Online Mirror Descent (OMD) are widely used in the community, yet our results refine their ideal use cases. \textit{Surprisingly}, Fictitious Play demonstrates robustness by converging even in non-monotone settings where theoretical guarantees are absent. In contrast, OMD typically yields the lowest exploitability, particularly in cyclic games, but requires delicate hyperparameter tuning due to high sensitivity. Finally, the proposed MF-PSO achieve superior performance with respect to Fictitious Play, albeit with an higher computational cost that scales linearly with the number of particles. %

\noindent\textbf{General Guidelines} of machine learning are essential to follow, such as ensuring statistical significance across multiple seeds, adhering to fair hyperparameter tuning budgets, and conducting rigorous ablation studies.

\noindent\textbf{MFG-Specific Guidelines:} The following points are particularly relevant for the MFG learning algorithms:
\begin{enumerate}
    \item \textbf{Fixed Point Sanity Check:} Always compare against the simple Fixed Point (FP) method. If FP solves the MFG in just a few iterations, applying complex Deep RL methods introduces unnecessary complexity. Benchmark environments must be sufficiently challenging such that simple FP fails.
    \item \textbf{Solvers:} We recommend always comparing new algorithms to efficient baselines, such as Fictitious Play, OMD and MF-PSO, which seem to perform the best in a wide range of MFG classes.
    \item \textbf{Regime Testing:} Do not rely on a single game type. Report performance on examples drawn from the various classes of MFGs identified above to demonstrate robustness across different regimes and \textbf{MF-Garnet}.
    \item \textbf{Environment Parameters:} Even in one single environment, the parameters' values can have a huge influence on how difficult it is to compute an equilibrium. So it is important to test different versions of the environment by varying its parameters (see Appx.~\ref{app:env_sweep}). 
    \item \textbf{Implementation and open source:} Open-sourcing codes is crucial for reproducibility. A computationally efficient implementation is key to running exhaustive sweeps without slowing down the workflow. Our JAX implementation is provided for these reasons. 
\end{enumerate}

\section{Conclusion and Future Work}
We summarize our main contributions, limitations, and directions for future work.

\textbf{Contributions.} In this work, we addressed the lack of standardized evaluation in Mean Field Games by proposing a unified benchmark suite implemented in JAX for \textbf{efficiency} and pure Python for \textbf{reproducibility}. We established a taxonomy of problem classes (from Contractive and Monotone games to complex Dynamics-Coupled environments) and \textbf{introduced MF-Garnets} for the procedural generation of random instances. By benchmarking a variety of solvers and our exploitability minimizer (\textbf{MF-PSO}), we demonstrated the necessity of testing across different regimes and synthesized our findings into a set of best practices for reproducible research.

\textbf{Limitations.} Future work should extend this foundation in three key directions. First, we plan to investigate large-scale environments to rigorously test the scalability of algorithms beyond moderate state spaces. Second, while this work focused on the stationary setting, extending the benchmark to Finite Horizon MFGs is essential for applications with time-varying dynamics. It would also be interesting to compare with other existing open-source codes such as OpenSpiel~\citep{lanctot2019openspiel}, which contains MFG algorithms and examples. Finally, future benchmarks should incorporate sample-based Deep Reinforcement Learning agents, moving beyond the model-based iterative solvers evaluated here to address the challenges of exploration and variance in learning-based solutions.

\textbf{Outlook.} This paper aims at improving the quality of experimental results related to learning methods for MFGs, making them clearer and more robust. Furthermore we introduced a framework, MF-Garnet, that enables future algorithms to be tested in a unified setting.

\noindent\textbf{Acknowledgements:} We thank Bartosz Bieńkowski and Kexin (Coco) Shao for their valuable feedback at the beginning of the project. J.S. is partially supported by NSF Award 1922658. We acknowledge the support of NYU Shanghai HPC for some of the computations. L.M. was partially supported by the NYU Shanghai Center for Data Science.

\bibliographystyle{plainnat}
\bibliography{mfg-num-bib2024}

\clearpage
\appendix
\onecolumn

\section{Algorithms Details}\label{app:algos}

\subsection{Pseudocodes}

\begin{algorithm}[H]
\small
\caption{Fictitious Play}
\label{algo:fp}
\begin{algorithmic}[1]
    \REQUIRE Initial logits' distribution $\nu\in\Delta_{\mathbb{R}^{|\mathcal{S}|\times|\actions|}}$; Number of iterations $K$; Mean field transition steps $N$.
    \STATE \textbf{Initialize:} Generate initial policy $\bar\pi_0$ using Boltzmann transformation and initialize $\bar\mu_0=\mathbf{M}^N(\bar\pi_0)$.
    \FOR{iteration $k = 1$ to $K$}
        \STATE Find the best response $\pi_k^*$ such that: $\pi_k^* = \arg\max_{\pi} J_{\gamma} (\pi;\bar\mu_{k-1})$
        \STATE Compute the stationary mean field distribution: $\mu^{\pi^*_k} = \mathbf{M}^N(\pi^*_k)$.
        \STATE Compute the average mean field: $\bar\mu_k = \frac{k}{k+1}\bar\mu_{k-1} + \frac{1}{k}\mu^{\pi^*_k}$.
        \STATE Compute $\bar\pi_k$ as the average of $(\pi^*_0, \dots,\pi^*_k)$.
\ENDFOR
\ENSURE $(\bar\pi_K,\bar\mu_K)$
\end{algorithmic}
\end{algorithm}

\begin{algorithm}[H]
\small
\caption{Damped Fictitious Play}
\label{algo:dampedfp}
\begin{algorithmic}[1]
    \REQUIRE Initial logits' distribution $\nu\in\Delta_{\mathbb{R}^{|\states|\times|\actions|}}$; Number of iterations $K$; Damping constant $\lambda \in (0,1]$; Number of mean field transition steps $N$.
    \STATE \textbf{Initialize:} Generate initial policy $\pi_0$ using Boltzmann transformation and initialize $\mu_0=\mathbf{M}^N(\pi_0)$.
    \FOR{iteration $k = 1$ to $K$}
        \STATE Find the best response $\pi_k^*$ such that: $\pi_k^* = \arg\max_{\pi} J_{\gamma} (\pi;\mu_{k-1})$
        \STATE Compute the stationary mean field distribution: $\mu^{\pi^*_k} = \mathbf{M}^N(\pi^*_k)$.
        \STATE Compute the damped mean field: $\mu_k = (1-\lambda)\mu_{k-1} + \lambda\mu^{\pi^*_k}$.
        \STATE Set $\pi_k = \pi_k^*$.
    \ENDFOR
    \ENSURE $(\pi_K,\mu_K)$
\end{algorithmic}
\end{algorithm}

\begin{algorithm}[H]
\small
\caption{Pure Fixed Point}
\label{algo:purefp}
\begin{algorithmic}[1]
    \REQUIRE Initial logits' distribution $\nu\in\Delta_{\mathbb{R}^{|\states|\times|\actions|}}$; Number of iterations $K$; Number of mean field transition steps $N$.
    \STATE \textbf{Initialize:} Generate initial policy $\pi_0$ using Boltzmann transformation and initialize $\mu_0=\mathbf{M}^N(\pi_0)$.
    \FOR{iteration $k = 1$ to $K$}
        \STATE Find the best response $\pi_k^*$ such that: $\pi_k^* = \arg\max_{\pi} J_{\gamma} (\pi;\mu_{k-1})$
        \STATE Compute the stationary mean field distribution: $\mu_k = \mathbf{M}^N(\pi^*_k)$.
        \STATE Set $\pi_k = \pi_k^*$.
    \ENDFOR
    \ENSURE $(\pi_K,\mu_K)$
\end{algorithmic}
\end{algorithm}

\begin{algorithm}[H]
\small
\caption{Policy Iteration}
\label{algo:pi}
\begin{algorithmic}[1]
    \REQUIRE Initial logits' distribution $\nu\in\Delta_{\mathbb{R}^{|\states|\times|\actions|}}$; Number of iterations $K$; Number of mean field transition steps $N$.
    \STATE \textbf{Initialize:} Generate initial policy $\pi_0$ using Boltzmann transformation and initialize $\mu_0=\mathbf{M}^N(\pi_0)$.
    \STATE Compute initial Q-values: $Q_0 = Q^{\pi_0, \mu_0}$.
    \FOR{iteration $k = 1$ to $K$}
        \STATE Update policy greedily: $\pi_k = \arg\max_{a \in \actions} Q_{k-1}(\cdot, a)$.
        \STATE Compute the stationary mean field distribution: $\mu_k = \mathbf{M}^N(\pi_k)$.
        \STATE Compute state-action value function: $Q_k = Q^{\pi_k, \mu_k}$.
    \ENDFOR
    \ENSURE $(\pi_K, \mu_K)$.
\end{algorithmic}
\end{algorithm}

\begin{algorithm}[H]
\small
\caption{Smooth Policy Iteration}
\label{algo:smoothpi}
\begin{algorithmic}[1]
    \REQUIRE Initial logits' distribution $\nu\in\Delta_{\mathbb{R}^{|\states|\times|\actions|}}$; Number of iterations $K$; Damping constant $\lambda \in (0,1]$ (or $\lambda_k = 1/(k+1)$); Number of mean field transition steps $N$.
    \STATE \textbf{Initialize:} Generate initial policy $\pi_0$ using Boltzmann transformation and initialize $\mu_0=\mathbf{M}^N(\pi_0)$.
    \STATE Compute initial Q-values: $Q_0 = Q^{\pi_0, \mu_0}$.
    \FOR{iteration $k = 1$ to $K$}
        \STATE Update policy greedily: $\pi_k = \arg\max_{a \in \actions} Q_{k-1}(\cdot, a)$.
        \STATE Compute the new stationary mean field distribution: $\mu^{\pi_k} = \mathbf{M}^N(\pi_k)$.
        \STATE Compute damped mean field: $\mu_k = \lambda_k \mu^{\pi_k} + (1-\lambda_k)\mu_{k-1}$ (where $\lambda_k = \lambda$ if constant, or $\lambda_k = 1/(k+1)$ if decreasing).
        \STATE Compute state-action value function: $Q_k = Q^{\pi_k, \mu_k}$.
    \ENDFOR
    \ENSURE $(\pi_K, \mu_K)$.
\end{algorithmic}
\end{algorithm}

\begin{algorithm}[H]
\small
\caption{Boltzmann Policy Iteration}
\label{algo:boltzmannpi}
\begin{algorithmic}[1]
    \REQUIRE Initial logits' distribution $\nu\in\Delta_{\mathbb{R}^{|\states|\times|\actions|}}$; Number of iterations $K$; Softmax temperature $\tau$; Number of mean field transition steps $N$.
    \STATE \textbf{Initialize:} Generate initial policy $\pi_0$ using Boltzmann transformation and initialize $\mu_0=\mathbf{M}^N(\pi_0)$.
    \STATE Compute initial Q-values: $Q_0 = Q^{\pi_0, \mu_0}$.
    \FOR{iteration $k = 1$ to $K$}
        \STATE Update policy using softmax: $\pi_k = \varphi_{\tau}(Q_{k-1})$.
        \STATE Compute the stationary mean field distribution: $\mu_k = \mathbf{M}^N(\pi_k)$.
        \STATE Compute state-action value function: $Q_k = Q^{\pi_k, \mu_k}$.
    \ENDFOR
    \ENSURE $(\pi_K, \mu_K)$.
\end{algorithmic}
\end{algorithm}

\begin{algorithm}[H]
\small
\caption{Online Mirror Descent (OMD)}
\label{algo:omd}
\begin{algorithmic}[1]
    \REQUIRE Initial logits' distribution $\nu\in\Delta_{\mathbb{R}^{|\states|\times|\actions|}}$; Number of iterations $K$; Learning rate $\alpha$; Softmax temperature $\tau$; Number of mean field transition steps $N$.
    \STATE \textbf{Initialize:} Generate initial policy $\bar\pi_0$ using Boltzmann transformation and initialize $\bar\mu_0=\mathbf{M}^N(\bar\pi_0)$. Initialize $\tilde Q_0 = Q^{\pi_0, \mu_0}$
    \FOR{iteration $k = 0$ to $K-1$}
        \STATE Update policy:
            $\pi_{k} = \varphi_{\tau}(\tilde{Q}_{k-1})$
        \STATE Compute the new stationary mean field distribution:
            $\mu_{k} = \mathbf{M}^N(\pi_{k})$
        \STATE Compute state-action value function $Q_{k} = Q^{\pi_k, \mu_k}$.
        \STATE Update mirrored state-action values:
            $\tilde{Q}_{k} = \tilde{Q}_{k-1} + \alpha Q_{k}$
    \ENDFOR
    \ENSURE $(\pi_K, \mu_K)$.
\end{algorithmic}
\end{algorithm}

\begin{algorithm}[H]
\small
\caption{Particle Swarm Optimization (PSO) for Exploitability Minimization}
\label{algo:pso}
\begin{algorithmic}[1]
    \REQUIRE Initial logits' distribution $\nu\in\Delta_{\mathbb{R}^{|\states|\times|\actions|}}$; Number of particles $P$, Number of iterations $K$. Inertia weight $w$, cognitive weight $c_1$, social weight $c_2$. Softmax temperature $\tau$.
    \STATE \textbf{Initialize:}
    \FOR{each particle $i=1, \dots, P$}
        \STATE Initialize position (logits) $\theta_i\sim\nu$ .
        \STATE Initialize velocity $v_i$ to zero.
        \STATE Set personal best position $p_i \gets \theta_i$.
    \ENDFOR
    \STATE Find the global best position $p_g \gets \argmin_{\theta_i} \exploitability(\varphi_{\tau}(\theta_i))$.

    \FOR{iteration $k = 1$ to $K$}
        \FOR{each particle $i=1, \dots, P$}
            \STATE Generate random vectors $r_1, r_2 \sim U(0, 1)$.
            \STATE Update velocity: $v_i \gets w v_i + c_1 r_1 \cdot (p_i - \theta_i) + c_2 r_2 \cdot (p_g - \theta_i)$.
            \STATE Update position: $\theta_i \gets \theta_i + v_i$.
            \STATE Calculate new exploitability $e_i = \exploitability(\varphi_{\tau}(\theta_i))$.
            \IF{$e_i < \exploitability(p_i)$}
                \STATE Update personal best: $p_i \gets \theta_i$.
            \ENDIF
            \IF{$e_i < \exploitability(p_g)$}
                \STATE Update global best: $p_g \gets \theta_i$.
            \ENDIF
        \ENDFOR
    \ENDFOR
    \ENSURE The global best policy $\varphi_{\tau}(p_g)$ and the stationary distribution generated by that $\mathbf{M}^N(\varphi_{\tau}(p_g))$.
\end{algorithmic}
\end{algorithm}

\subsection{Average Policy}\label{app:average_policy}
We average $k$ policies $(\pi^{1}, \dots, \pi^{k})$ using weighted average:
\[
\quad \bar{\pi}_k(a|x) = \frac{\sum_{i=0}^k \mu^{\pi_i}(x)\pi_i(a|x)}{\sum_{i=0}^k \mu^{\pi_i}(x)}.
\]
\subsection{Best Response Computation via Backward Induction}\label{app:br}
\textbf{Backward Induction:}
Initialize $V^0(x) = 0$ for all states $x$. For $k = 0, 1, \ldots, T-1$:

\begin{equation}
V^{k+1}(x) = \max_{a \in \actions} \left[ r(x, a, \mu) + \gamma \sum_{x'} p(x'|x, a, \mu) V^k\left(x'\right) \right]
\end{equation}
In the end, we obtain an approximation of the optimal value function $V^{T}\approx V^*$.

\textbf{Optimal Policy Extraction:}
From the final value function $V^* = V^{T}$, compute action values:

\begin{equation}
Q^*(x, a) = r(x, a, \mu) + \gamma \sum_{x'} p(x'|x, a, \mu) V^*\left(x'\right)
\end{equation}

The optimal policy is:
\begin{equation}
\pi^*(x, a) = \begin{cases} 
1 & \text{if } a = \actions^*(x) \\
0 & \text{otherwise}
\end{cases}
\end{equation}

where $\actions^*(x) = \arg\max_{a} Q^*(x, a)$ is the set of optimal actions at state $x$, with the choice of the first action if $|\arg\max_{a} Q^*(x, a)|\ge2$.

\newpage
\section{Proofs}
\textbf{Proof of Prop.~\ref{prop: potential_characterization}}\label{proof: potential_characterization}
\begin{proof}
    If the game is potential, there exists a scalar function $G: \Delta_{\states} \to \mathbb R$ such that $g(x, \mu) = \frac{\partial G}{\partial \mu(x)}$. By Schwarz's theorem, if $G$ is twice continuously differentiable, the mixed partial derivatives must be equal:
    \[
        \frac{\partial g(x, \mu)}{\partial \mu(y)} = \frac{\partial^2 G}{\partial \mu(x) \partial \mu(y)} = \frac{\partial^2 G}{\partial \mu(y) \partial \mu(x)} = \frac{\partial g(y, \mu)}{\partial \mu(x)}.
    \]
    Conversely, if the Jacobian is symmetric on a simply connected domain (such as the simplex $\Delta_{\states}$), the vector field $g(\cdot, \mu)$ is conservative and admits a potential function $G$.
\end{proof}

\section{Technical Notes}

\subsection{Definition of derivative with respect to a measure}\label{app: functional_derivative}
Let $\mathcal V: \mathcal D \to \mathbb{R}$ be a functional on the space of probability distributions over our finite state space $\states$.
We say that the map $g: \states \times \mathcal D \to \mathbb{R}$ is the \textbf{variational derivative} of $\mathcal V$ at $\mu$, denoted $g(x, \mu) = {\nabla \mathcal V}(\mu, x)$, if for any two distributions $\mu, \mu' \in \mathcal D$, the following limit holds:
\[
\lim_{s \to 0} \frac{\mathcal V((1-s)\mu + s\mu') - \mathcal V(\mu)}{s} = \sum_{x \in \states} \nabla \mathcal V(x, \mu) \left( \mu'(x) - \mu(x) \right) = \sum_{x \in \states} g(x, \mu) \left( \mu'(x) - \mu(x) \right)
\]
This expresses the first-order change in the functional $\mathcal V$ in response to a small perturbation of the measure $\mu$ in the direction of $\mu'$.

\noindent\textbf{Integral Representation.} 
Assuming the derivative map $g(x, \mu)$ is continuous with respect to $\mu$, the fundamental theorem of calculus allows us to write the total change in $\mathcal V$ as an integral over the path between $\mu$ and $\mu'$:
\[
\mathcal V(\mu') - \mathcal V(\mu) = \int_0^1 \sum_{x \in \states} g(x, (1-s)\mu + s\mu') \left( \mu'(x) - \mu(x) \right) ds
\]

\noindent\textbf{Normalization.} 
As noted in the literature \citep{cardaliaguet2017learning}, this derivative is typically defined only up to an additive constant. To ensure uniqueness, we impose the following normalization condition:
\[
\sum_{x \in \states} g(x, \mu) \mu(x) = C, \quad \forall \mu \in \mathcal D, \quad C\ge 0
\]
This means that the expected value of the derivative under the measure itself is $C$.

\section{Details on Contractive MFG} 

{\color{black}
\subsection{Theoretical limitation of Contractive MFGs in discrete spaces}
\label{sec:contractive-limitation}
Before defining the benchmark example, we establish a formal limitation of deterministic best responses in discrete action spaces. This lemma justifies why ``contractive'' examples in this setting must necessarily involve policies that are locally constant with respect to the mean field.

\begin{lemma}[Constant Best Response]
\label{lem:contraction-constant}
    Let $\states$ and $\actions$ be finite sets. Let $\Delta_\states$ be the simplex of population distributions. Assume the best response map $\BR: \Delta_\states \to (\Delta_\actions)^\states$ is:
    \begin{enumerate}
        \item \textbf{Single-valued:} For every $\mu$, there is a unique optimal policy $\pi^*_\mu$.
        \item \textbf{Continuous:} The map $\mu \mapsto \pi^*_\mu$ is continuous.
    \end{enumerate}
    Then, the best response map is \textbf{constant} on $\Delta_\states$. That is, there exists a single deterministic policy $\pi$ such that $\BR(\mu) = \pi$ for all $\mu \in \Delta_\states$.
\end{lemma}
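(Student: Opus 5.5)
Fix $\mu$ and write $\pi^*_\mu$ for the unique optimal policy. The plan has three steps: show that $\pi^*_\mu$ is deterministic, use continuity to place $\BR$ inside a finite set, and then use connectedness of $\Delta_\states$ to conclude that $\BR$ is constant.

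\textbf{Step 1: uniqueness forces determinism.} The best response problem for fixed $\mu$ is a finite discounted MDP with kernel $p(\cdot|\cdot,\cdot,\mu)$ and reward $r(\cdot,\cdot,\mu)$. Let $Q^*_\mu$ be its optimal action-value function and $\actions^*_\mu(x)=\arg\max_a Q^*_\mu(x,a)$ the set of greedy actions.

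By standard MDP theory, every stationary policy supported on $\actions^*_\mu(x)$ at each $x$ is optimal. Optimality here can be read statewise, as in Appx.~\ref{app:br}. If optimality is only in the $\mu$-averaged sense $J_\gamma(\cdot;\mu)$ with $x_0\sim\mu$, I will note that the argument needs either full support of the relevant state distribution or the convention that $\BR$ is computed statewise.

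If some $\actions^*_\mu(x)$ contained two actions, then every mixture over them would be optimal, and this would contradict uniqueness. Hence $|\actions^*_\mu(x)|=1$ for every $x$, and $\pi^*_\mu$ is the deterministic policy selecting that unique action. Consequently $\BR$ takes values in the finite set $D\subset(\Delta_\actions)^\states$ of deterministic policies, of cardinality $|\actions|^{|\states|}$.

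\textbf{Step 2: topology.} The set $D$ is a finite, hence discrete, subset of $(\Delta_\actions)^\states\subset\R^{|\states|\times|\actions|}$. Any two distinct elements of $D$ are at distance at least $1$ in sup norm, so each singleton $\{d\}$ is relatively open and closed in $D$.

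By continuity of $\BR$, each preimage $\BR^{-1}(\{d\})$ is therefore both open and closed in $\Delta_\states$. These preimages partition $\Delta_\states$. The simplex is convex, hence path-connected, hence connected, so exactly one preimage is nonempty and it equals all of $\Delta_\states$. Thus $\BR\equiv\pi$ for a single deterministic $\pi$.

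\textbf{Main obstacle.} I expect Step 1 to be the delicate point, namely making precise in which sense optimality is meant. Under the $x_0\sim\mu$ objective, the policy is unconstrained at states that are never visited, so uniqueness can never hold unless all states are reached. I would therefore first pin down the statewise optimality convention, which matches the backward-induction definition of $\BR$ in Appx.~\ref{app:br}, and then run the mixture argument. The connectedness step is routine once Step 1 is in place.
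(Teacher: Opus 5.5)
Your proof is correct and has the same two-step structure as the paper's. First you show that a unique best response must be deterministic, so $\BR$ takes values in the finite set of $|\actions|^{|\states|}$ deterministic policies. Then continuity and connectedness of $\Delta_\states$ force the image to be a single point; your clopen-preimage partition is the unpacked form of the paper's ``a connected image inside a finite discrete set is a singleton.''

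The real difference is how you obtain determinism. The paper uses linear programming: the objective is linear in the occupancy measure, a unique optimum of a linear program is a vertex, and vertices correspond to deterministic policies. You instead use the Bellman characterization: every policy supported on the greedy sets $\actions^*_\mu(x)$ is optimal, so a tie at any state gives infinitely many optimal policies. Your route is more elementary. It also avoids a point the paper glosses over: the objective is linear in occupancy measures, not in the policy. Passing from ``unique optimal policy'' to ``vertex of the occupancy polytope'' and back to a deterministic policy therefore needs care at states of zero occupancy.

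The caveat in your last paragraph is unnecessary. Policies supported on the greedy sets are optimal from every initial state, hence also optimal for $J_\gamma(\cdot;\mu)$ with $x_0\sim\mu$. So a tie at any state contradicts uniqueness under either convention. The deterministic greedy policy, being optimal in both senses, must then be the unique optimizer. You need neither a full-support assumption nor a choice of convention.
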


\begin{proof}
    Consider the standard (unregularized) MDP objective. The set of deterministic stationary policies, denoted by $\Pi_{det} \subset (\Delta_\actions)^\states$, is finite, containing $|\actions|^{|\states|}$ elements (the vertices of the policy polytope).
    
    Since the action space is finite and the objective is linear in policy occupancy measures, optimal policies always lie on the faces of the optimization polytope. If the solution is unique (first point in the statement), it must be a vertex, meaning $\BR(\mu) \in \Pi_{det}$.
    
    We are given that $\BR: \Delta_\states \to (\Delta_\actions)^\states$ is a continuous map. The domain $\Delta_\states$ is a connected topological space (a convex simplex). The image of the map, $\BR(\Delta_\states)$, must therefore be a connected subset of the codomain.
    
    However, the image is constrained to be a subset of $\Pi_{det}$, which is a finite discrete set. The only connected subsets of a finite discrete set are singletons. Thus, the image of $\BR$ consists of a single policy $\pi$.
\end{proof}

\subsection{Contractive Example: Dominated Interaction Game}
\label{sec:contractive-ex-proof}

We now construct a concrete MFG example satisfying the conditions of the lemma. In this game, mean field interactions are present in the reward function (distinguishing it from a No-Interaction game), but they are sufficiently weak relative to the ``intrinsic'' costs that they never trigger a change in the optimal strategy. This results in a fixed-point operator with contraction coefficient $\kappa = 0$.

\subsubsection{Details on Example~\ref{ex:contractive-example}}
\label{sec:contractive-ex-analysis}

We show that if the switching cost $C$ dominates the maximum possible gain from avoiding congestion, the unique best response is to always ``Stay''.

\begin{proposition}[Dominated Interaction Condition]
\label{prop:contractive-ex-property}
    If the parameters satisfy the condition:
    \(
        C > \frac{\alpha}{1-\gamma},
    \)
    then the Best Response is unique and constant: $\pi^*(a=\text{Stay}|x) = 1$ for all $\mu$. Consequently, the fixed-point operator $\Gamma$ is a contraction with constant 0.
\end{proposition}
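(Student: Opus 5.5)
The plan is to fix an arbitrary $\mu \in \Delta_{\states}$ and compare, state by state, the value of ``Stay forever'' with the value of any policy that switches at least once. Everything rests on a one-step deviation (policy improvement) argument that uses the fact that the interaction term is bounded: $0 \le \alpha\mu(x) \le \alpha$.

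\textbf{Step 1: value of always staying.} Let $\pi^{S}$ be the policy that always plays Stay. The dynamics are deterministic, so from state $x$ the agent never moves. Hence
\[
V^{\pi^S}(x) = -\frac{\alpha\mu(x)}{1-\gamma}.
\]
In particular,
\[
-\frac{\alpha}{1-\gamma} \le V^{\pi^S}(x) \le 0 \qquad \text{for both } x.
\]

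\textbf{Step 2: optimal values are squeezed into a narrow interval.} Every one-step reward satisfies $r \le 0$, so $V^*(y) \le 0$. On the other hand $V^*(y) \ge V^{\pi^S}(y) \ge -\alpha/(1-\gamma)$. Together these give
\[
\sup_y V^*(y) - \inf_y V^*(y) \le \frac{\alpha}{1-\gamma}.
\]

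\textbf{Step 3: compare the two actions.} The Bellman action values are
\[
Q^*(x,\text{Stay}) = -\alpha\mu(x) + \gamma V^*(x), \qquad Q^*(x,\text{Switch}) = -C - \alpha\mu(x) + \gamma V^*(1-x).
\]
Their difference is
\[
Q^*(x,\text{Stay}) - Q^*(x,\text{Switch}) = C + \gamma\bigl(V^*(x) - V^*(1-x)\bigr) \ge C - \frac{\gamma\alpha}{1-\gamma} > C - \frac{\alpha}{1-\gamma} > 0,
\]
using the hypothesis $C > \alpha/(1-\gamma)$. Thus Stay is the unique maximizer of $Q^*(x,\cdot)$ at every state. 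Since an optimal policy must put all its mass on $\operatorname{argmax}_a Q^*(x,a)$ at each state, the best response is unique and equals $\pi^*(\text{Stay}\mid x)=1$ for every $\mu$.

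A small technical point: the objective $J_\gamma$ weights states by $x_0\sim\mu$, so uniqueness on states with $\mu(x)=0$ needs care. I would interpret $\BR$ as state-wise optimal, which is the convention used in Appx.~\ref{app:br}.

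\textbf{Step 4: contraction.} The best response is now the constant map $\pi^S$, so $\Gamma(\mu) = \mathbf M(\pi^S)$ for all $\mu$. Then
\[
\|\Gamma(\mu) - \Gamma(\nu)\|_1 = 0 \le 0\cdot\|\mu-\nu\|_1,
\]
so $\Gamma$ is a contraction with constant $0$.

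\textbf{Main obstacle.} The delicate point is that $\mathbf M(\pi^S)$ is not single-valued: under Stay the dynamics are the identity, so every distribution is stationary. This conflicts with the standing assumption of a unique stationary distribution. I would resolve it the way the algorithms do, by taking $\mathbf M$ to be the operator $\mathbf M^N$ started from a fixed initial $\mu_0$. That gives a single well-defined output $\mu_0$, independent of the input $\mu$, and the constancy of $\Gamma$ follows.
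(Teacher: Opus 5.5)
Your proof is correct and follows essentially the same route as the paper. In both, you compare $Q(x,\text{Stay})$ and $Q(x,\text{Switch})$ state by state, sandwiching the continuation values between the stay-forever value $-\alpha\mu(x)/(1-\gamma)$ and the upper bound $0$, conclude that Stay is the unique optimal action, and deduce that $\Gamma$ is constant. Your two caveats are correct refinements that the paper's proof treats only implicitly: interpreting $\BR$ state-wise so that uniqueness also holds when $\mu$ lacks full support, and fixing the initial distribution $\mu_0$ because every distribution is stationary under Stay (the paper resolves the latter the same way you do).
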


\begin{proof}
    Fix an arbitrary mean field $\mu \in \Delta_\states$. We compare the value of the policy $\pi_{stay}$ (always stay) versus any strategy involving switching.
    
    Let $V_{stay}(x)$ be the value of staying forever starting at $x$:
    \[
        V_{stay}(x) = \sum_{t=0}^\infty \gamma^t (-\alpha \mu(x)) = -\frac{\alpha \mu(x)}{1-\gamma}.
    \]
    Consider the action to Switch at time $t=0$. The immediate reward is $-C - \alpha \mu(x)$. The best possible continuation value from the new state $x' = 1-x$ is bounded by the case where the agent pays zero congestion cost forever (an upper bound on value):
    \[
        V_{max}(x') = 0.
    \]
    The Q-value for switching is thus bounded by:
    \[
        Q(x, \text{Switch}) \le -C - \alpha \mu(x) + \gamma \cdot 0 = -C - \alpha \mu(x).
    \]
    The Q-value for Staying is:
    \[
        Q(x, \text{Stay}) = -\alpha \mu(x) + \gamma V_{stay}(x) = -\alpha \mu(x) - \gamma \frac{\alpha \mu(x)}{1-\gamma} = - \frac{\alpha \mu(x)}{1-\gamma}.
    \]
    The agent prefers to Stay if $Q(x, \text{Stay}) > Q(x, \text{Switch})$. Substituting the bounds:
    \[
        - \frac{\alpha \mu(x)}{1-\gamma} > -C - \alpha \mu(x) \implies C > \alpha \mu(x) \left( \frac{1}{1-\gamma} - 1 \right) = \alpha \mu(x) \frac{\gamma}{1-\gamma}.
    \]
    A stricter, sufficient condition independent of $\mu$ is found by maximizing the RHS (setting $\mu(x)=1$) and noting that comparing immediate cost $C$ to infinite stream of congestion $\frac{\alpha}{1-\gamma}$ is sufficient. Specifically, if $C > \frac{\alpha}{1-\gamma}$, the cost of switching once exceeds the maximum possible infinite-horizon benefit of moving to a completely empty state.
    
    Thus, $\pi_\mu(x) = \text{Stay}$ for all $x, \mu$. Since $\BR(\mu)$ is the constant policy $\pi_{stay}$, the map $\Gamma(\mu) = \mathbf{M}(\pi_{stay})$ maps every input to the stationary distribution of the ``Stay'' policy (which is simply the initial distribution $\mu_0$ if dynamics are purely deterministic, or the specific stationary distribution if noise were added). In either case, $\Gamma$ is a constant map, implying $\|\Gamma(\mu) - \Gamma(\nu)\| = 0$.
\end{proof}

}

\section{Hyperparameter Sweep}\label{app: sweep}

\subsection{No Interaction MFG (NI-MFG)}
\begin{figure}[h!]
    \centering
    \includegraphics[width=0.49\linewidth]{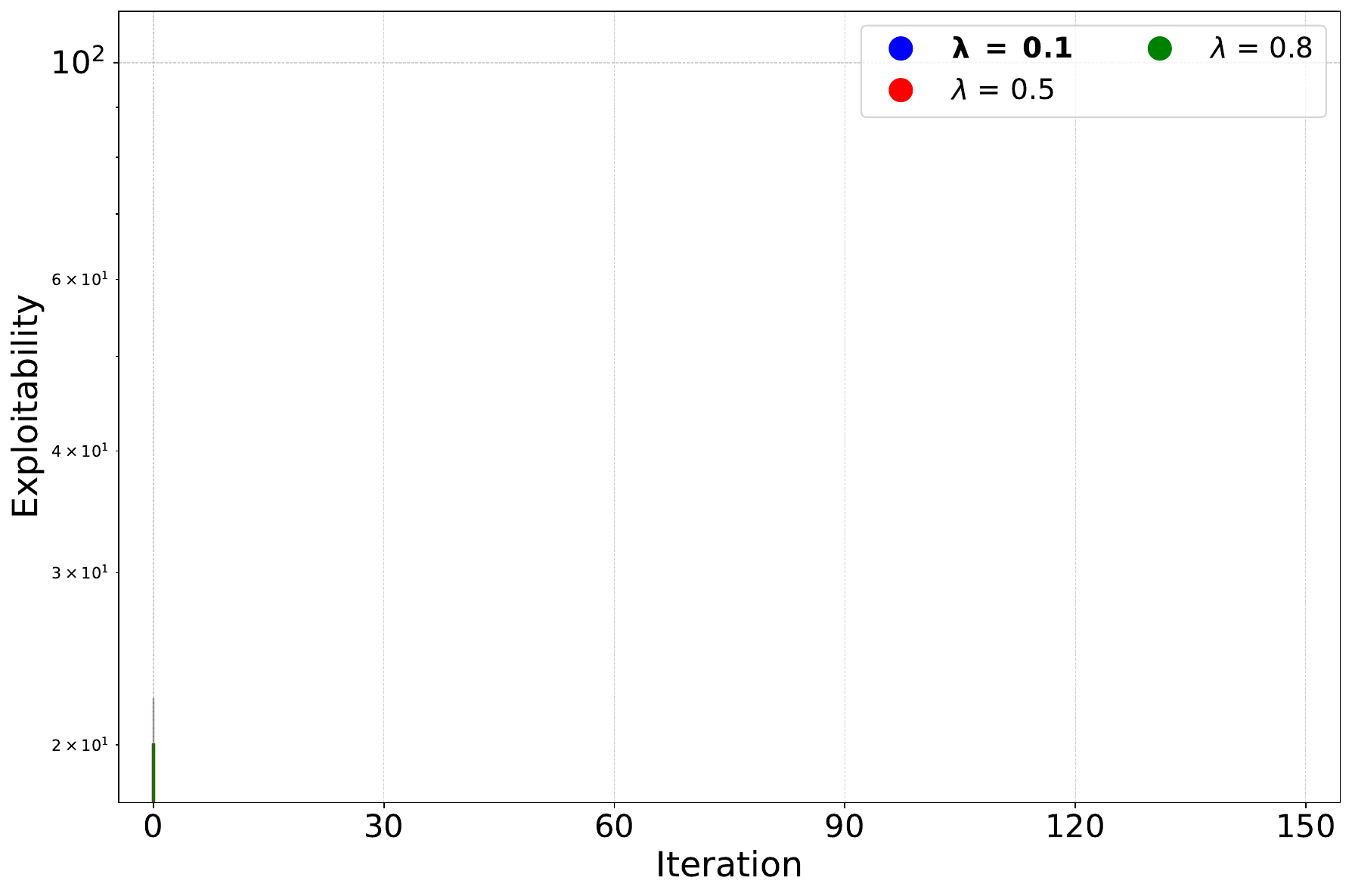}
    \includegraphics[width=0.49\linewidth]{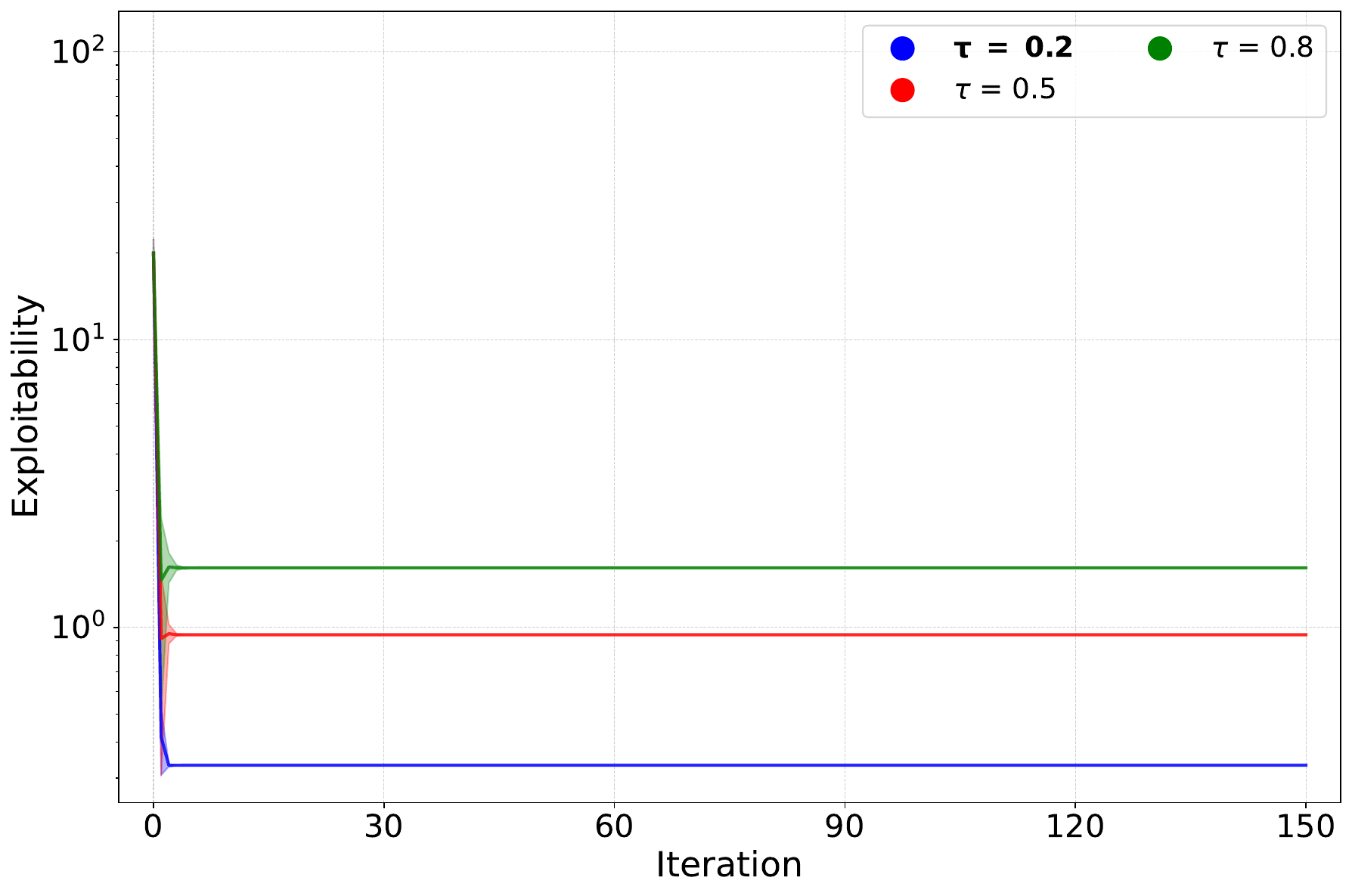}
    \includegraphics[width=0.49\linewidth]{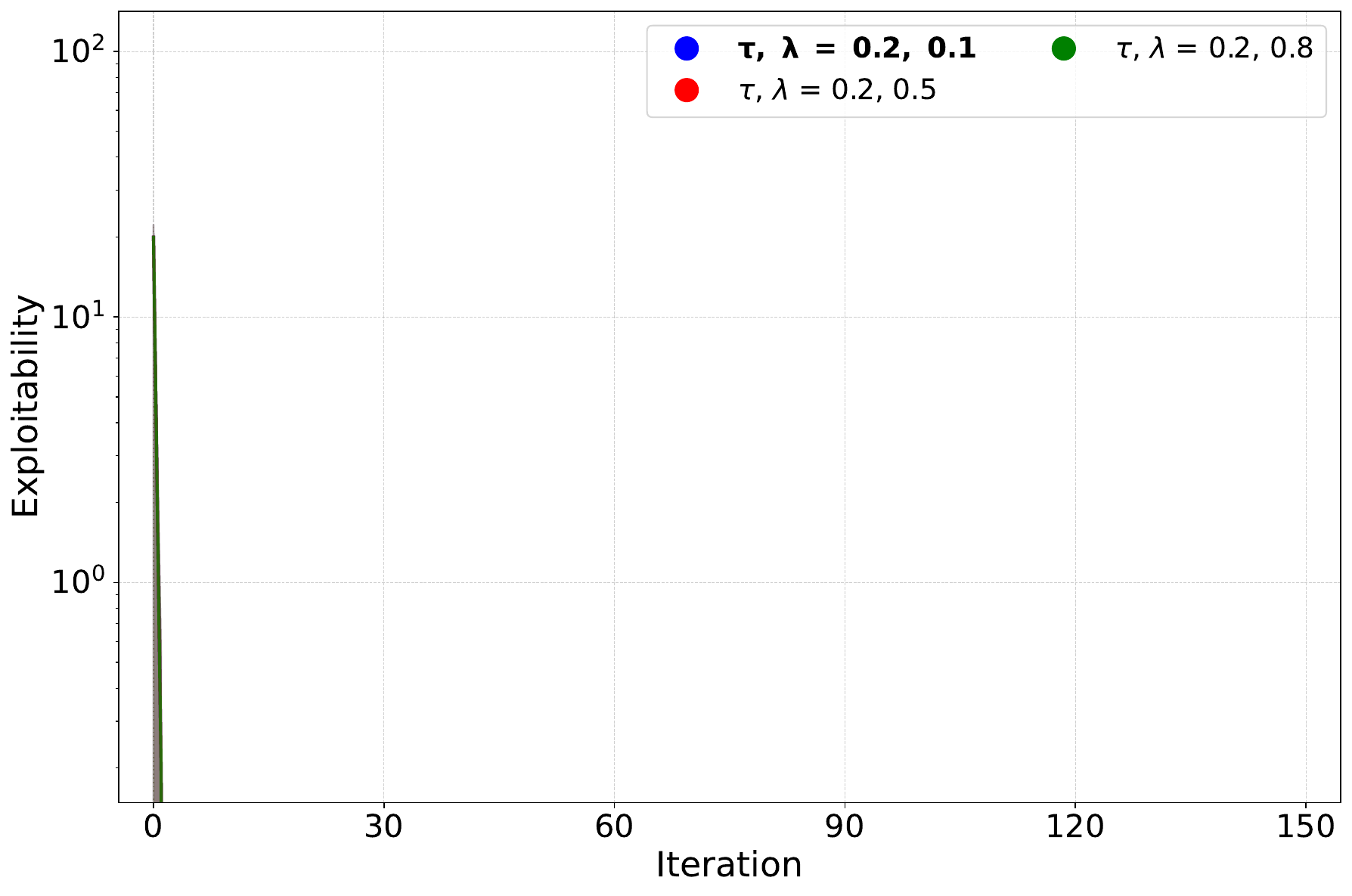}
    \includegraphics[width=0.49\linewidth]{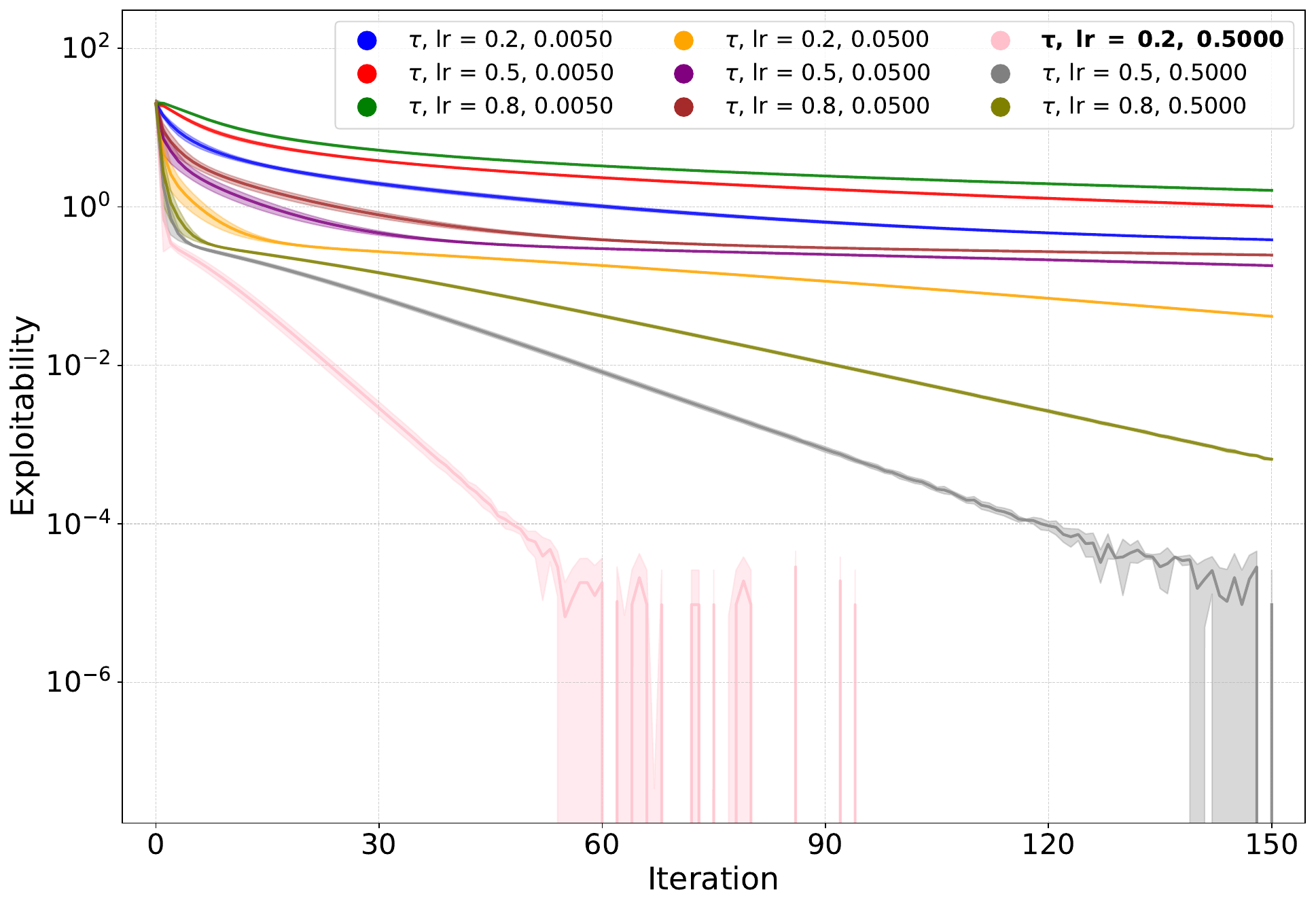}
    \includegraphics[width=0.49\linewidth]{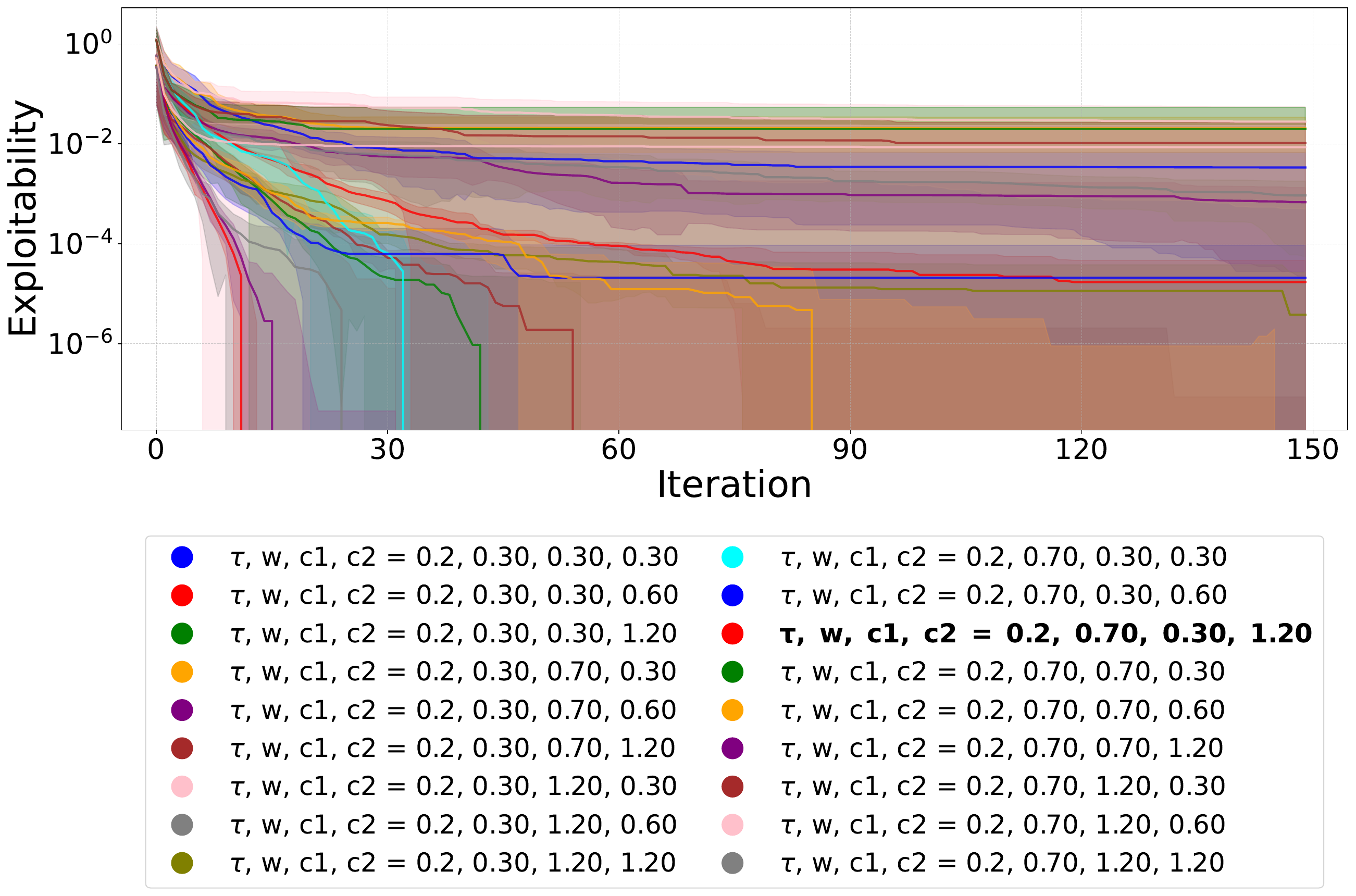}
    \includegraphics[width=0.49\linewidth]{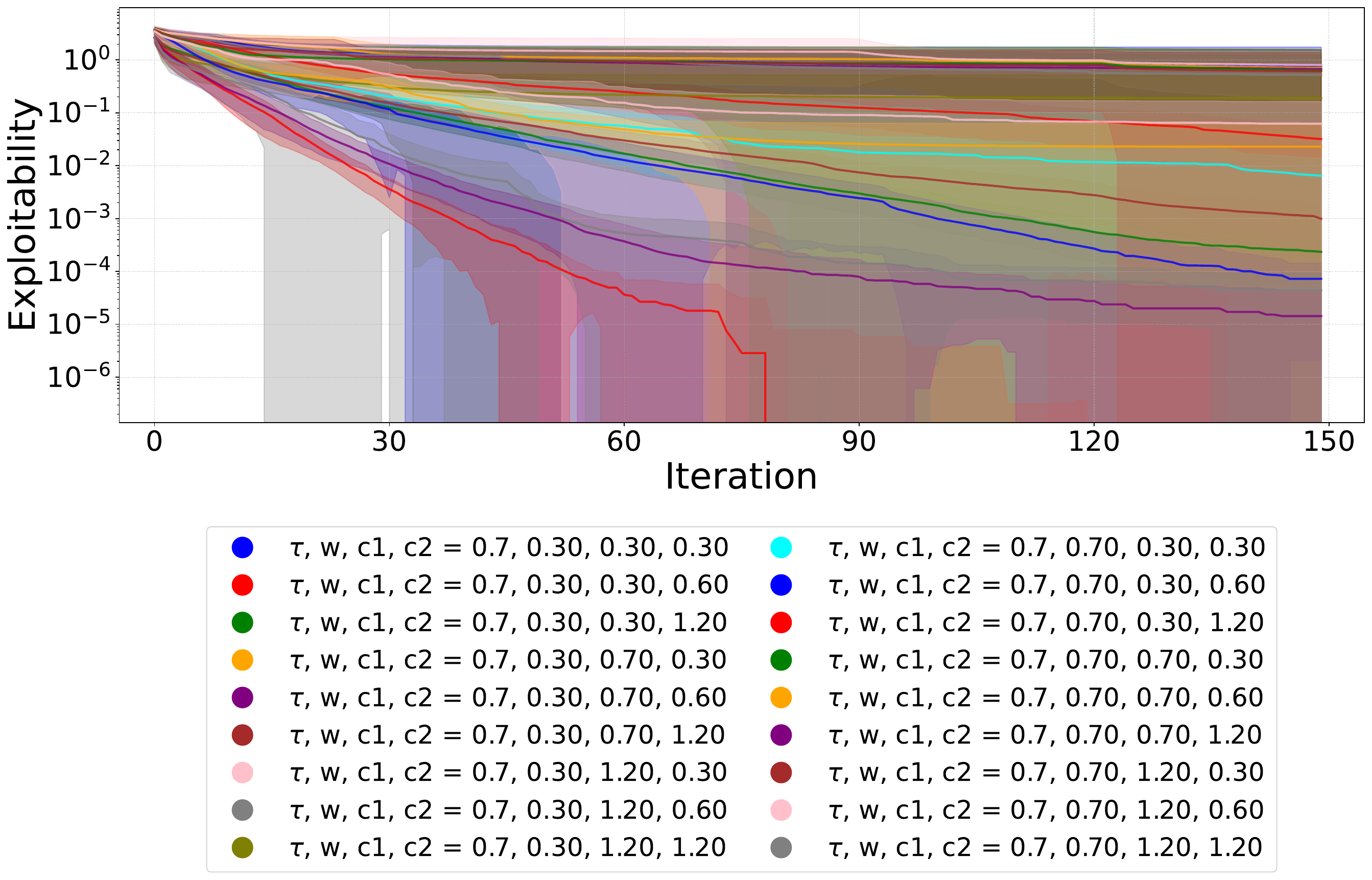}
    \caption{\textbf{NI-MFG}. Sensitivity of the algorithms wrt the hyperparameters}
    \label{fig:NI-MFG sweep}
\end{figure}

\newpage
\subsection{Contractive MFG (C-MFG)}
\begin{figure}[h!]
    \centering
    \includegraphics[width=0.49\linewidth]{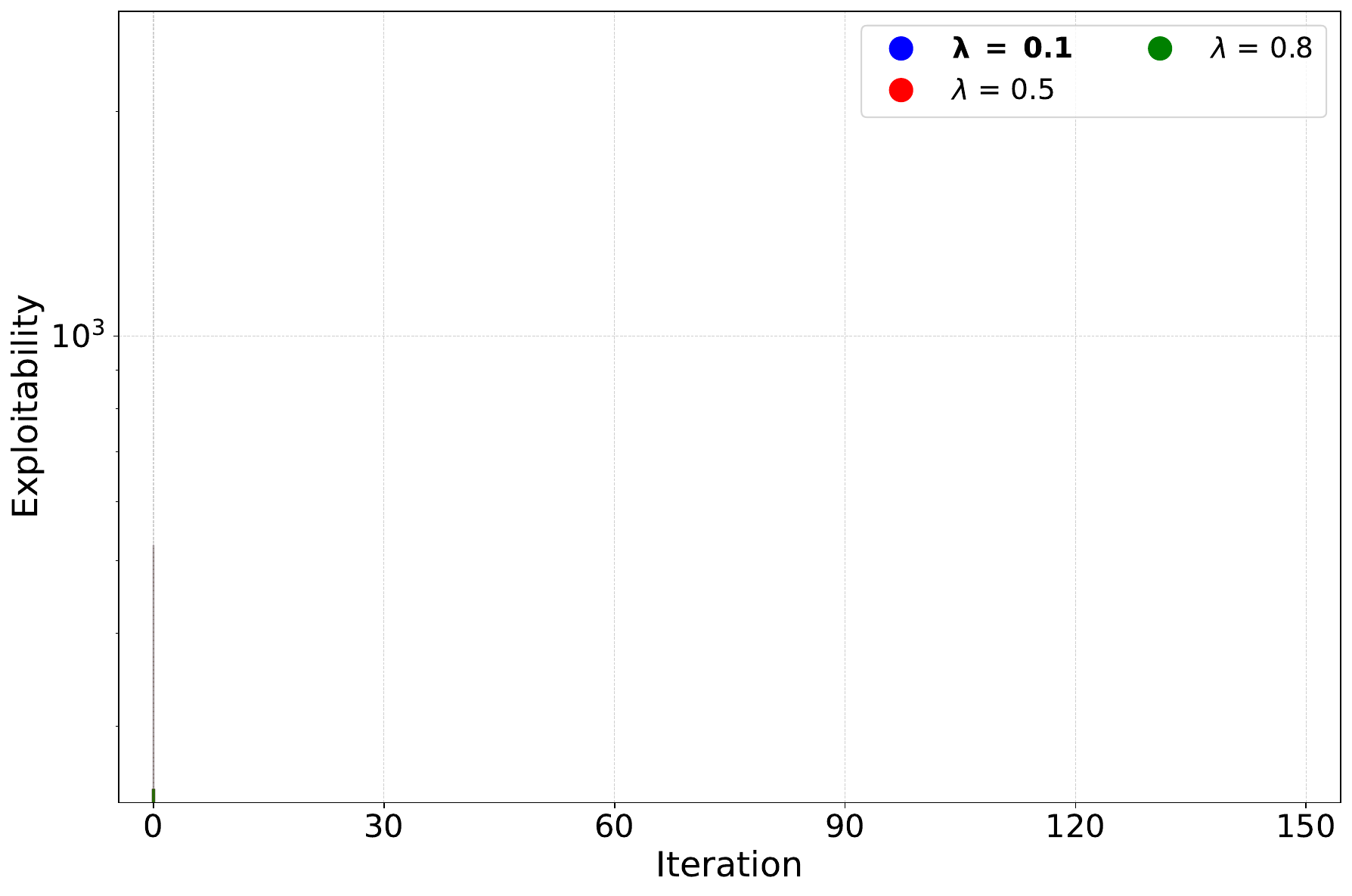}
    \includegraphics[width=0.49\linewidth]{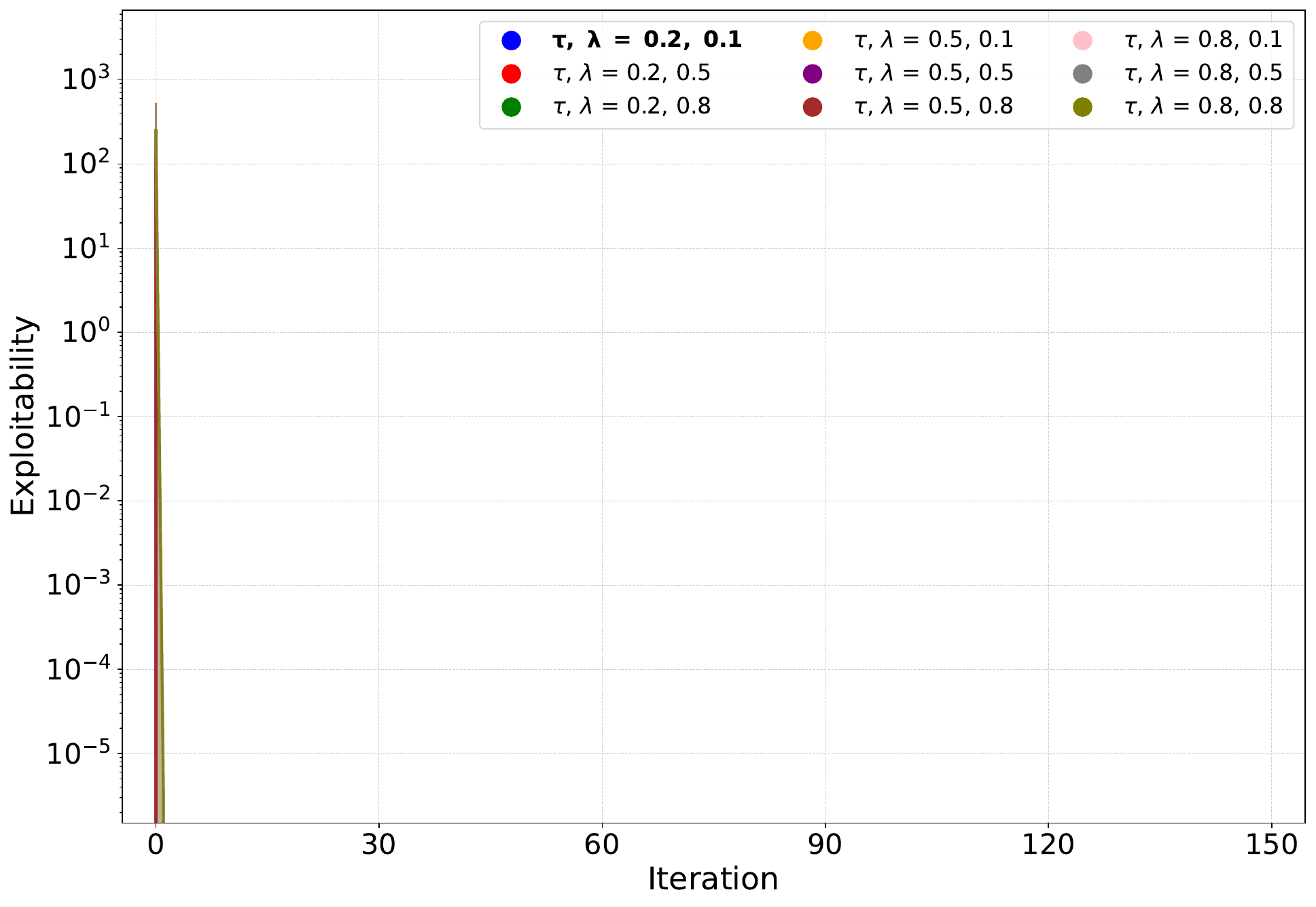}
    \includegraphics[width=0.49\linewidth]{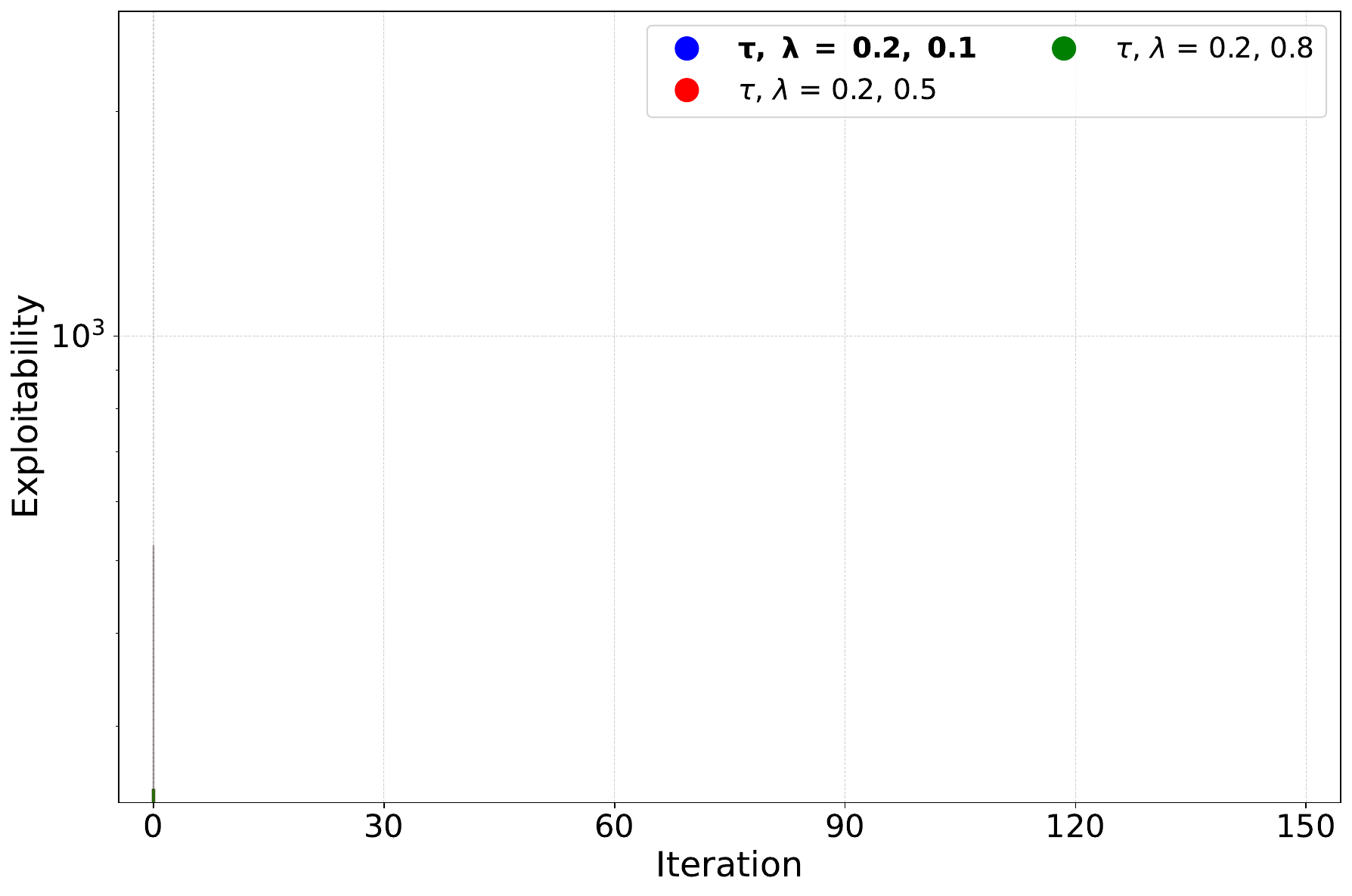}
    \includegraphics[width=0.49\linewidth]{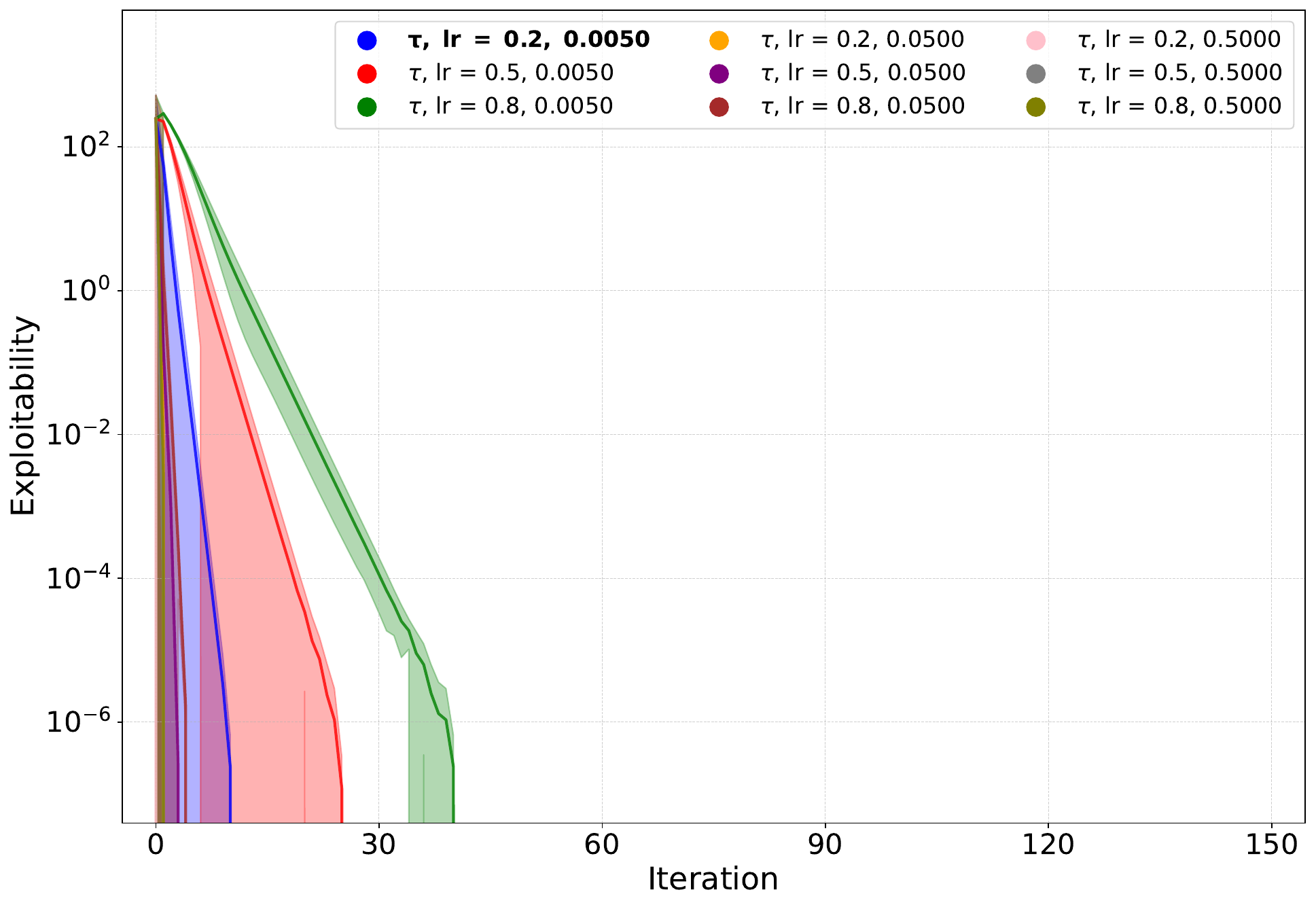}
    \includegraphics[width=0.49\linewidth]{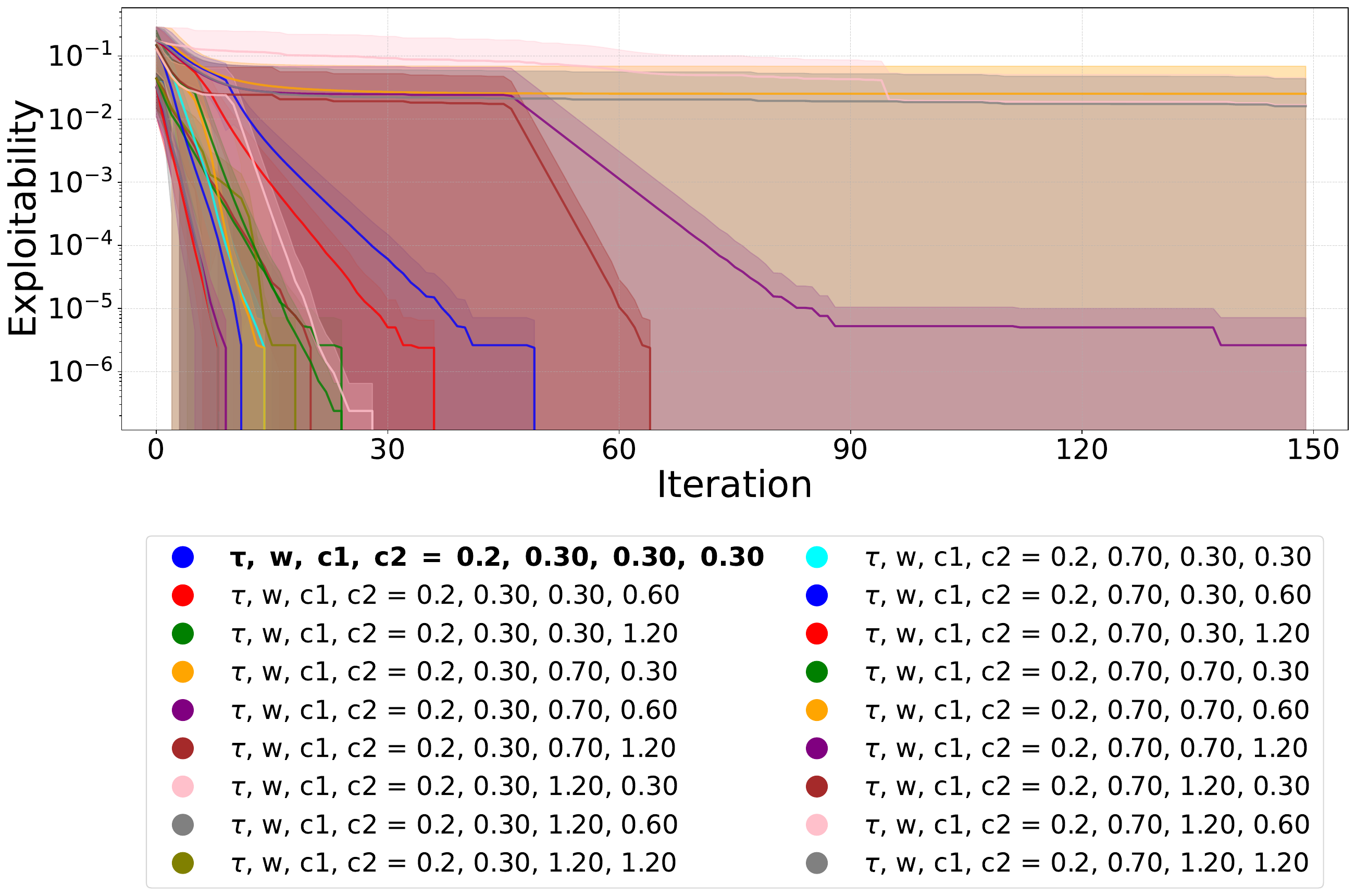}
    \includegraphics[width=0.49\linewidth]{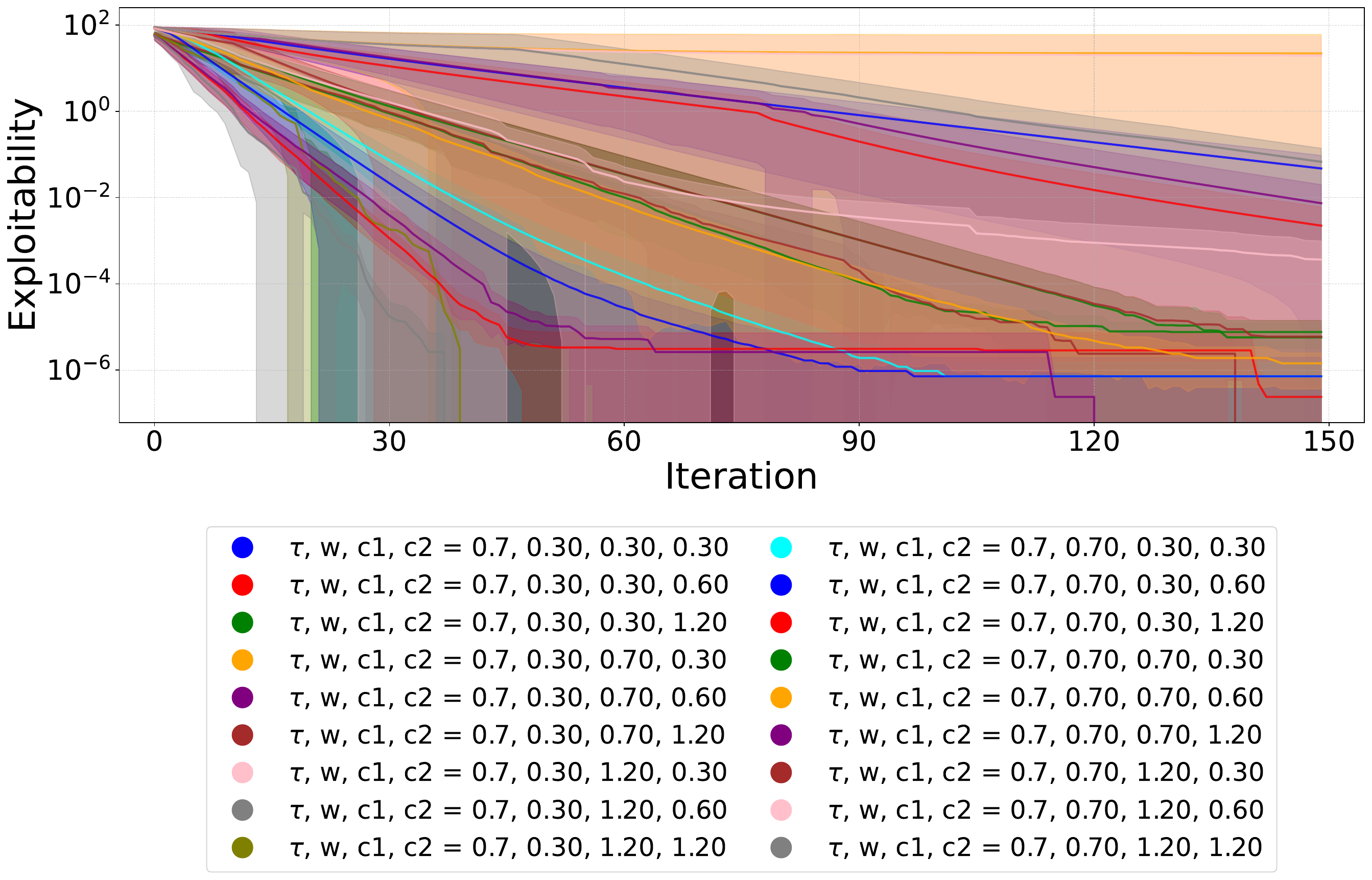}
    \caption{\textbf{C-MFG}. Sensitivity of the algorithms wrt the hyperparameters}
    \label{fig:C-MFG sweep}
\end{figure}
\newpage
\subsection{Lasry-Lions MFG (LL-MFG)}
\begin{figure}[h!]
    \centering
    \includegraphics[width=0.49\linewidth]{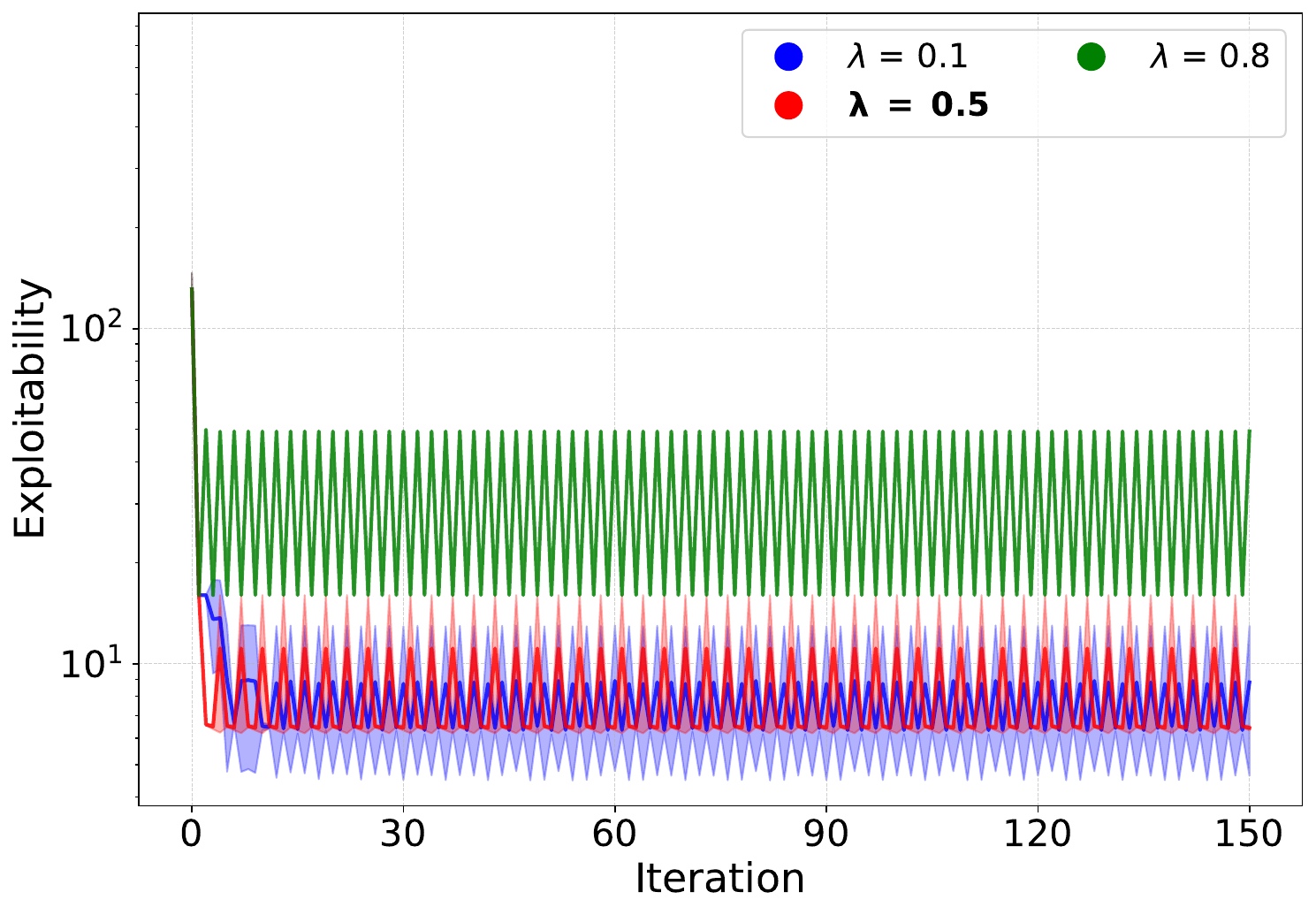}
    \includegraphics[width=0.49\linewidth]{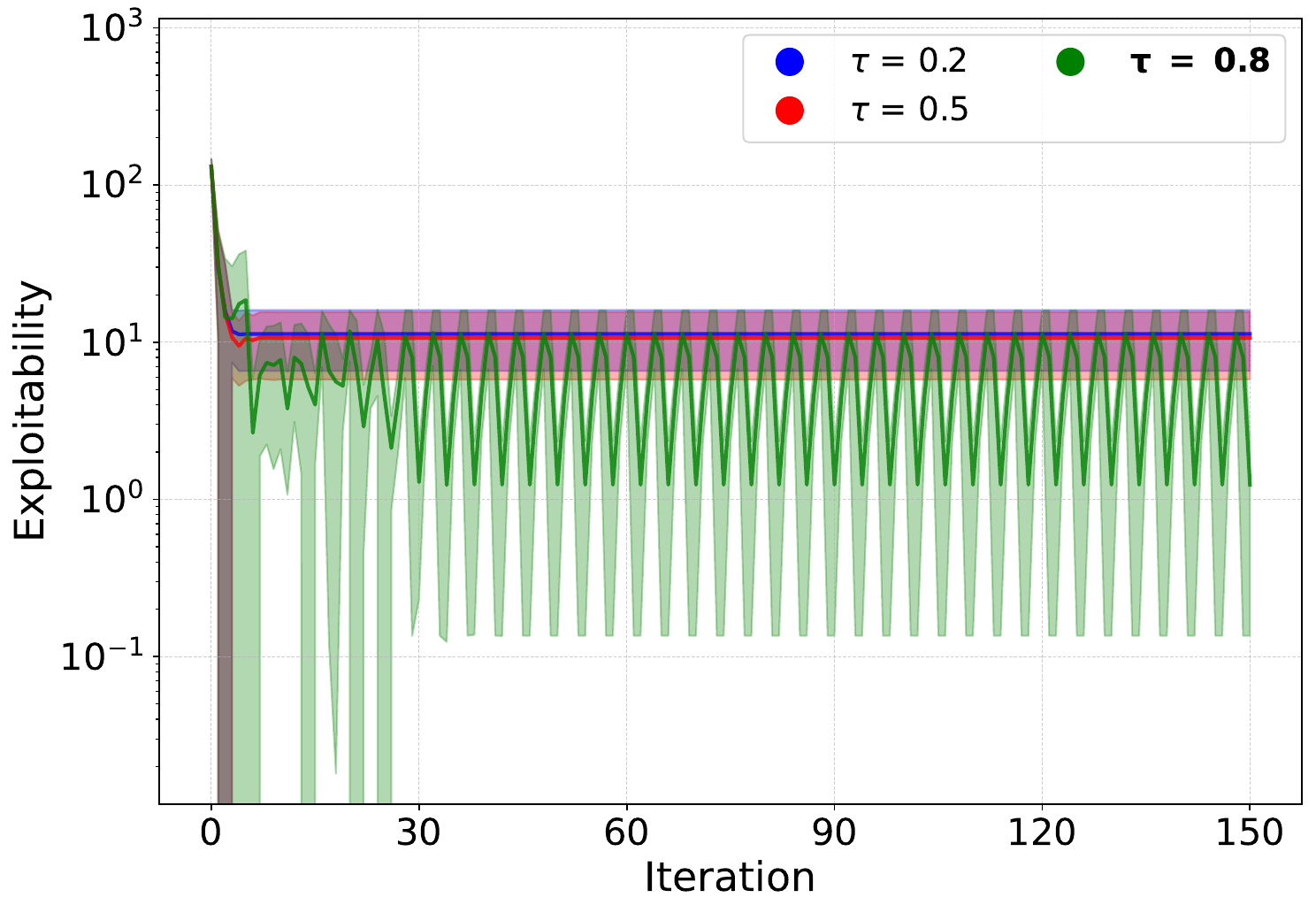}
    \includegraphics[width=0.49\linewidth]{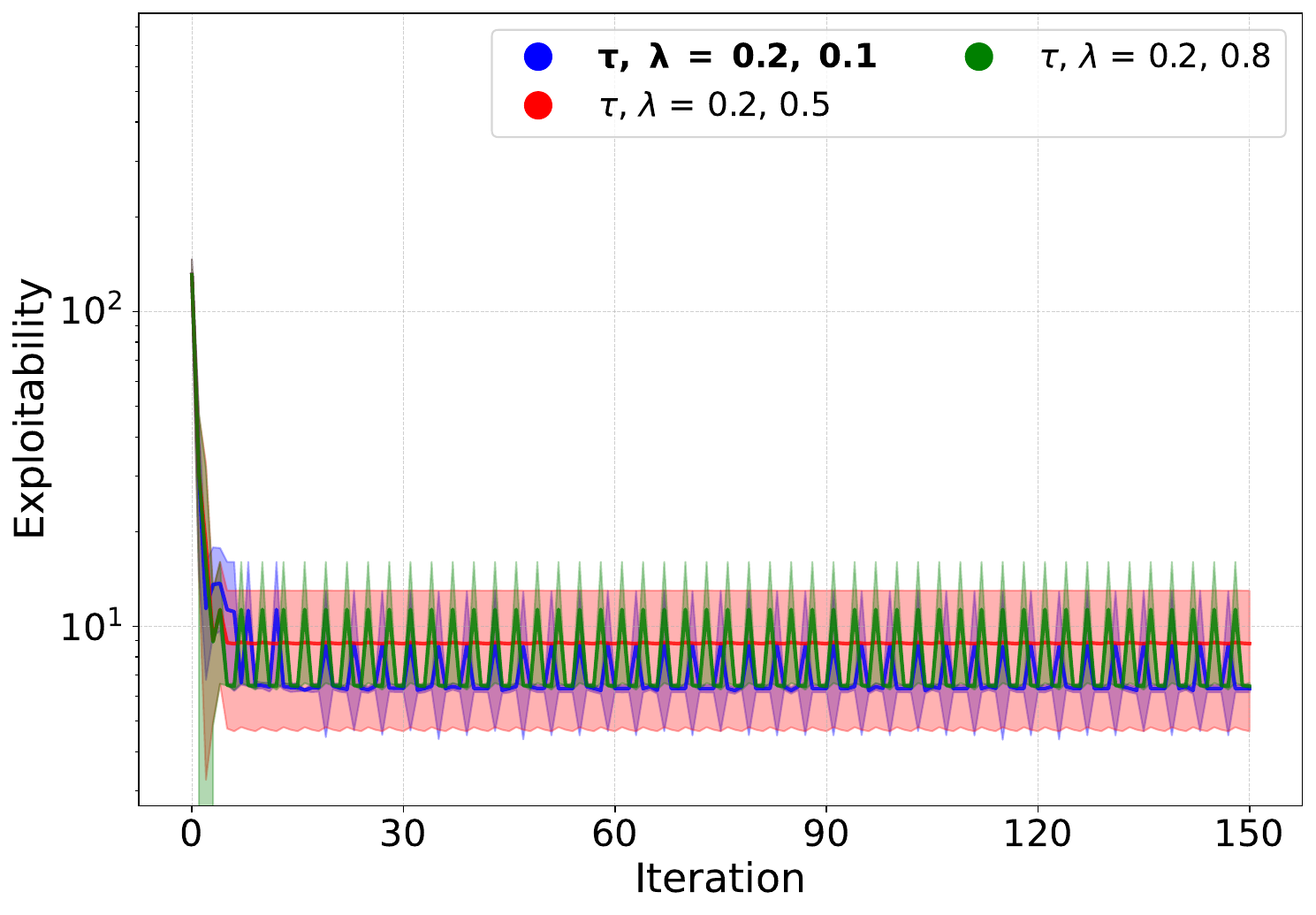}
    \includegraphics[width=0.49\linewidth]{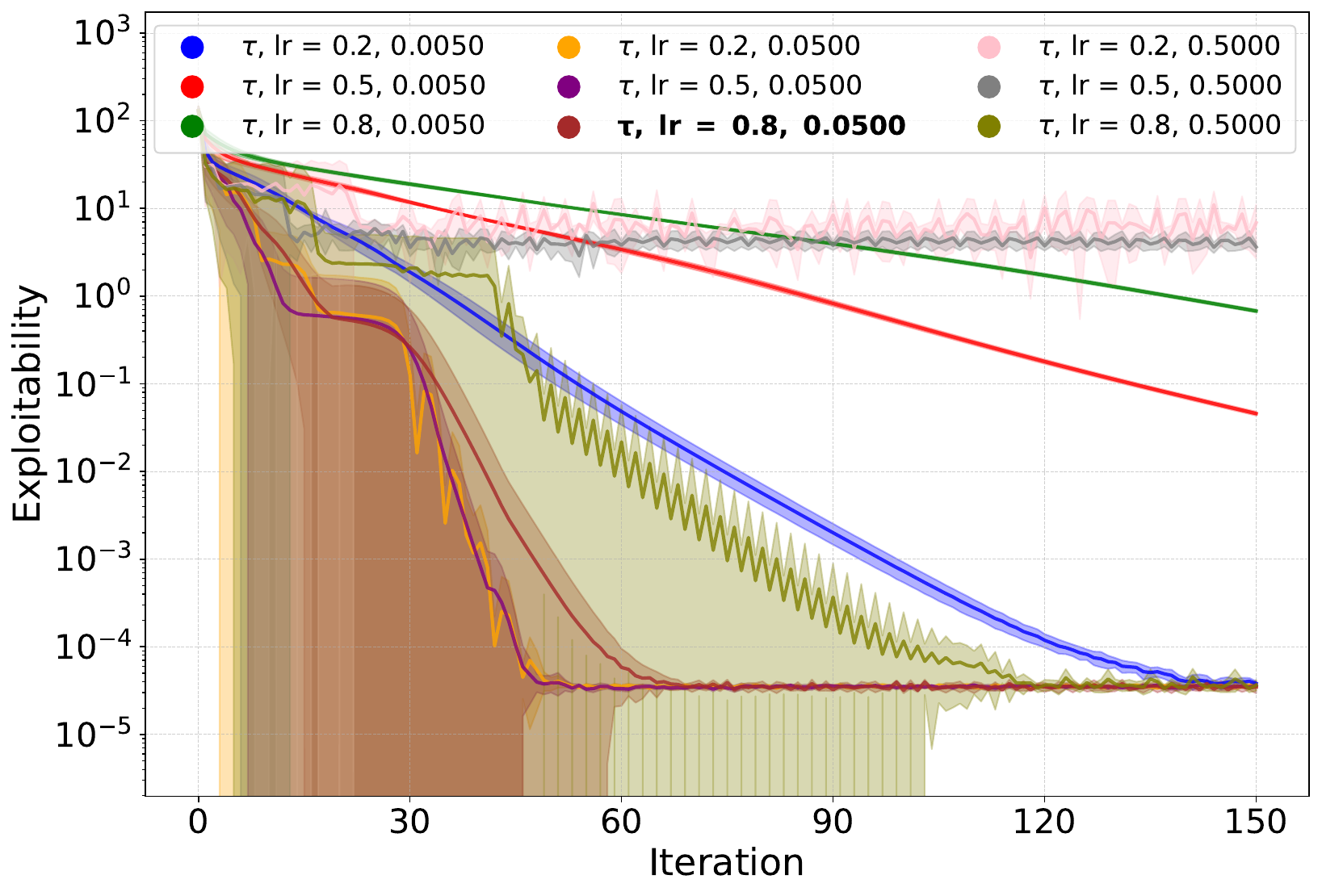}
    \includegraphics[width=0.49\linewidth]{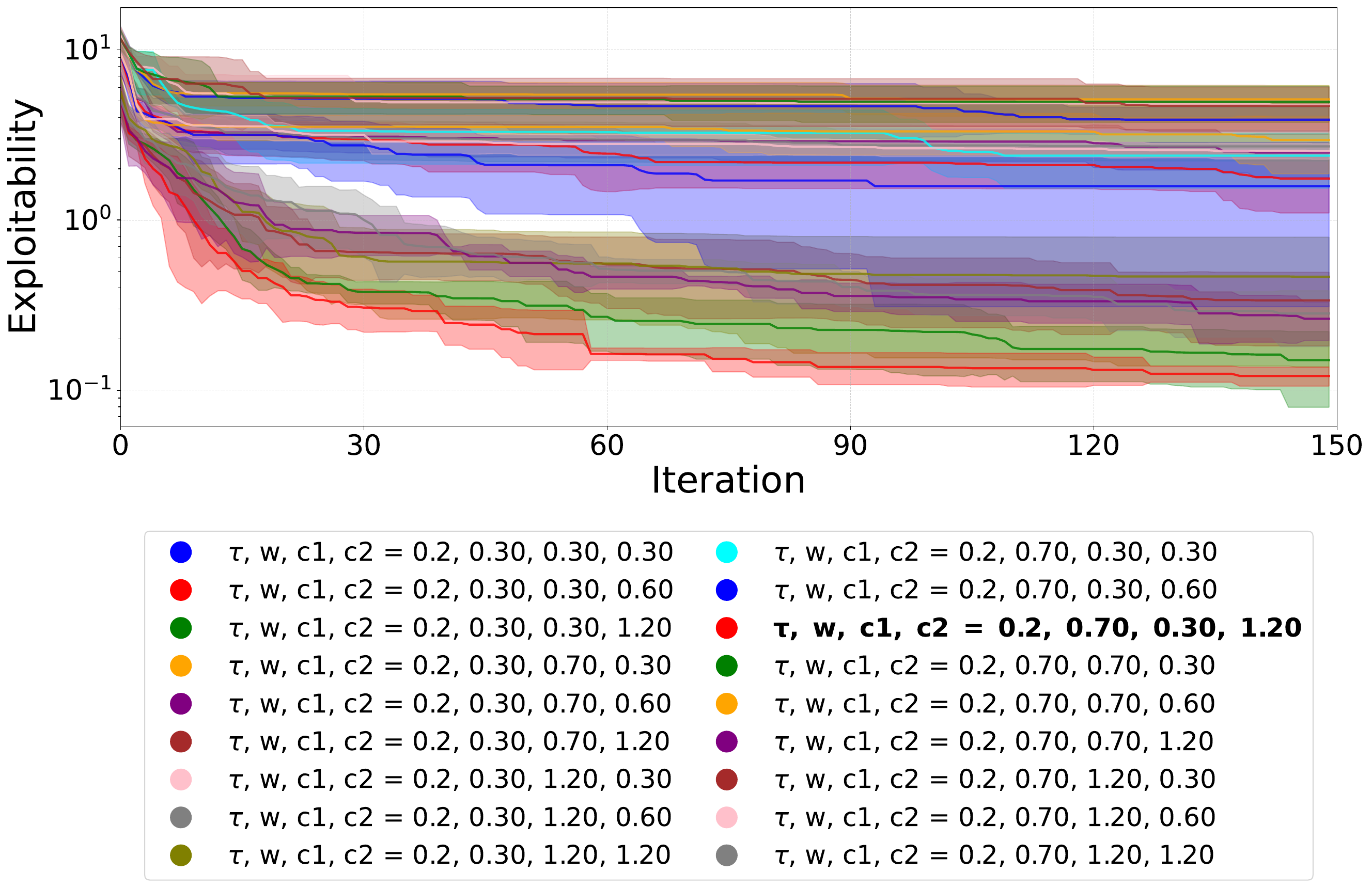}
    \includegraphics[width=0.49\linewidth]{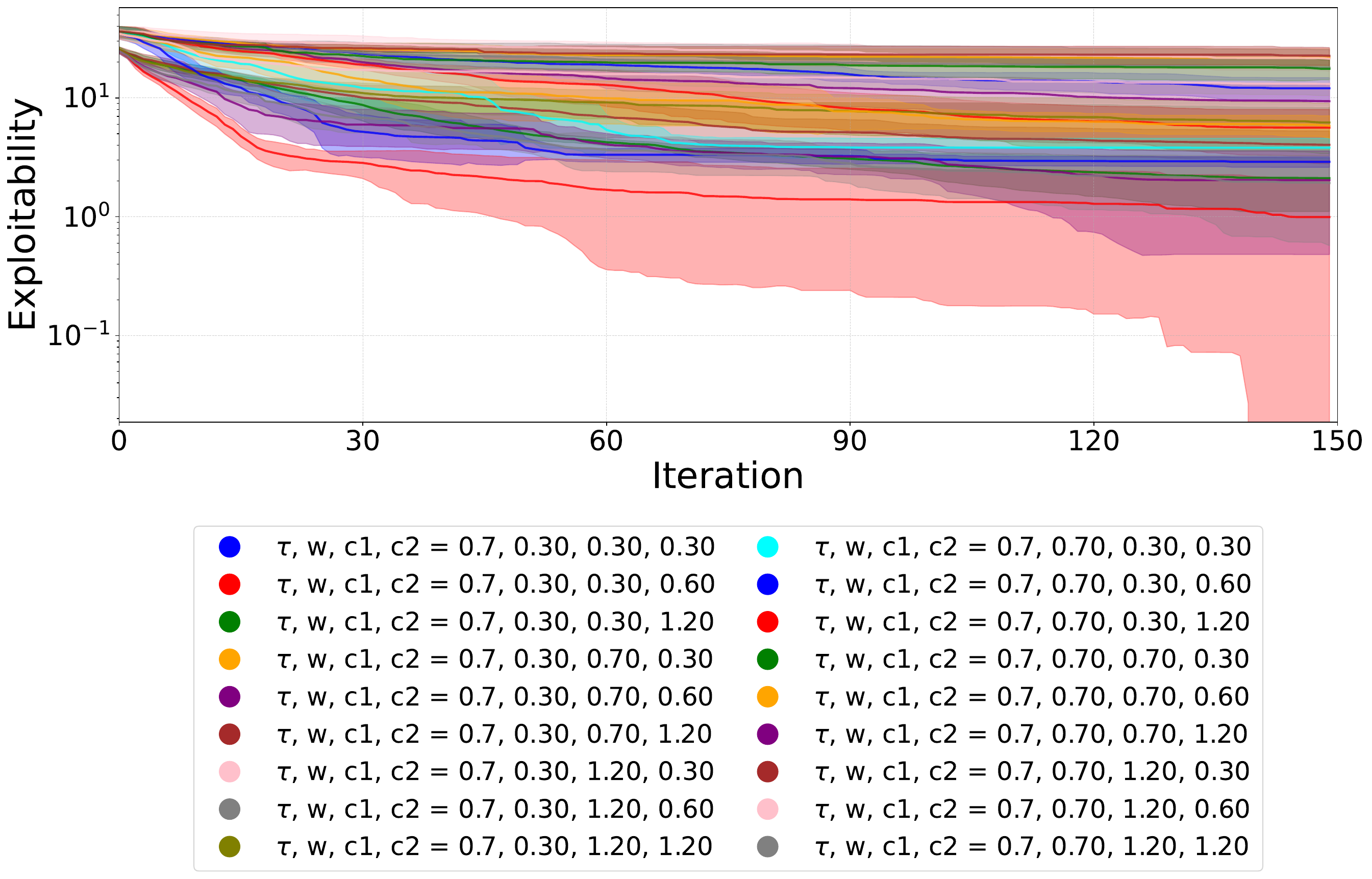}
    \caption{\textbf{LL-MFG}. Sensitivity of the algorithms wrt the hyperparameters}
    \label{fig:LL-MFG sweep}
\end{figure}

\newpage
\textbf{Multiple-Equilibria (violating LL-MFG)}
\begin{figure}[h!]
    \centering
    \includegraphics[width=0.49\linewidth]{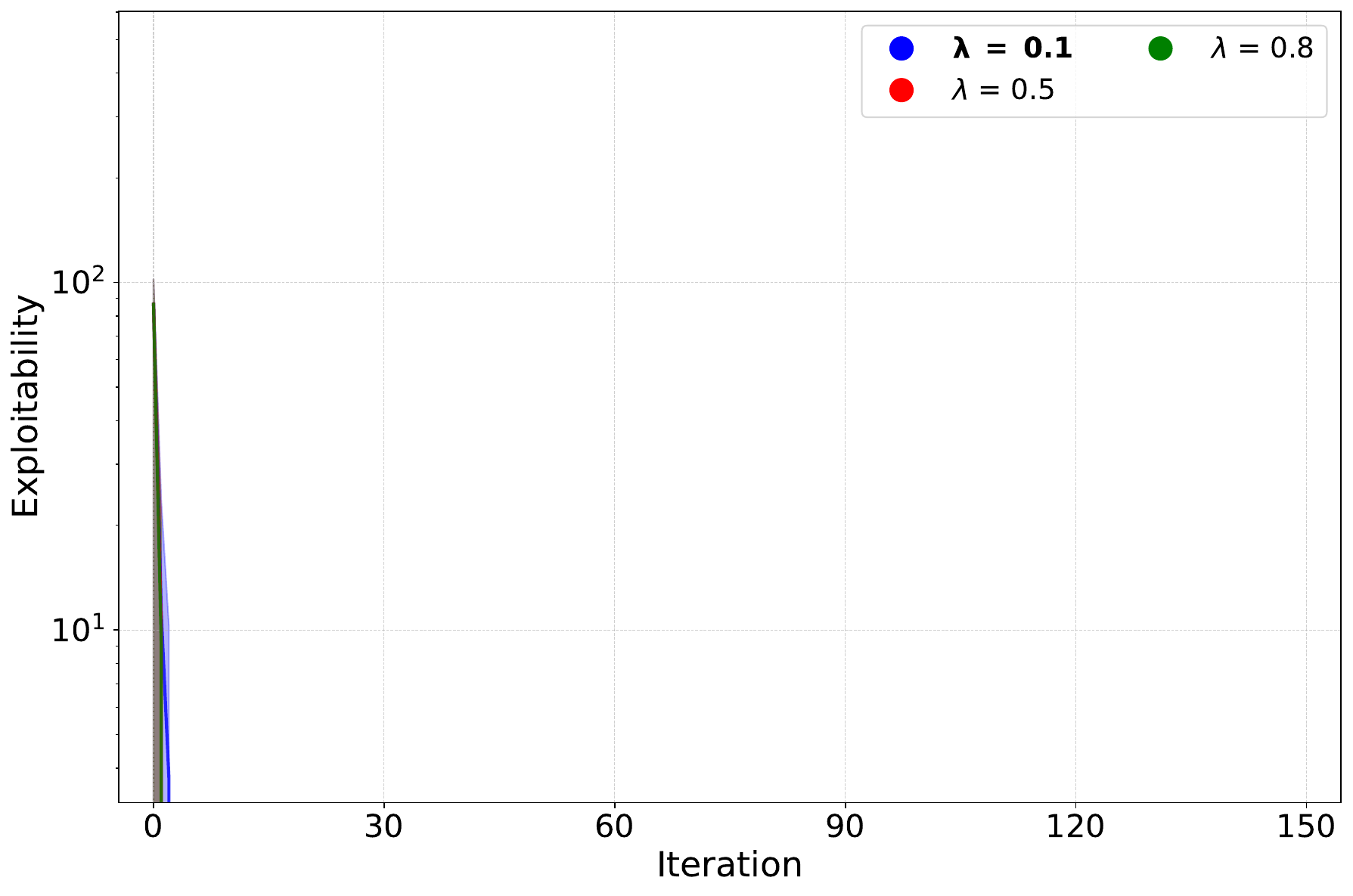}
    \includegraphics[width=0.49\linewidth]{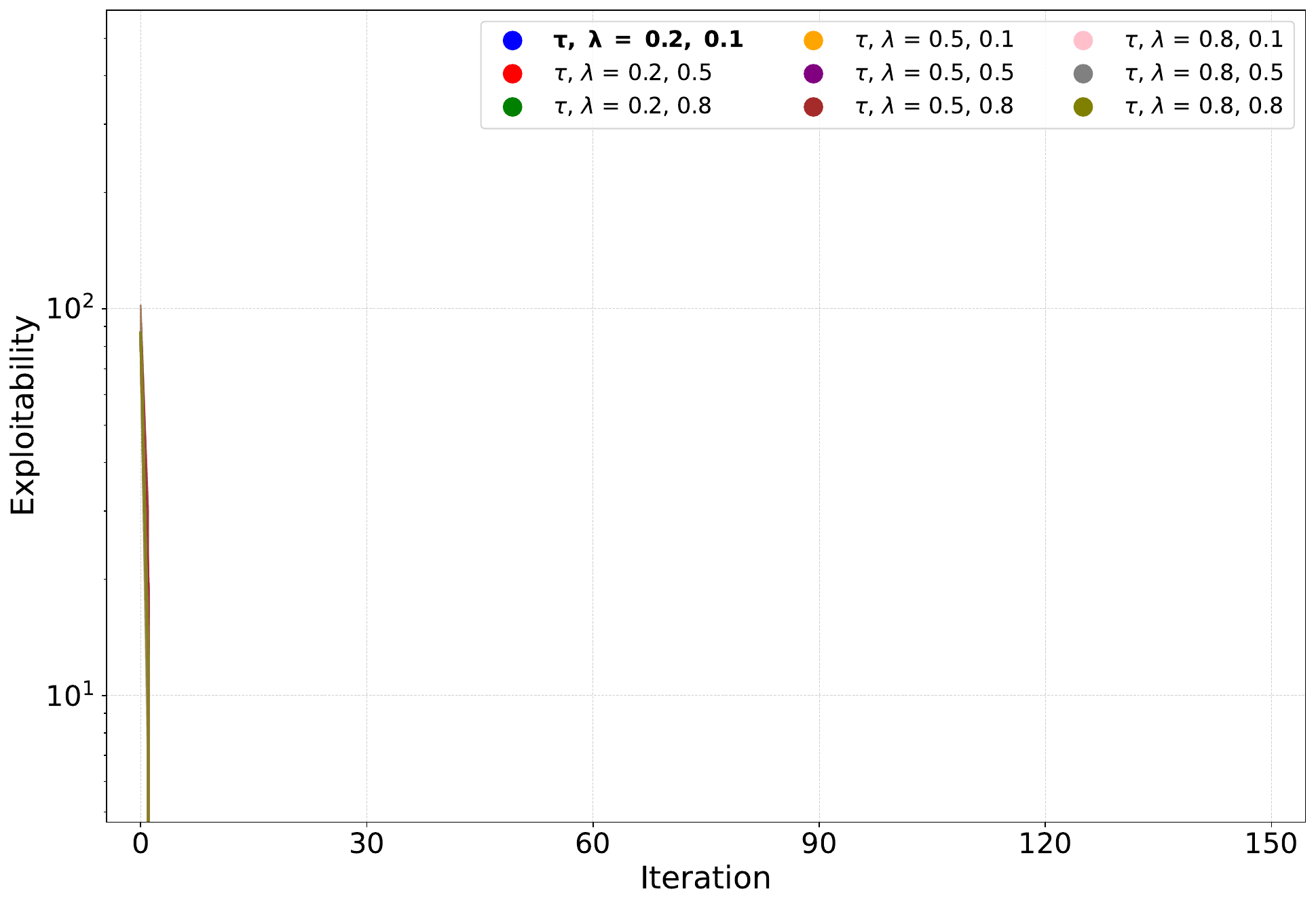}
    \includegraphics[width=0.49\linewidth]{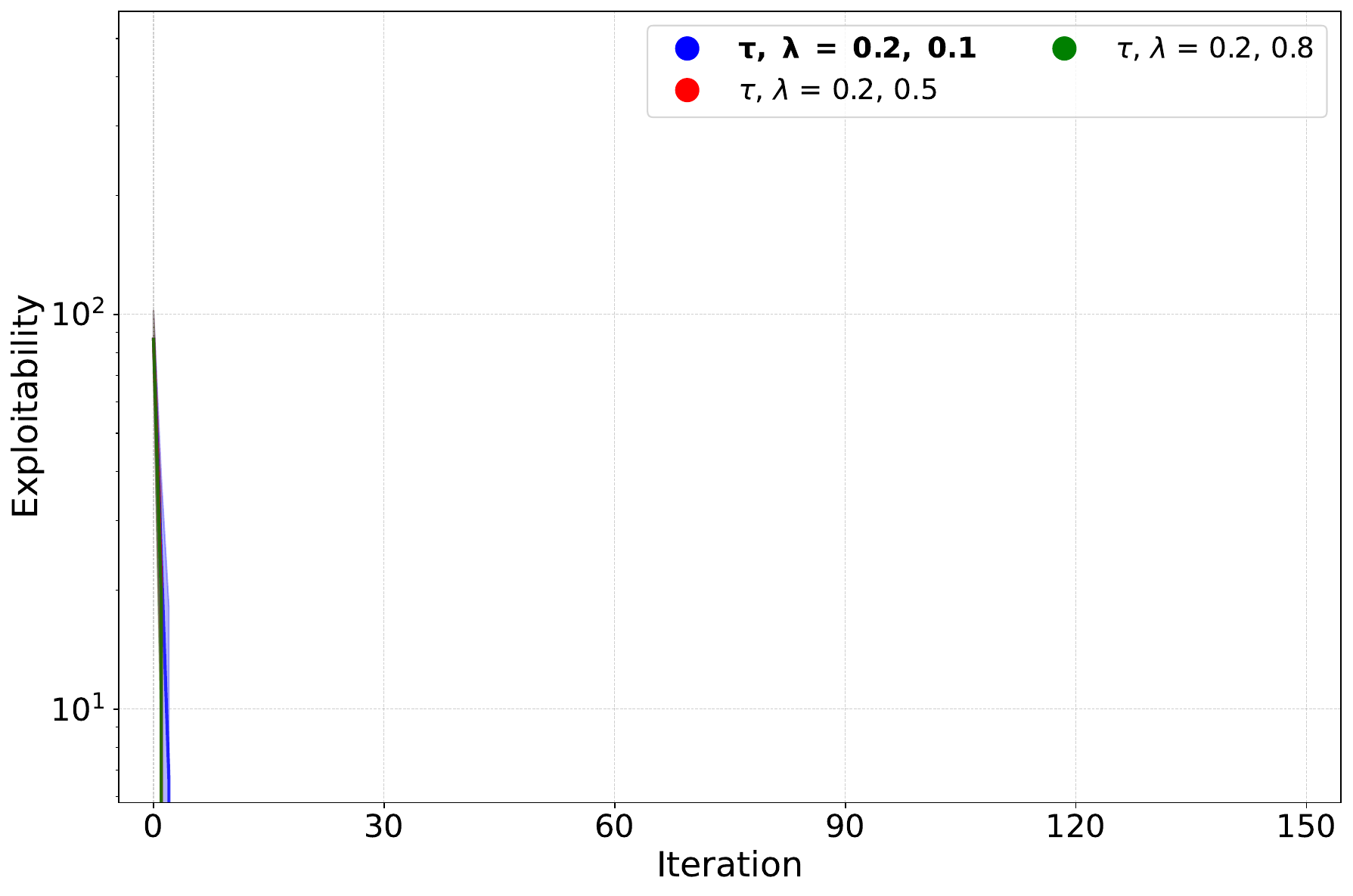}
    \includegraphics[width=0.49\linewidth]{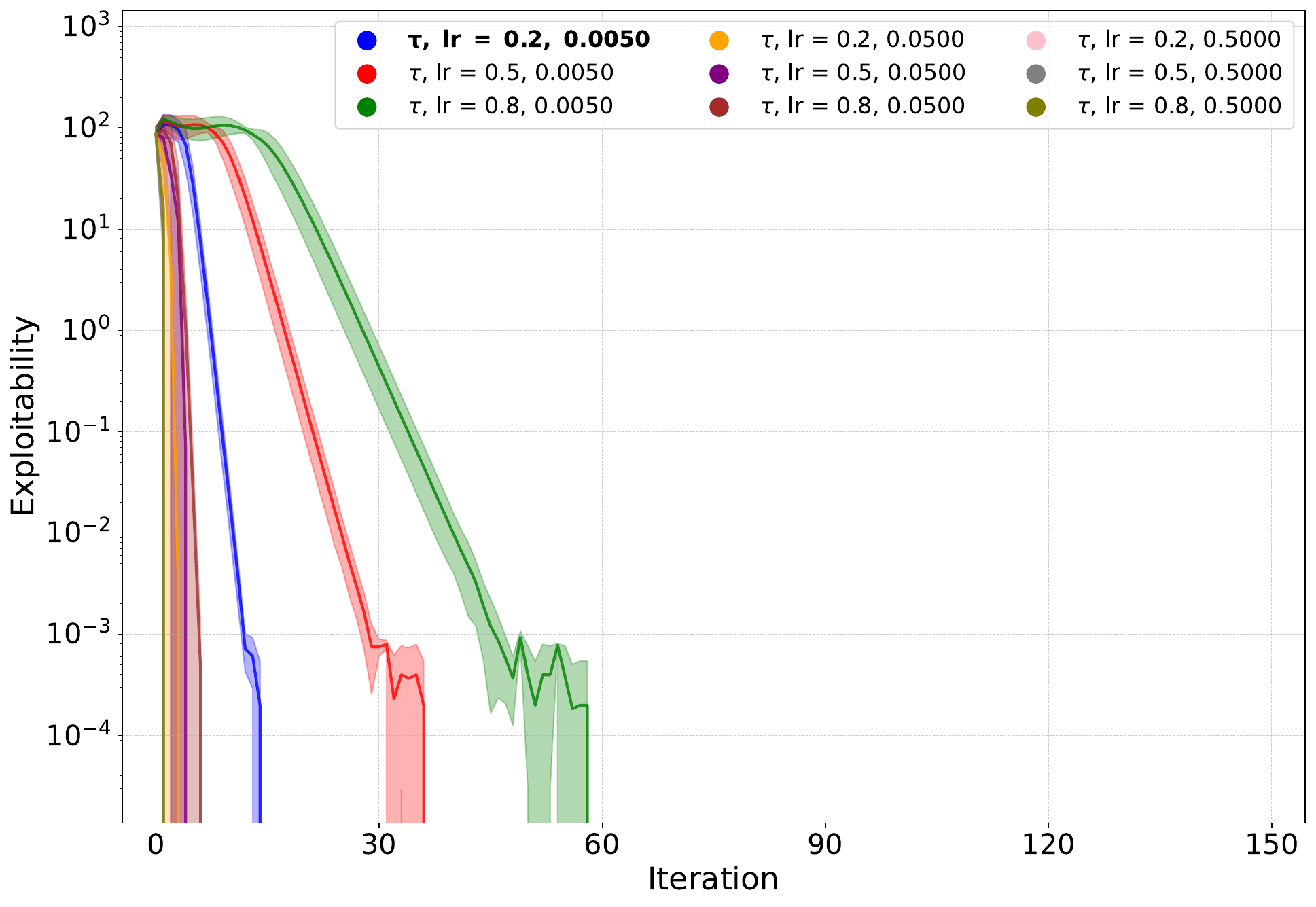}
    \includegraphics[width=0.49\linewidth]{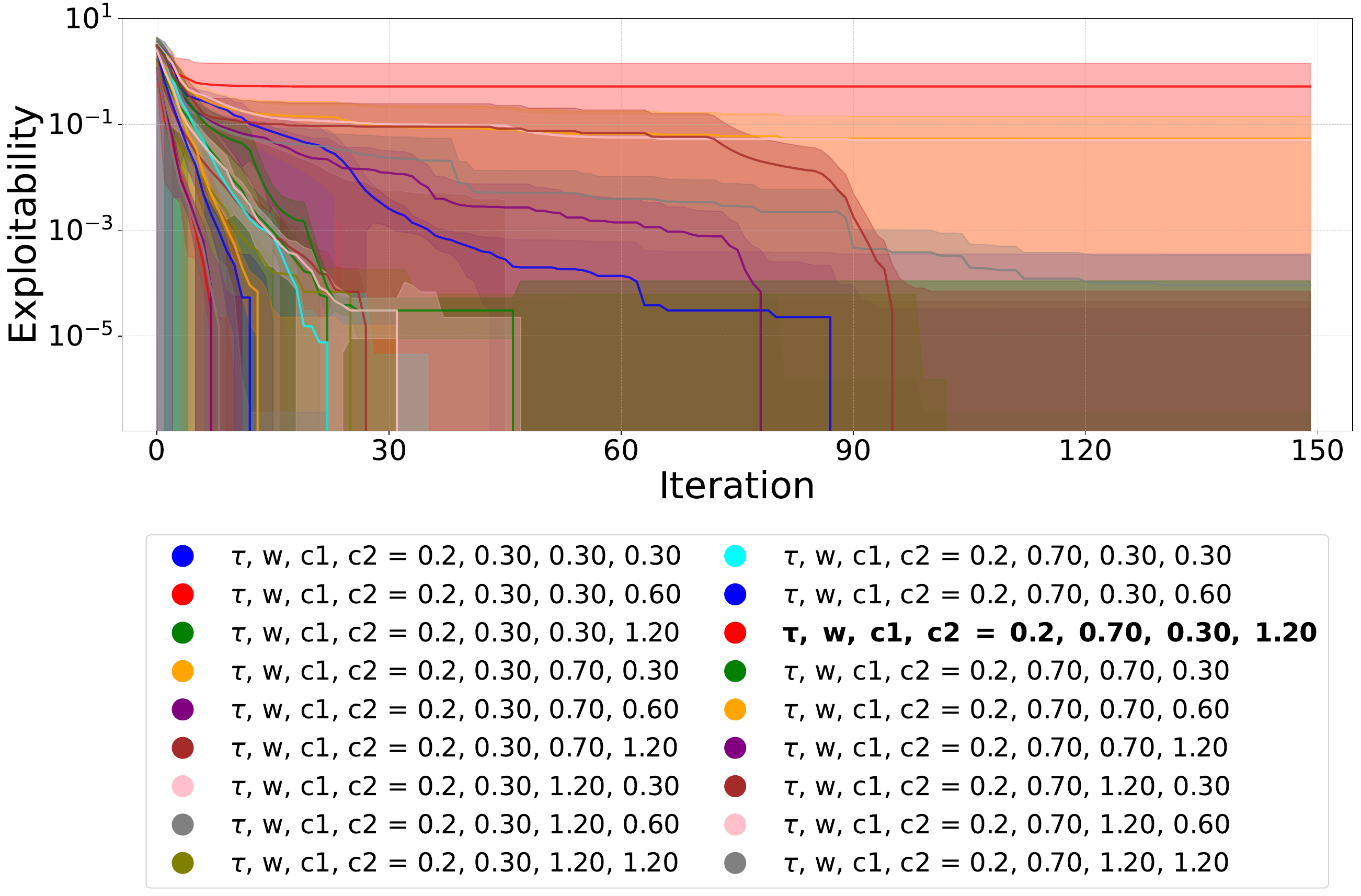}
    \includegraphics[width=0.49\linewidth]{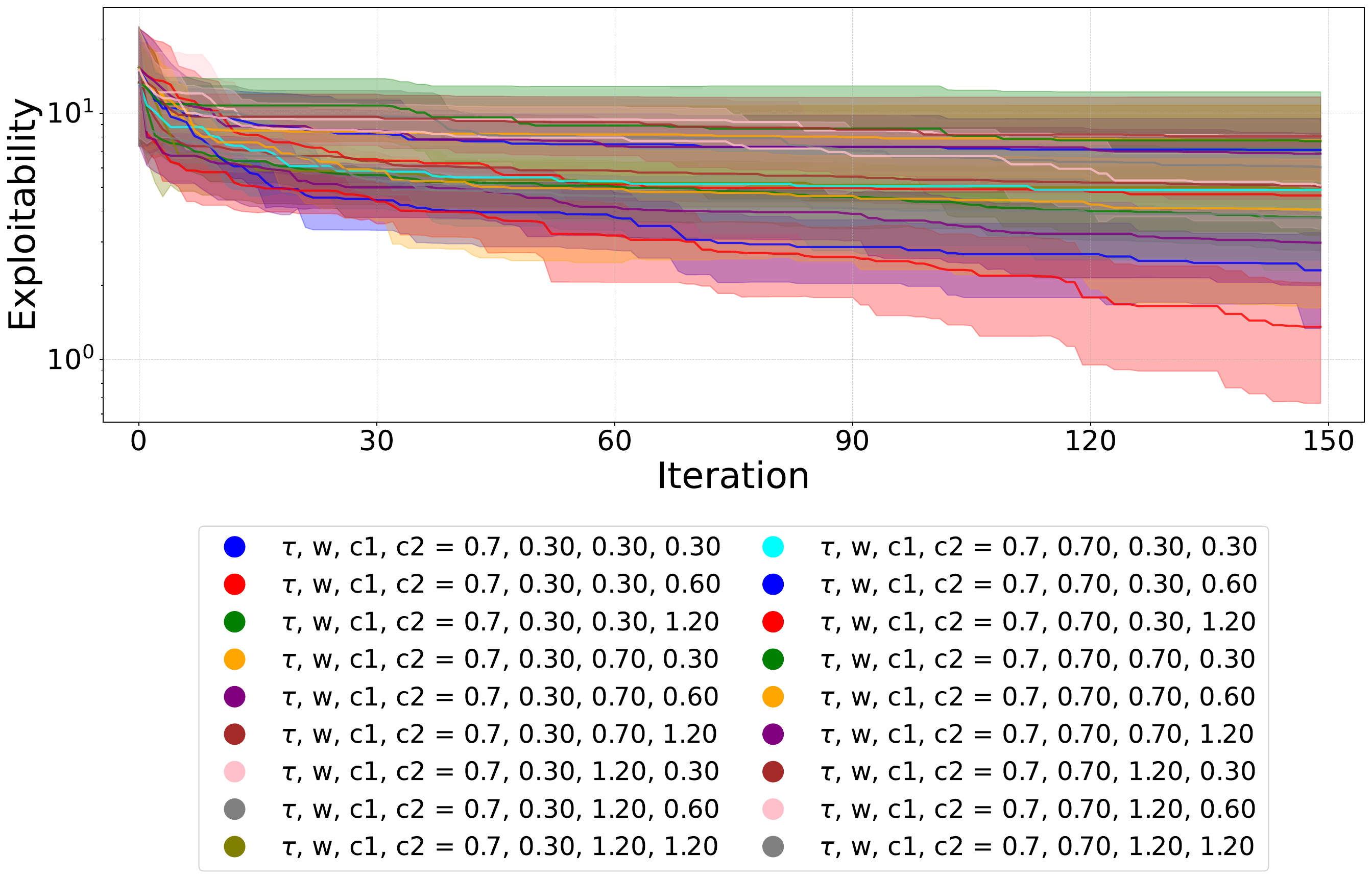}
    \caption{\textbf{Two Beach Bars}. Sensitivity of the algorithms wrt the hyperparameters}
    \label{fig:Cyclic-MFG sweep}
\end{figure}

\newpage
\subsection{Potential Game (P-MFG)}
\begin{figure}[h!]
    \centering
    \includegraphics[width=0.49\linewidth]{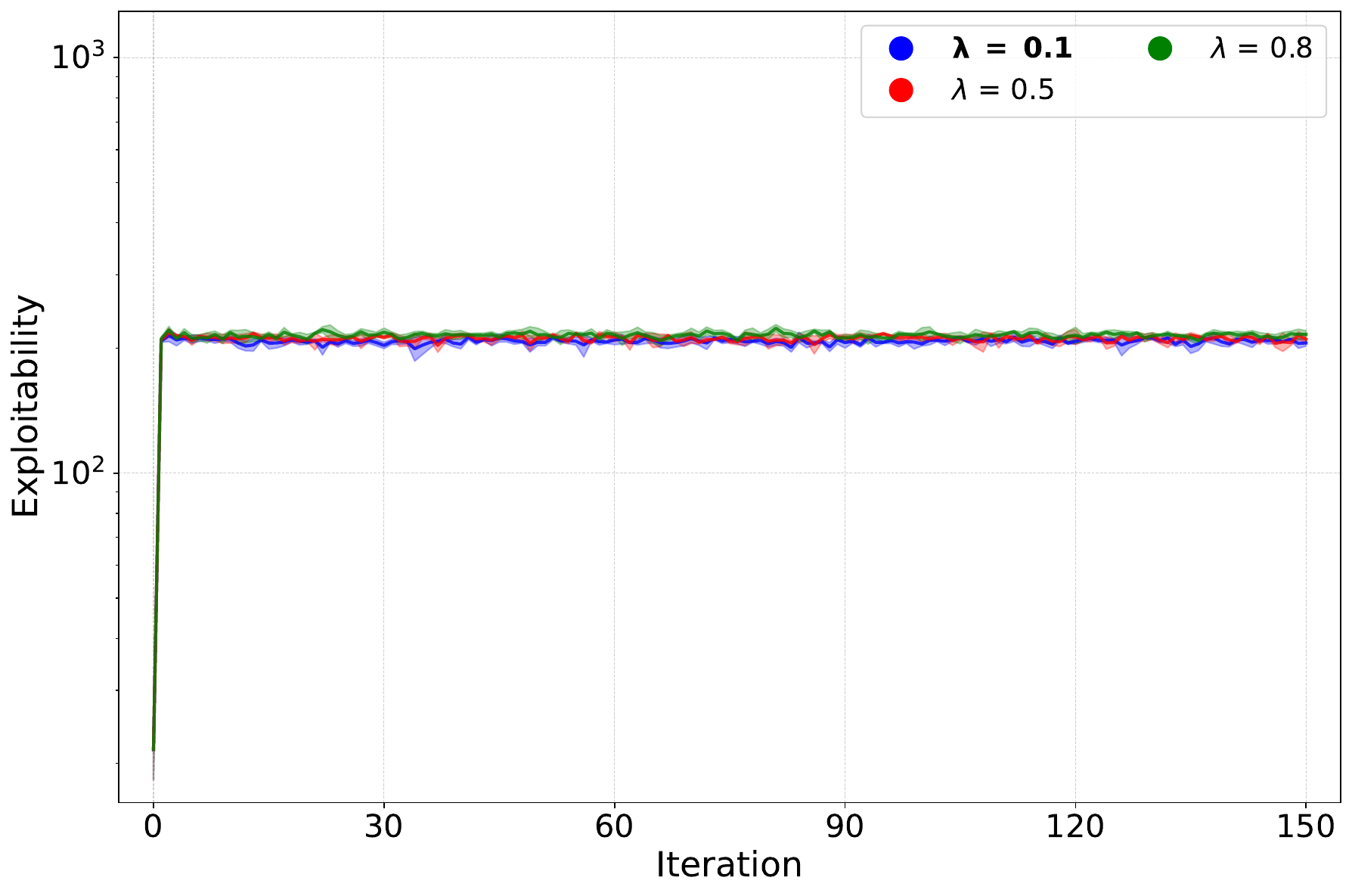}
    \includegraphics[width=0.49\linewidth]{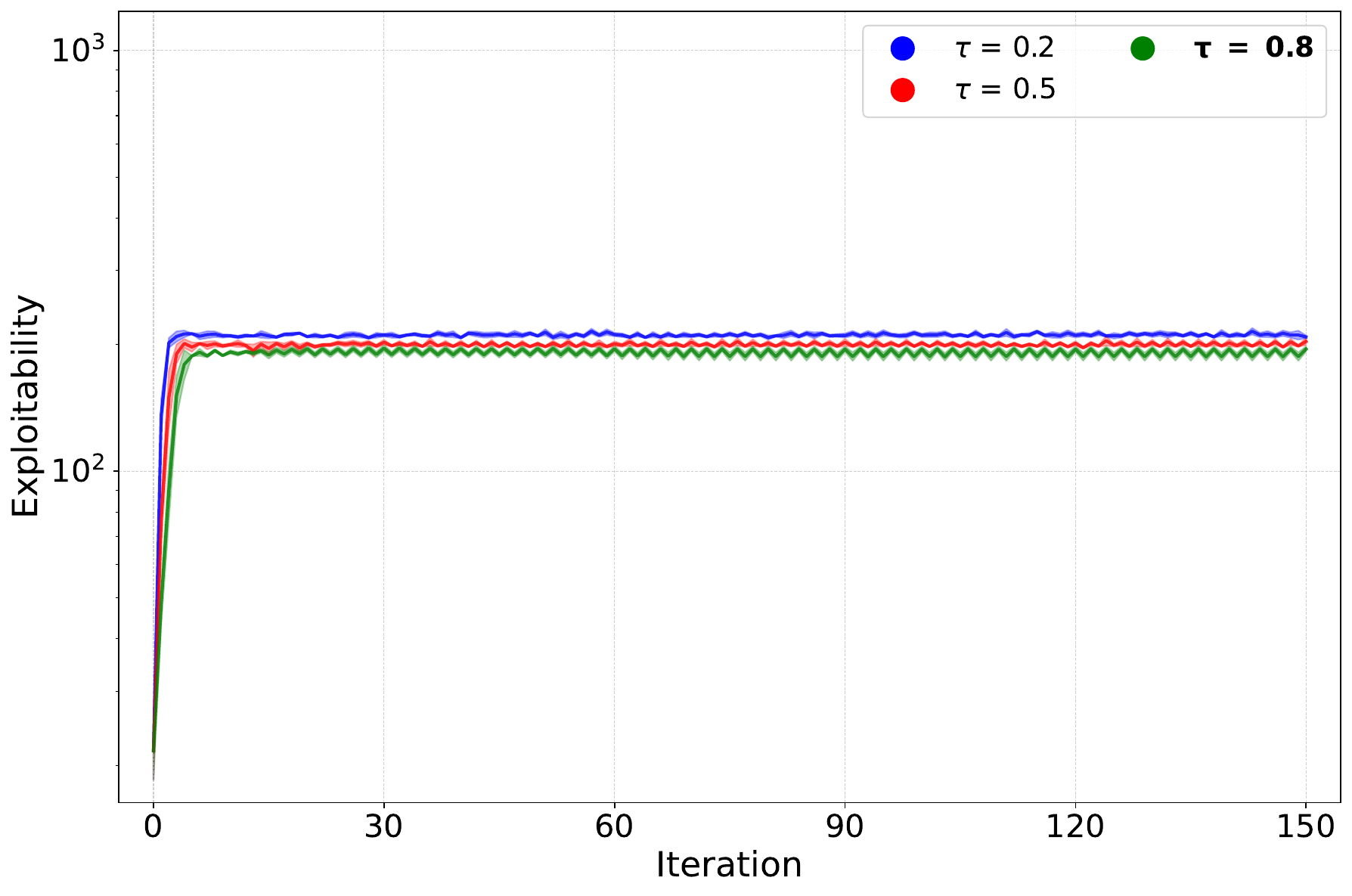}
    \includegraphics[width=0.49\linewidth]{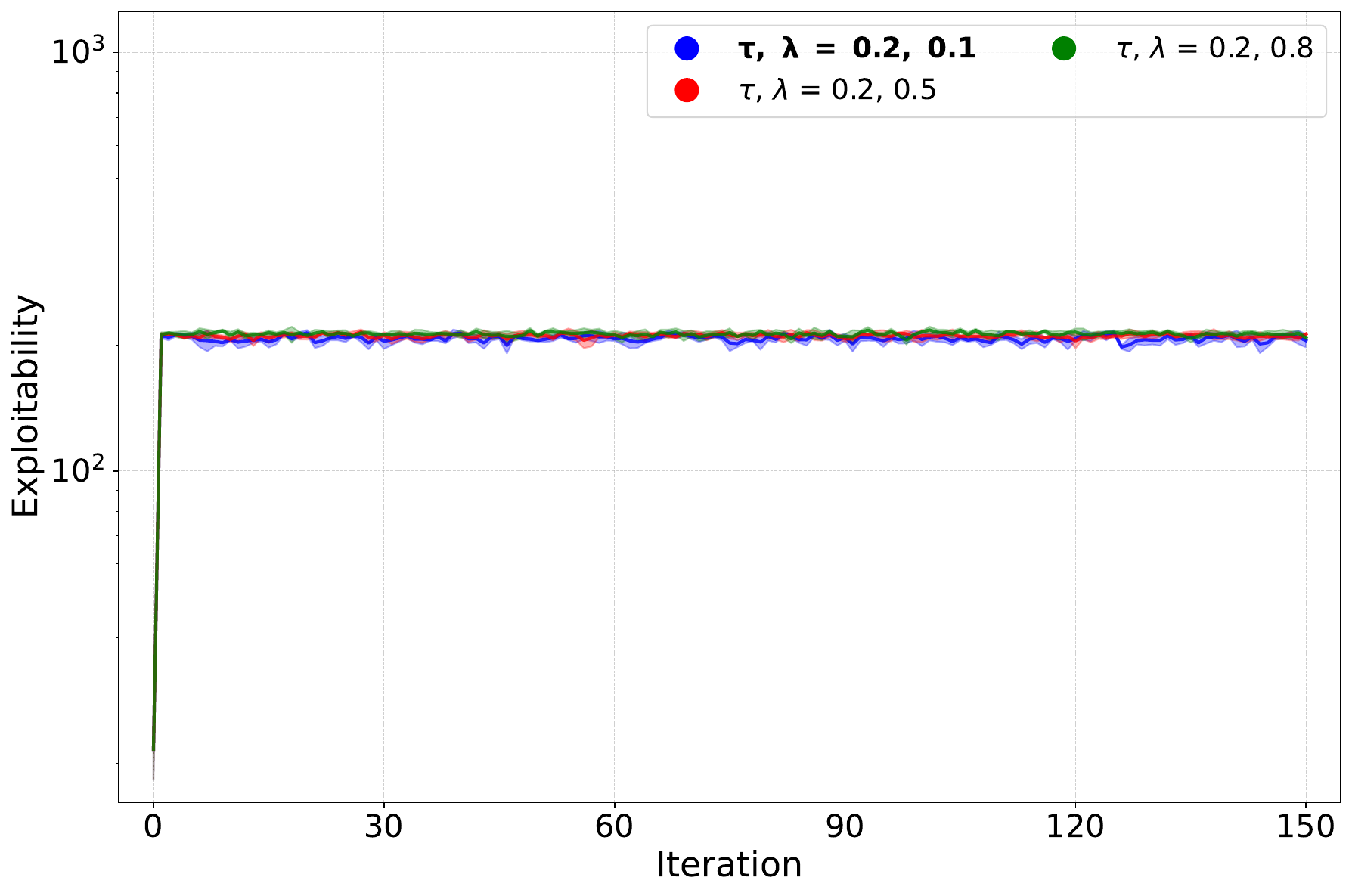}
    \includegraphics[width=0.49\linewidth]{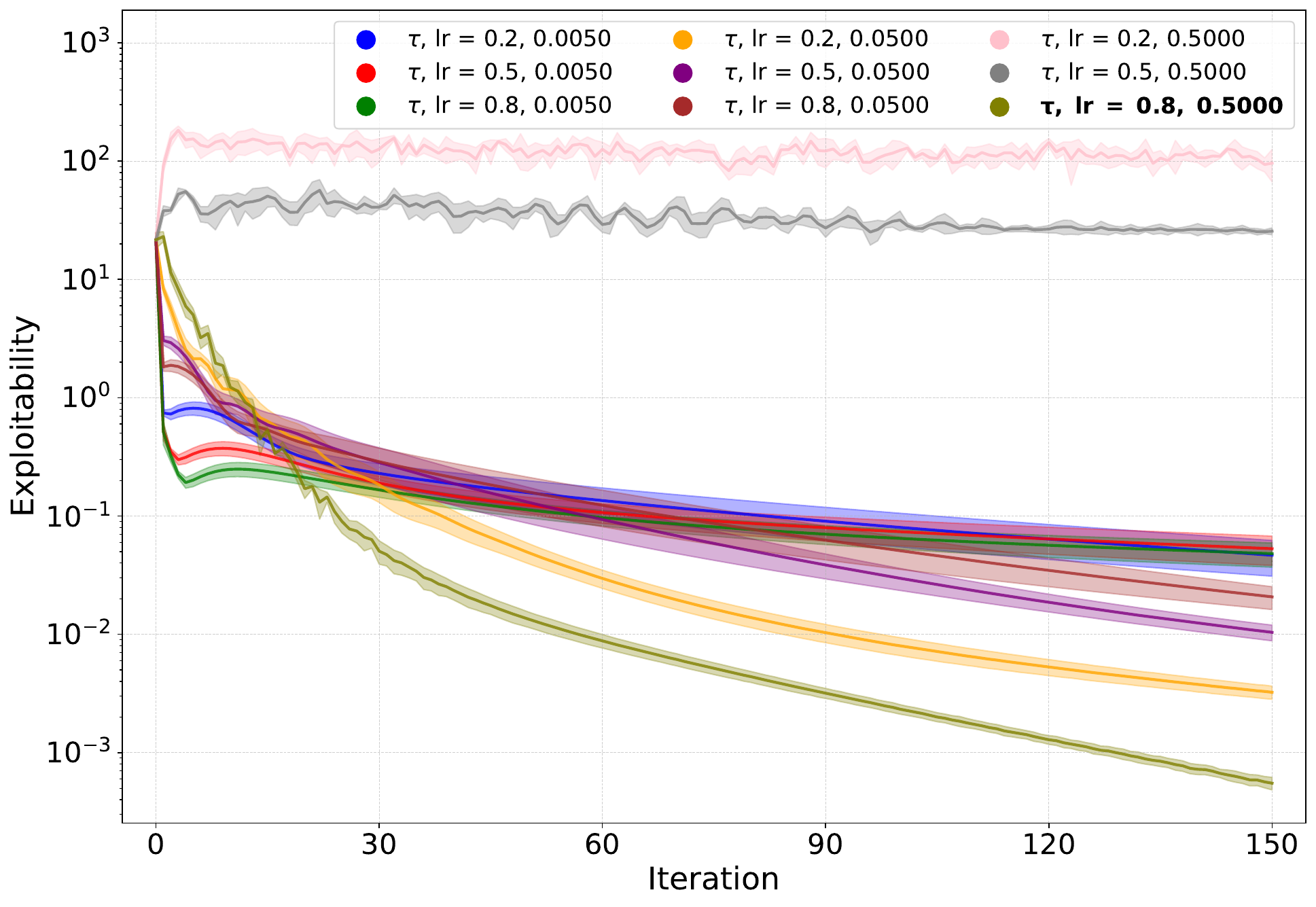}
    \includegraphics[width=0.49\linewidth]{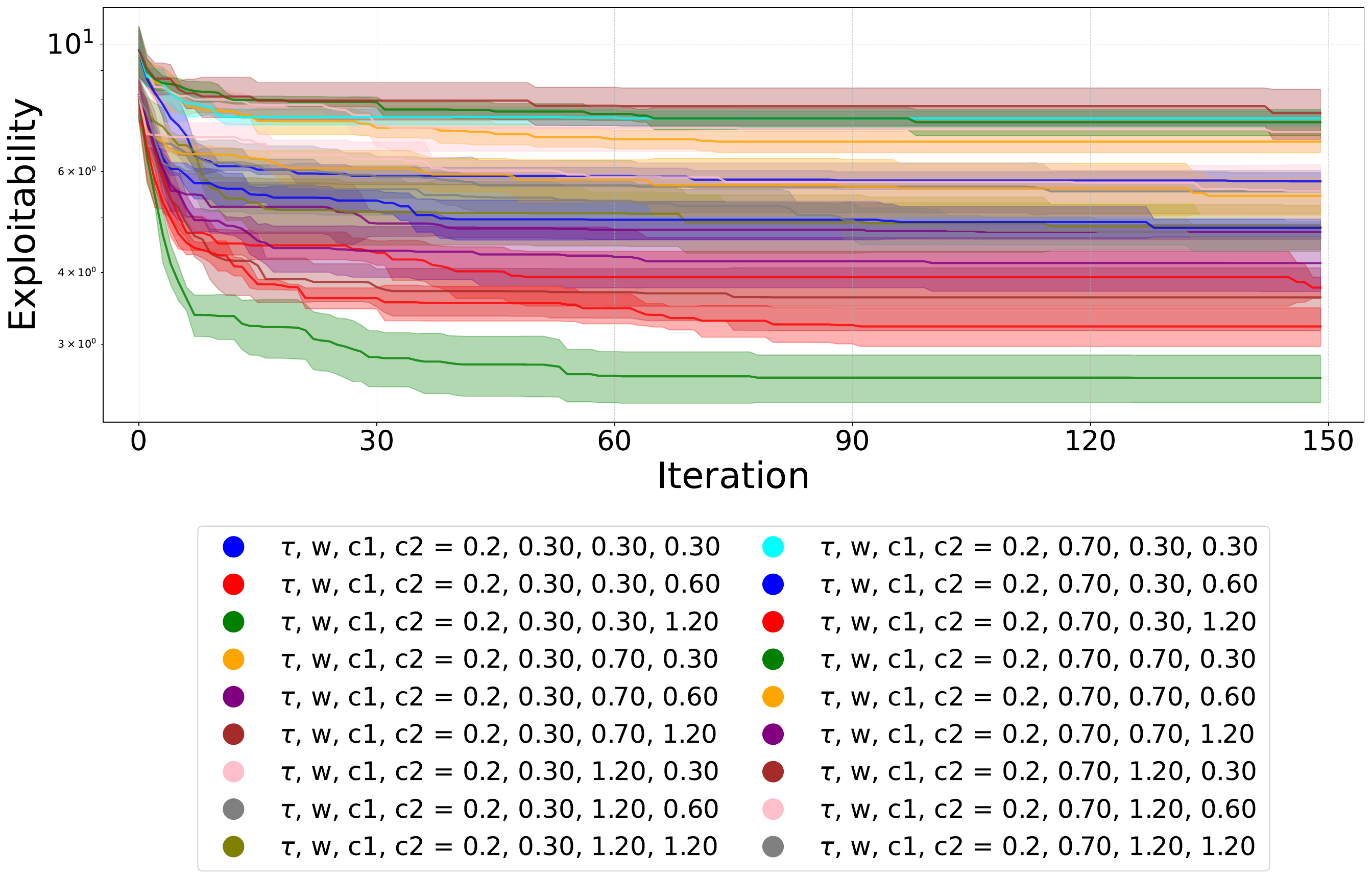}
    \includegraphics[width=0.49\linewidth]{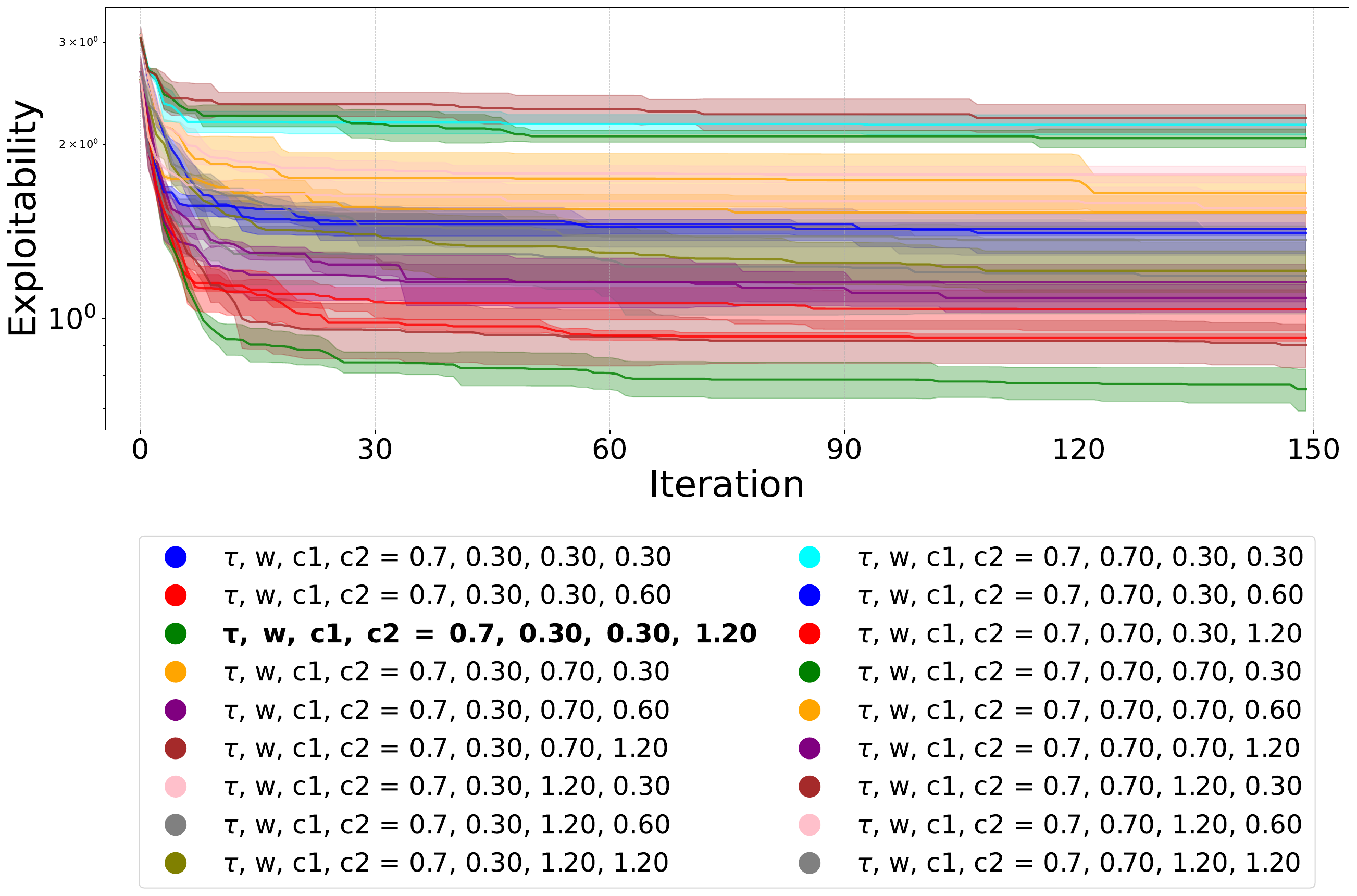}
    \caption{\textbf{4 Rooms Exploration}. Sensitivity of the algorithms wrt the hyperparameters}
    \label{fig:P-MFG sweep}
\end{figure}

\newpage
\textbf{Cyclic Game (non-Potential)}
\begin{figure}[h!]
    \centering
    \includegraphics[width=0.49\linewidth]{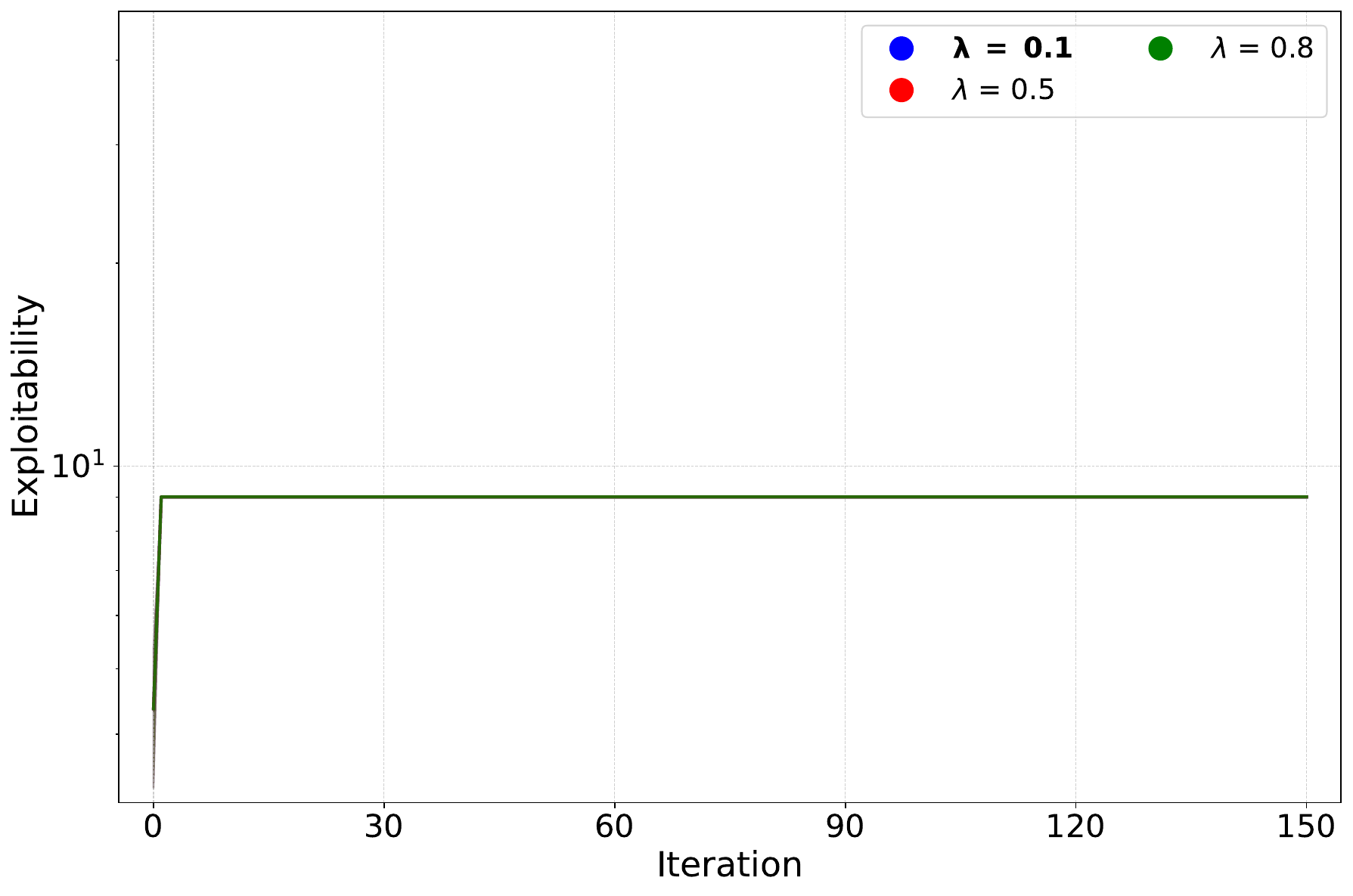}
    \includegraphics[width=0.49\linewidth]{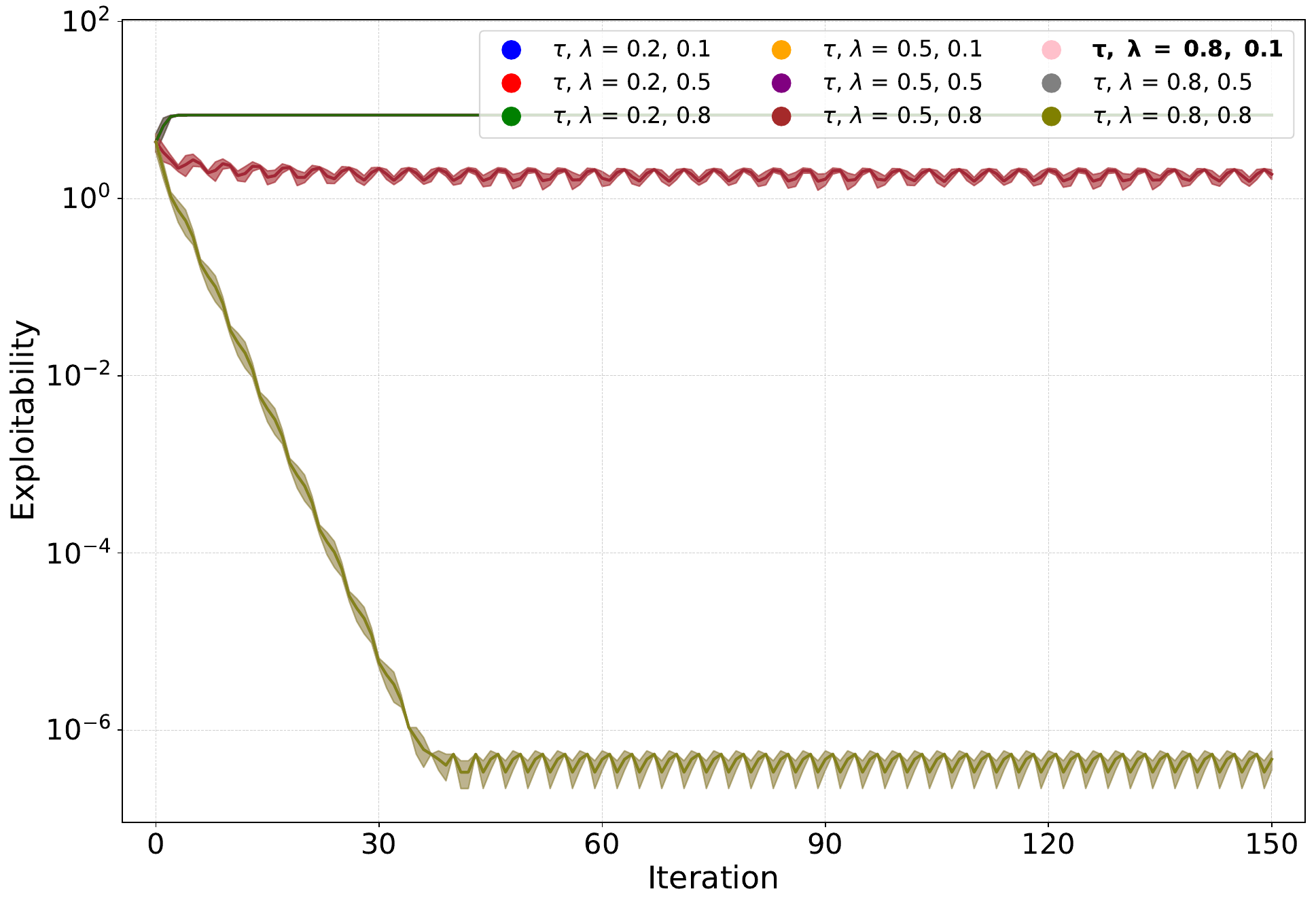}
    \includegraphics[width=0.49\linewidth]{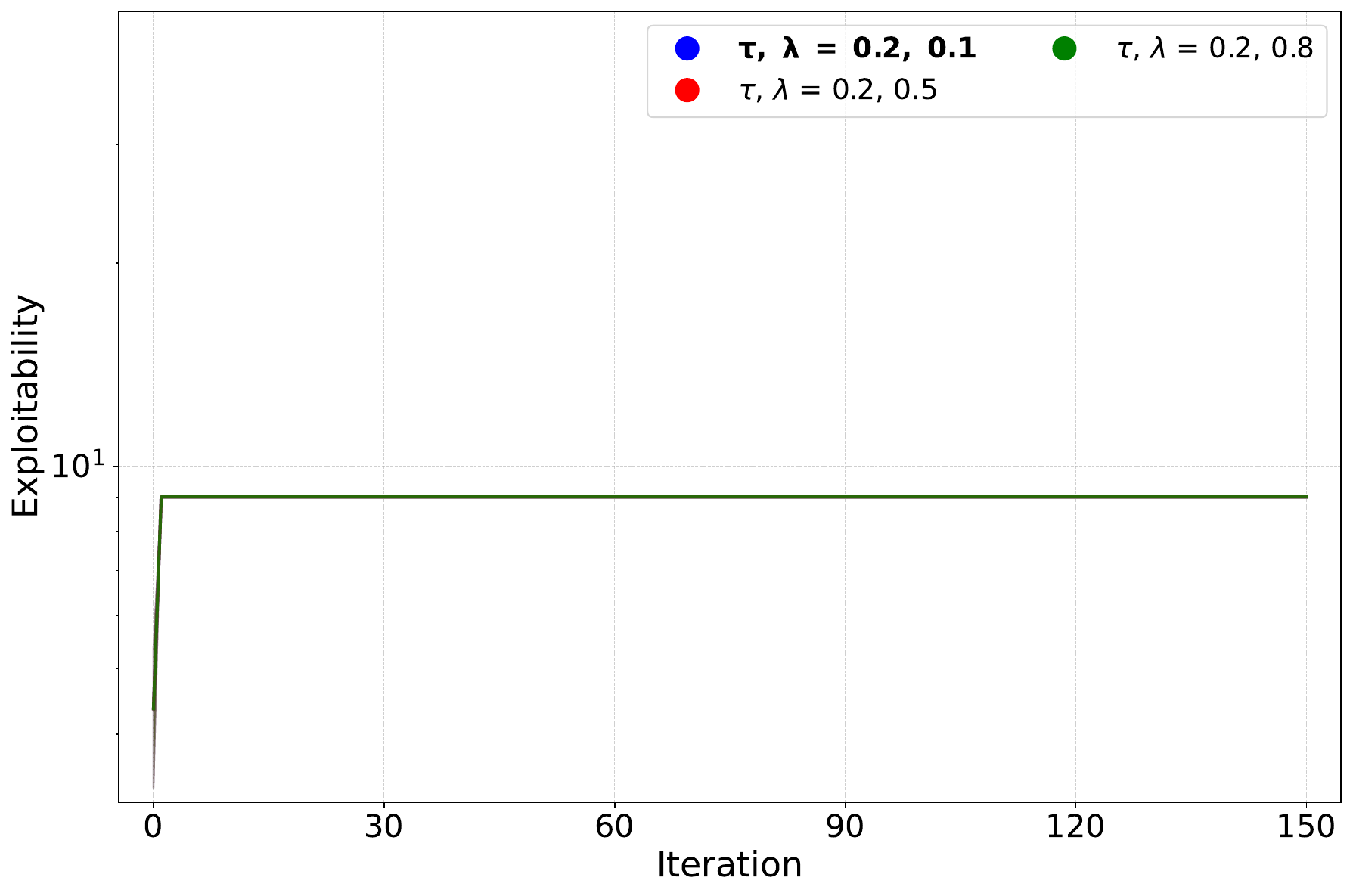}
    \includegraphics[width=0.49\linewidth]{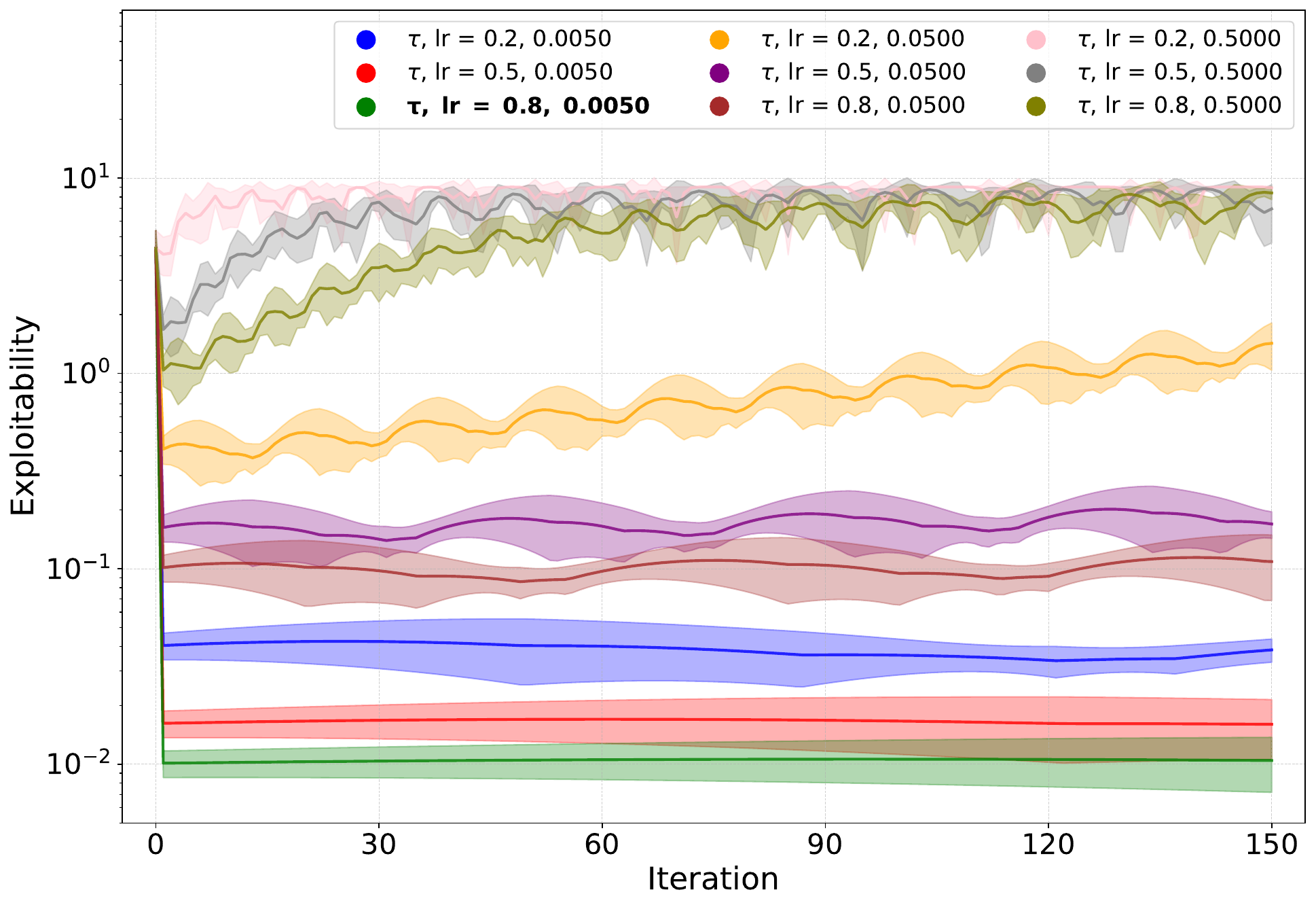}
    \includegraphics[width=0.49\linewidth]{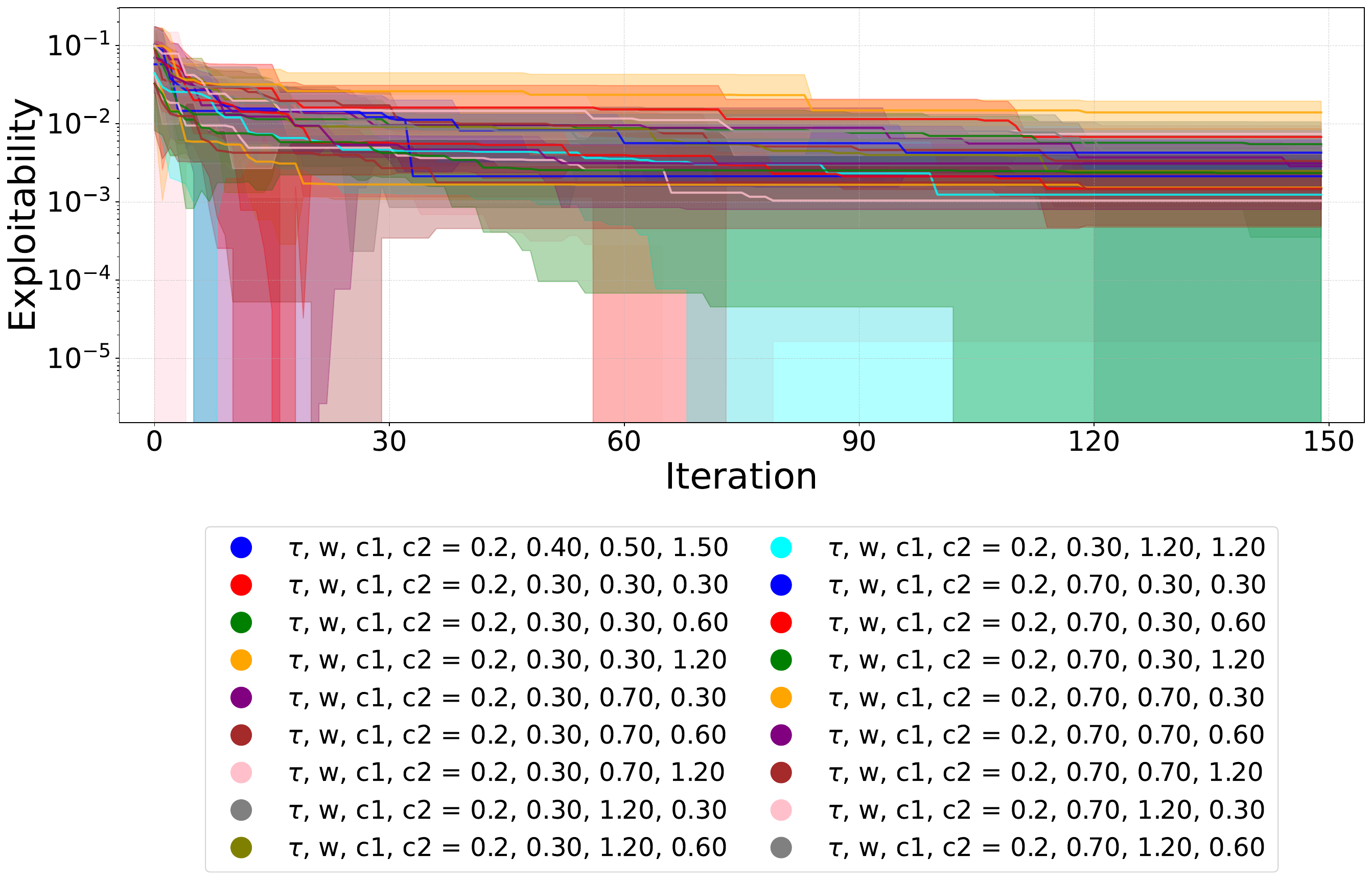}
    \includegraphics[width=0.49\linewidth]{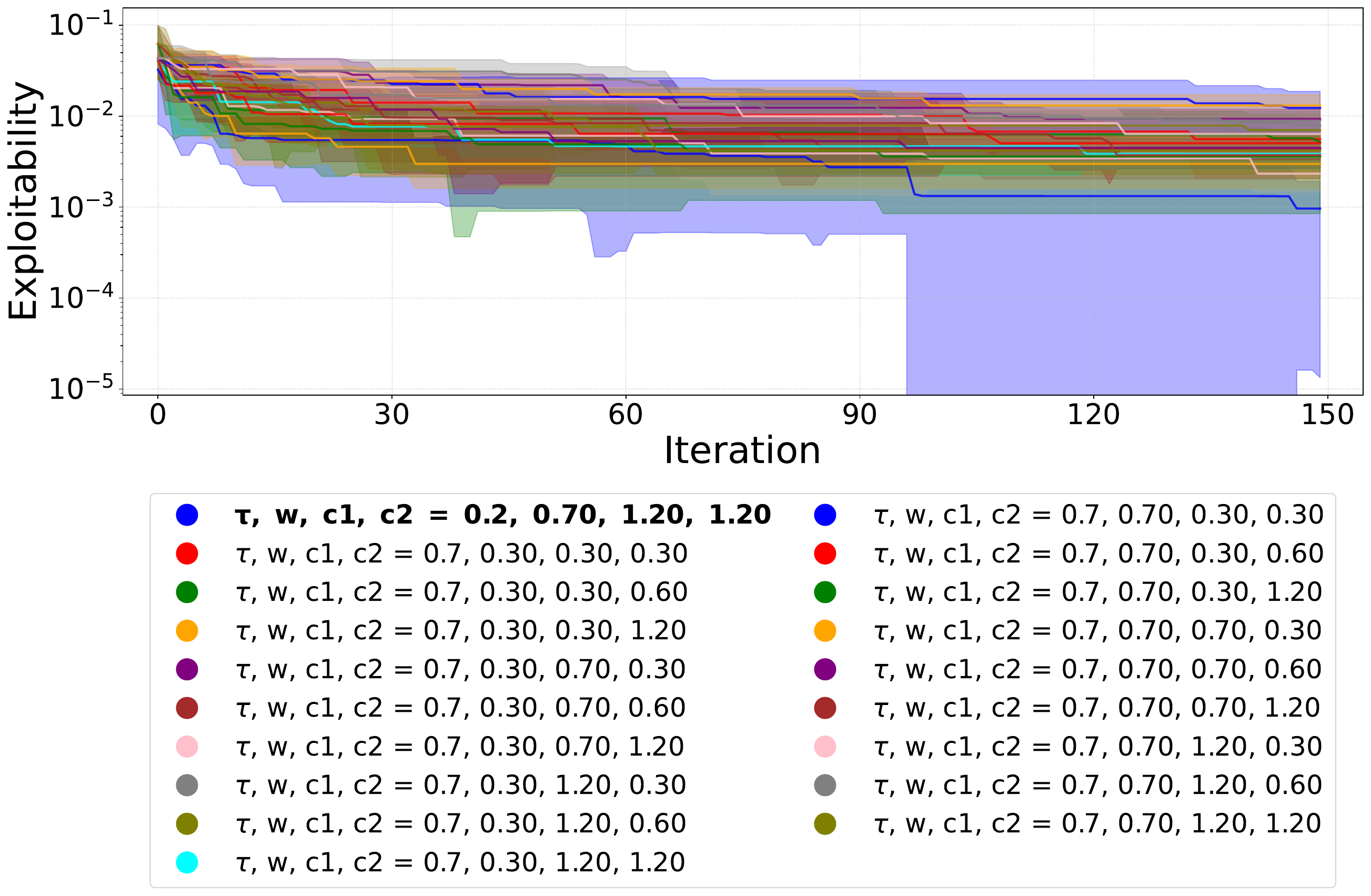}
    \caption{\textbf{Rock Paper Scissor}. Sensitivity of the algorithms wrt the hyperparameters}
    \label{fig:Cyclic-MFG sweep}
\end{figure}

\newpage
\subsection{Dynamics-Coupled MFG (DC-MFG)}
\begin{figure}[h!]
    \centering
    \includegraphics[width=0.49\linewidth]{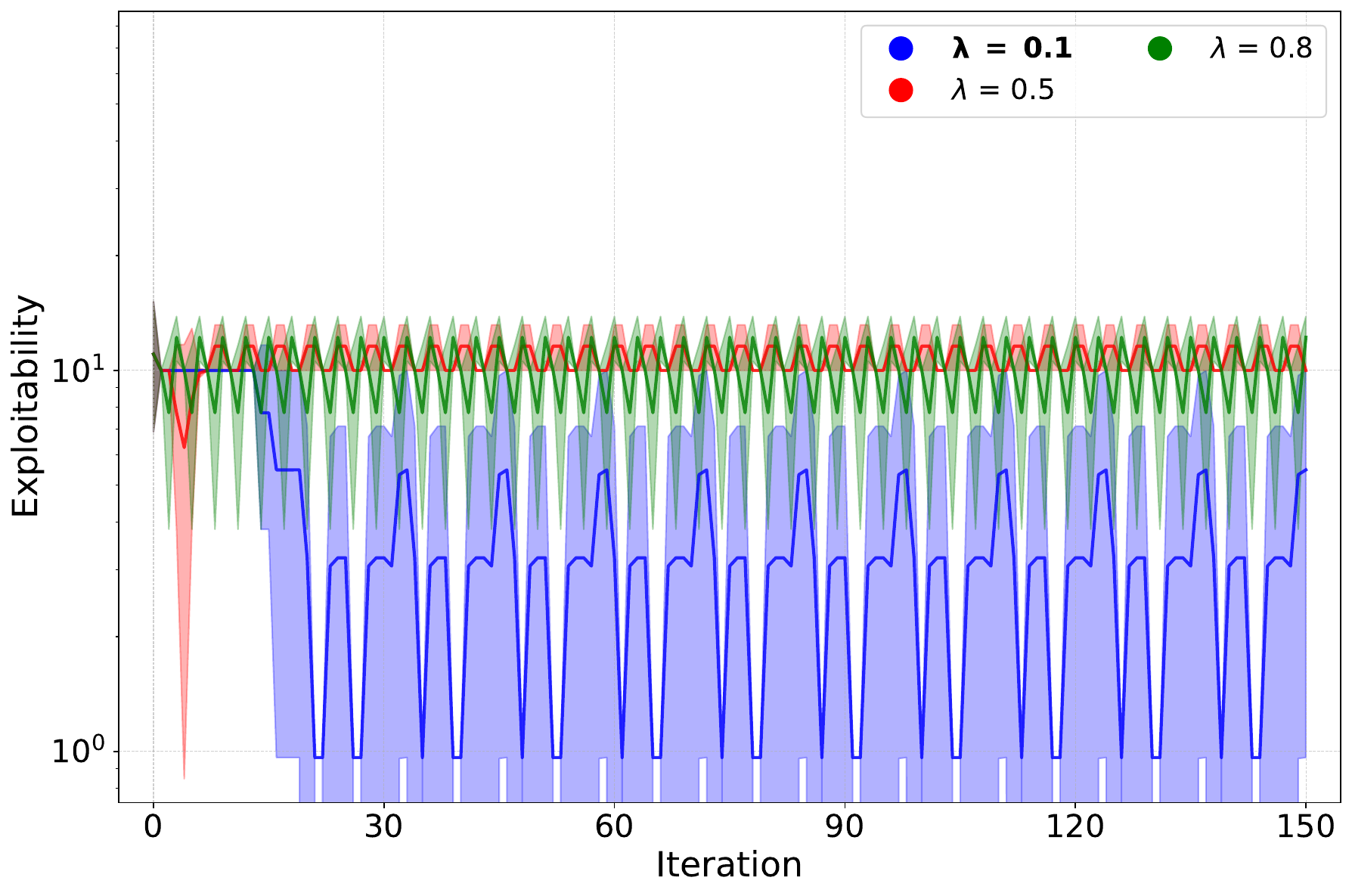}
    \includegraphics[width=0.49\linewidth]{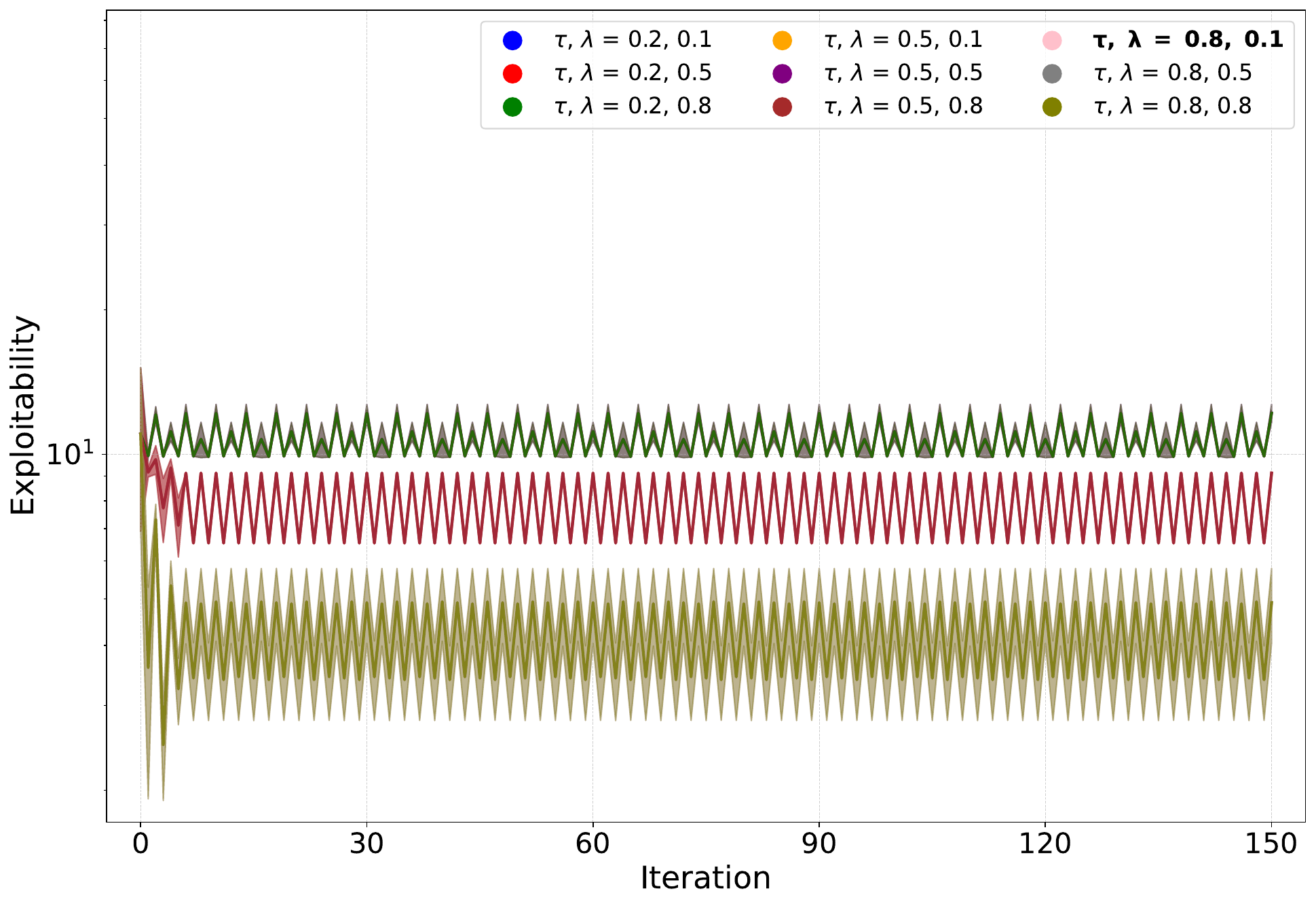}
    \includegraphics[width=0.49\linewidth]{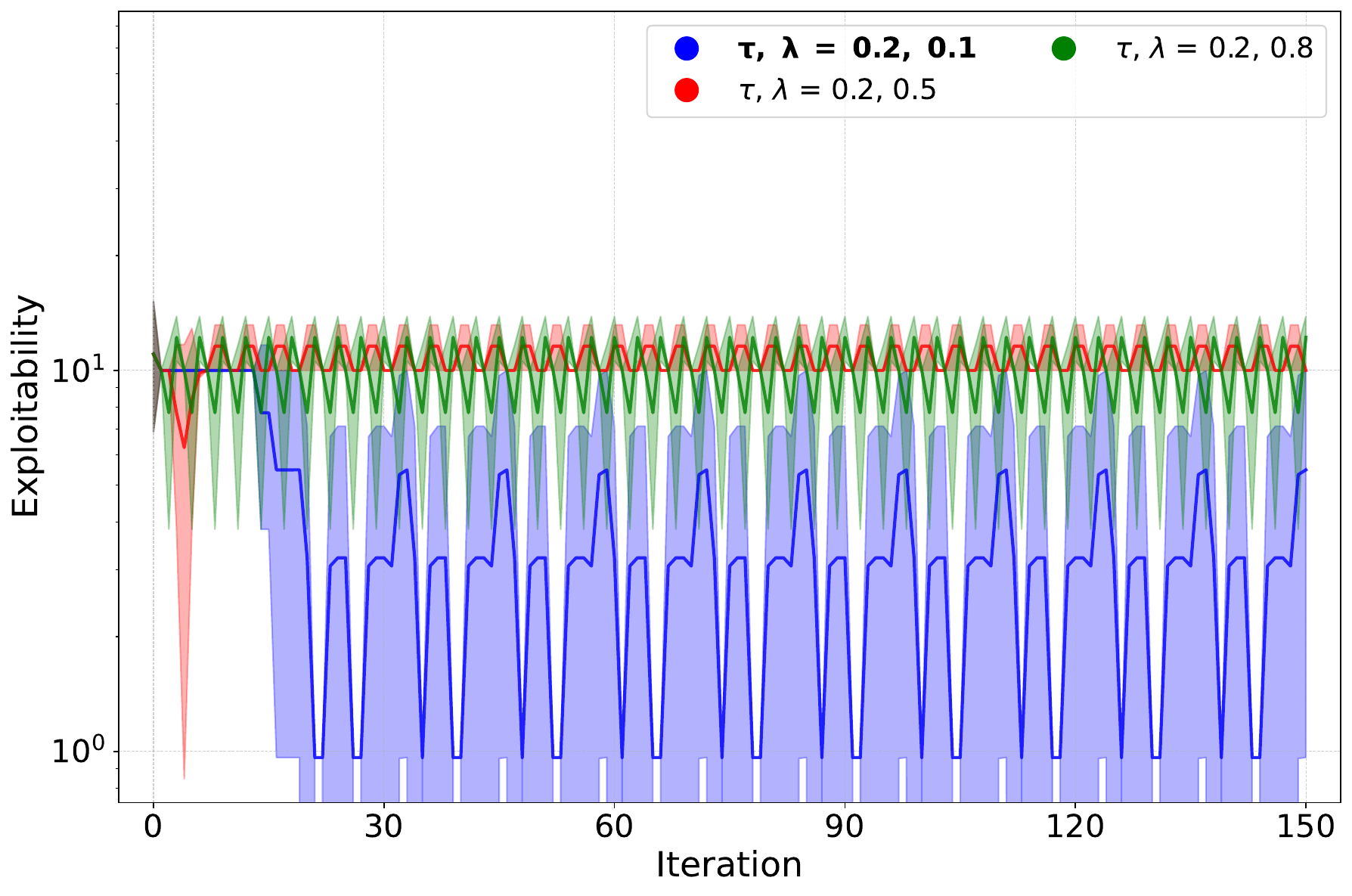}
    \includegraphics[width=0.49\linewidth]{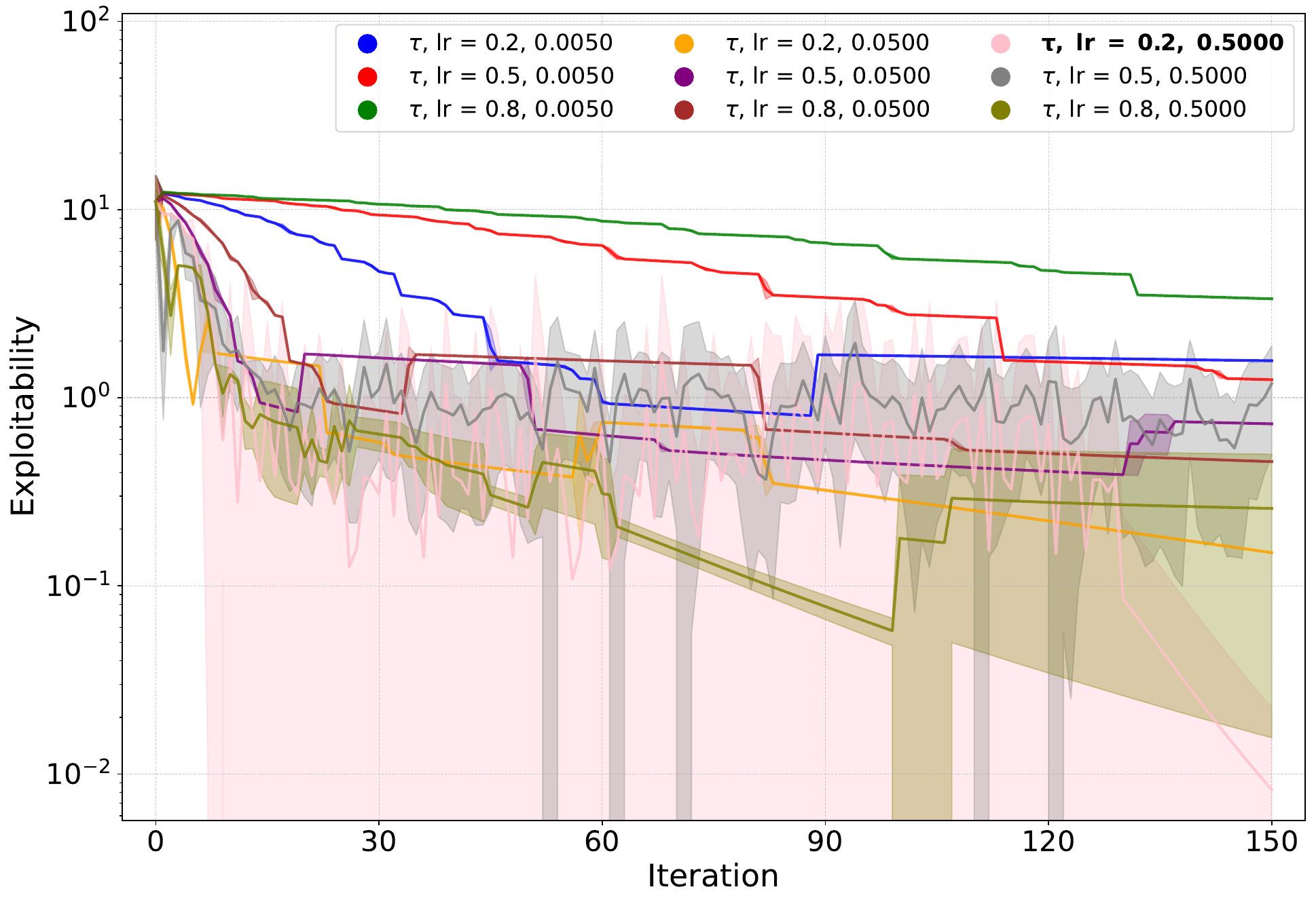}
    \includegraphics[width=0.49\linewidth]{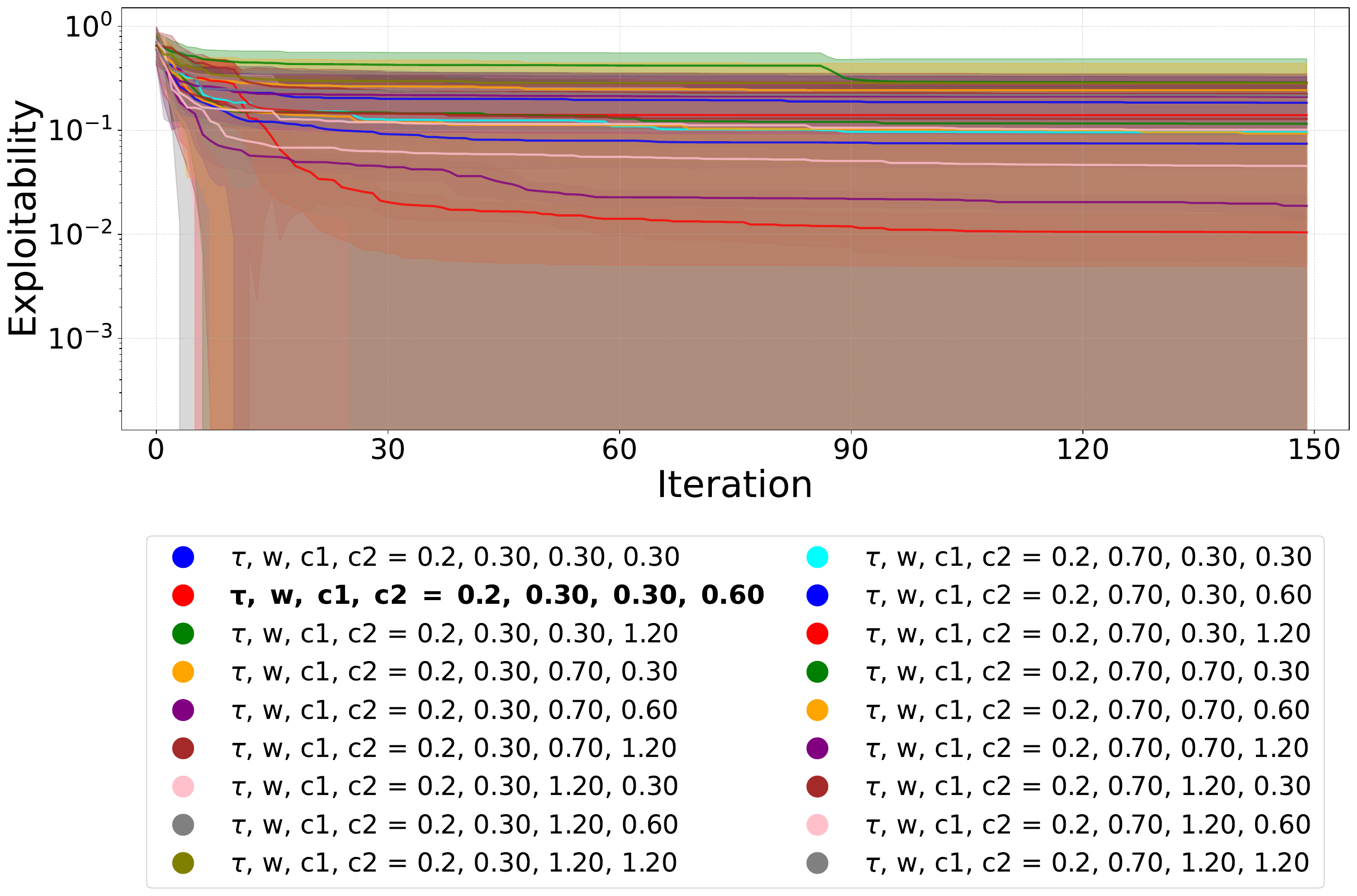}
    \includegraphics[width=0.49\linewidth]{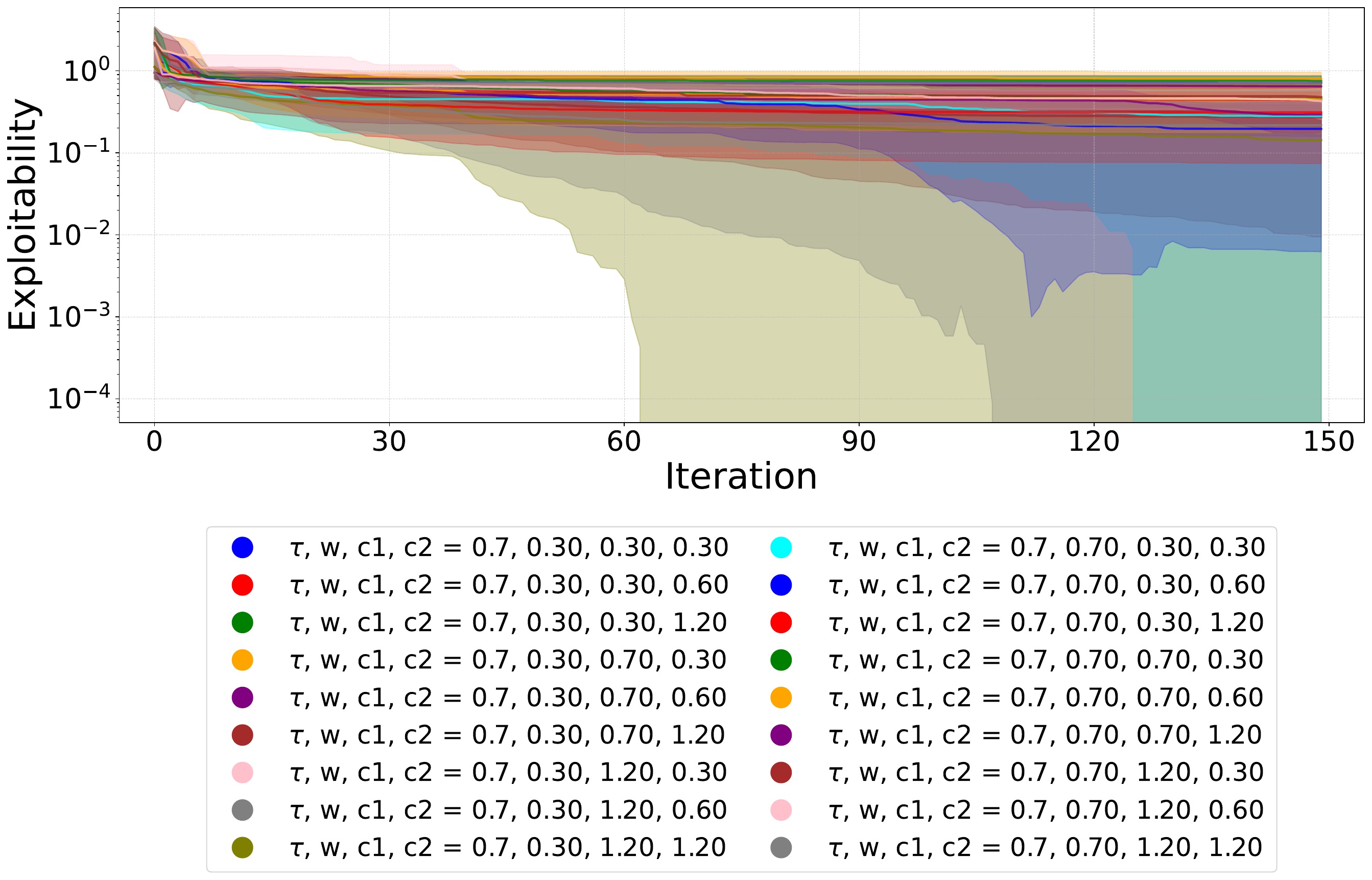}
    \caption{\textbf{SISEpidemic}. Sensitivity of the algorithms wrt the hyperparameters}
    \label{fig:SISEpidemic_sweep}
\end{figure}

\begin{figure}[h!]
    \centering
    \includegraphics[width=0.49\linewidth]{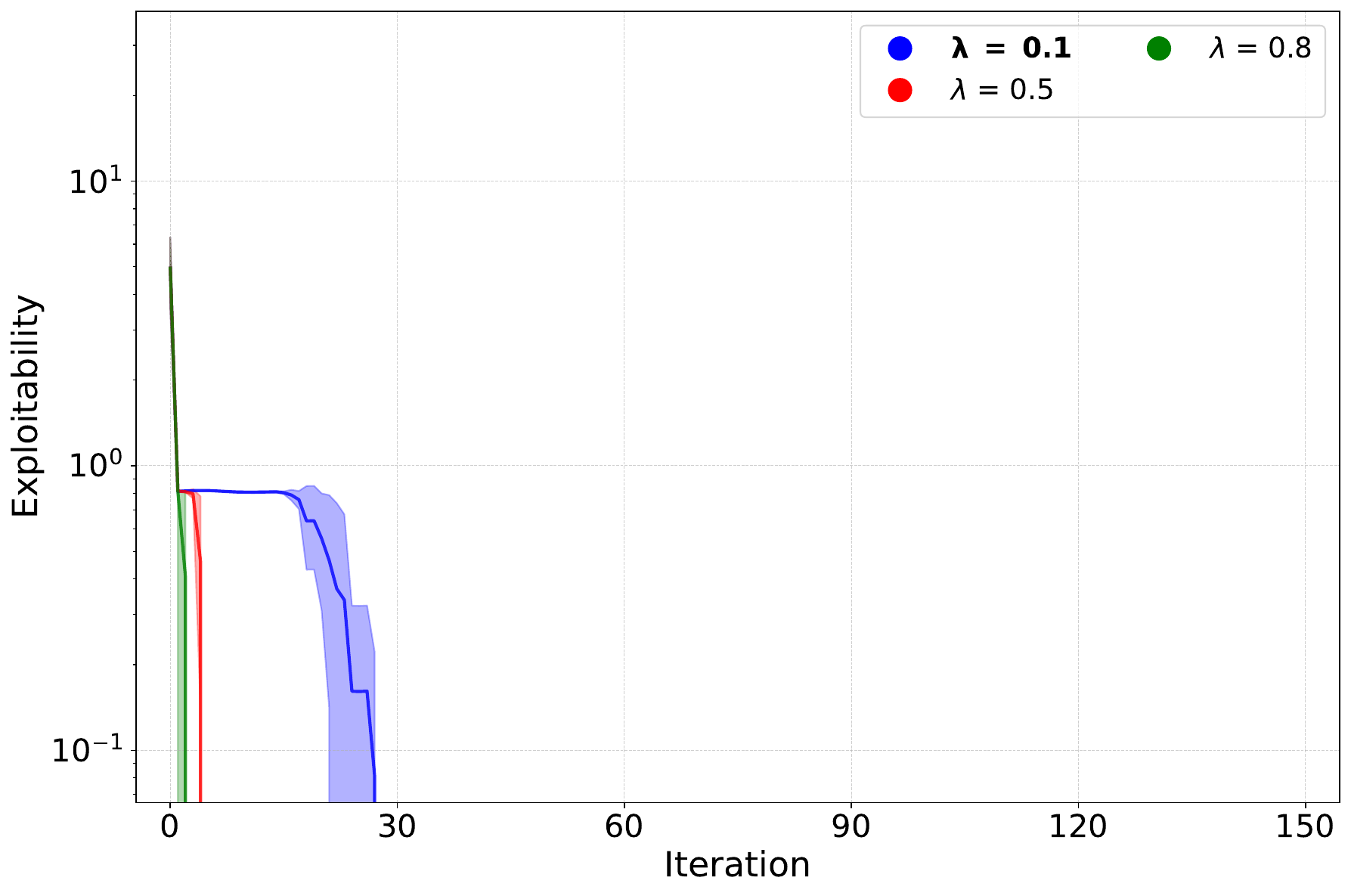}
    \includegraphics[width=0.49\linewidth]{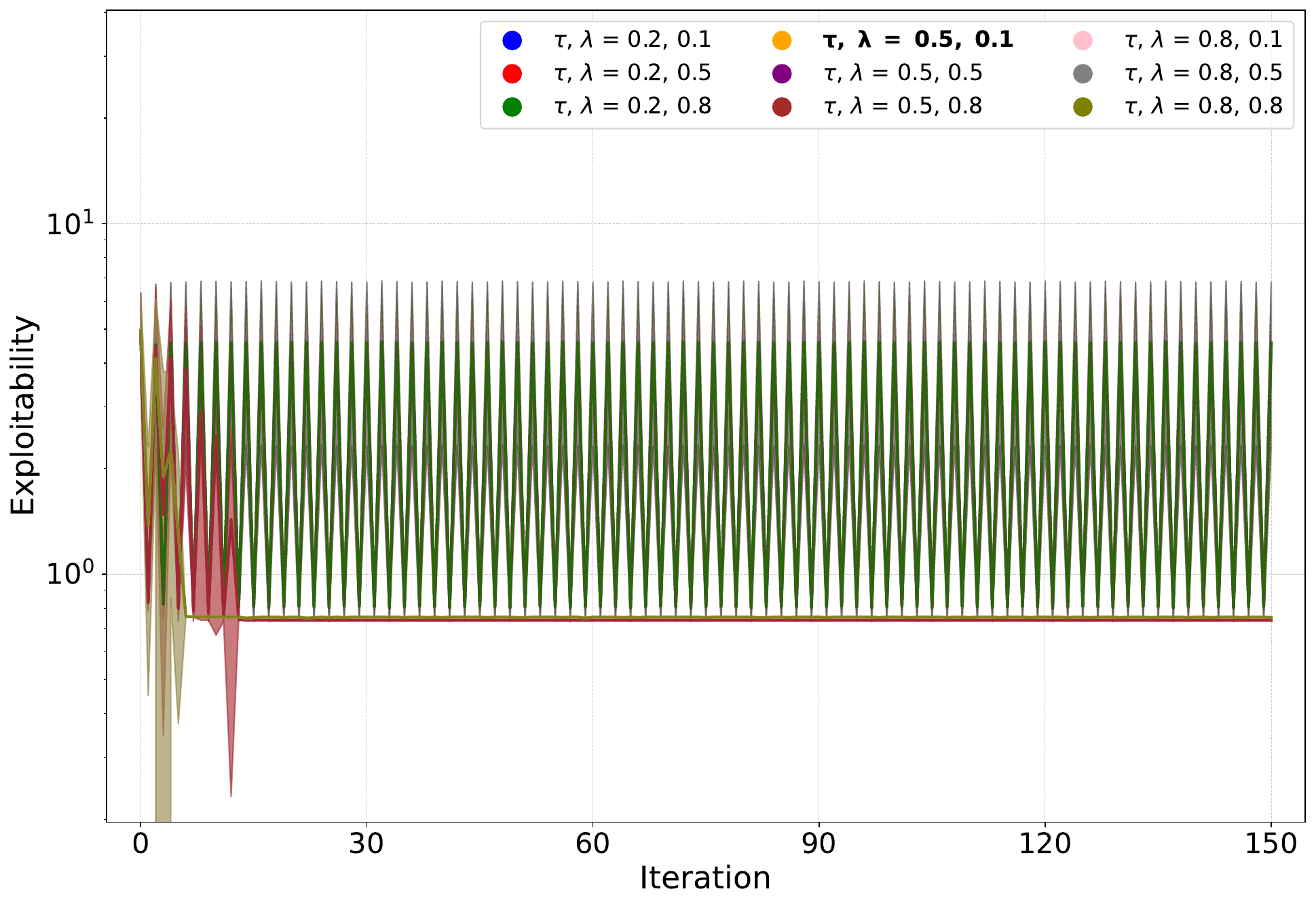}
    \includegraphics[width=0.49\linewidth]{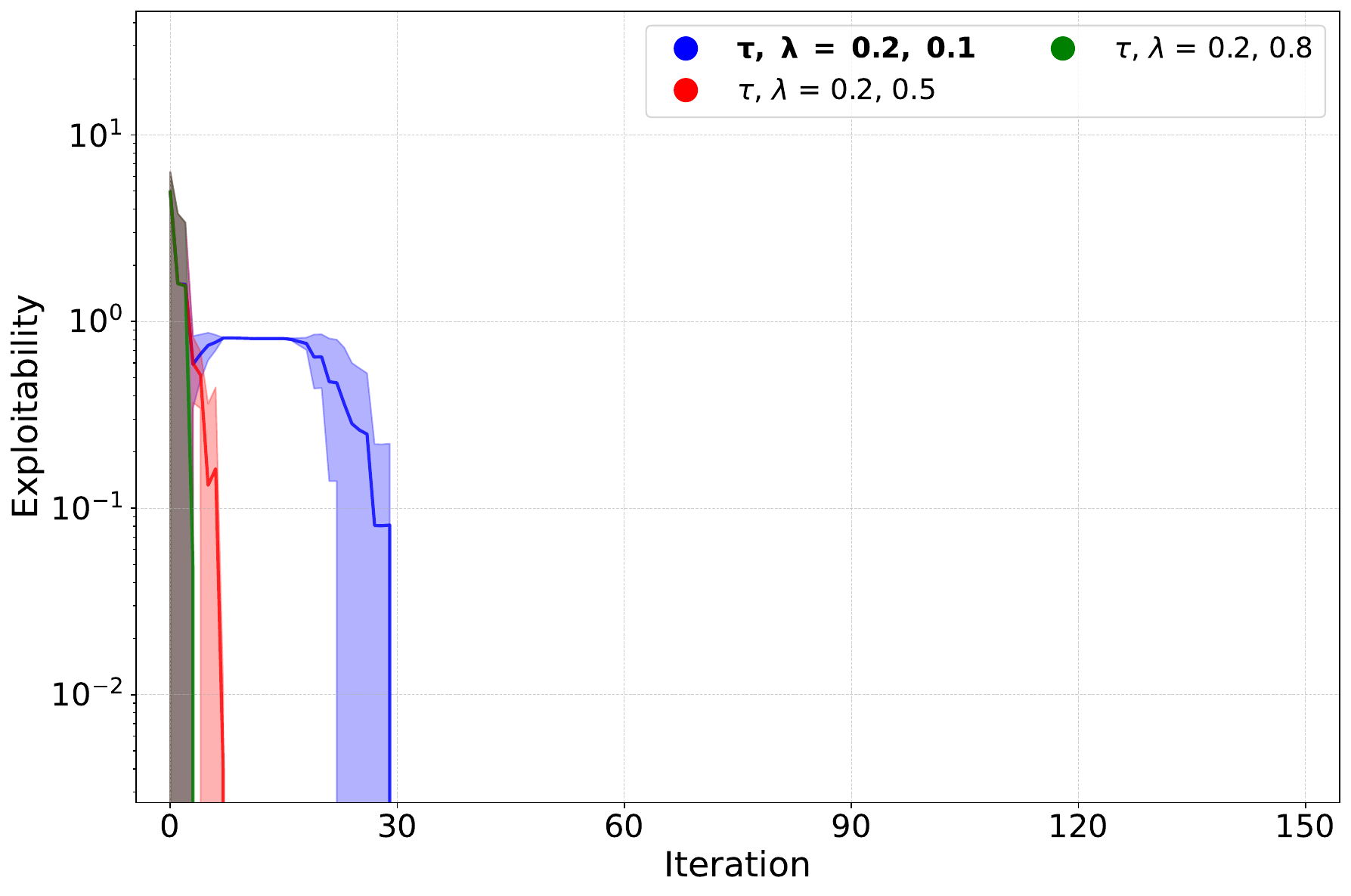}
    \includegraphics[width=0.49\linewidth]{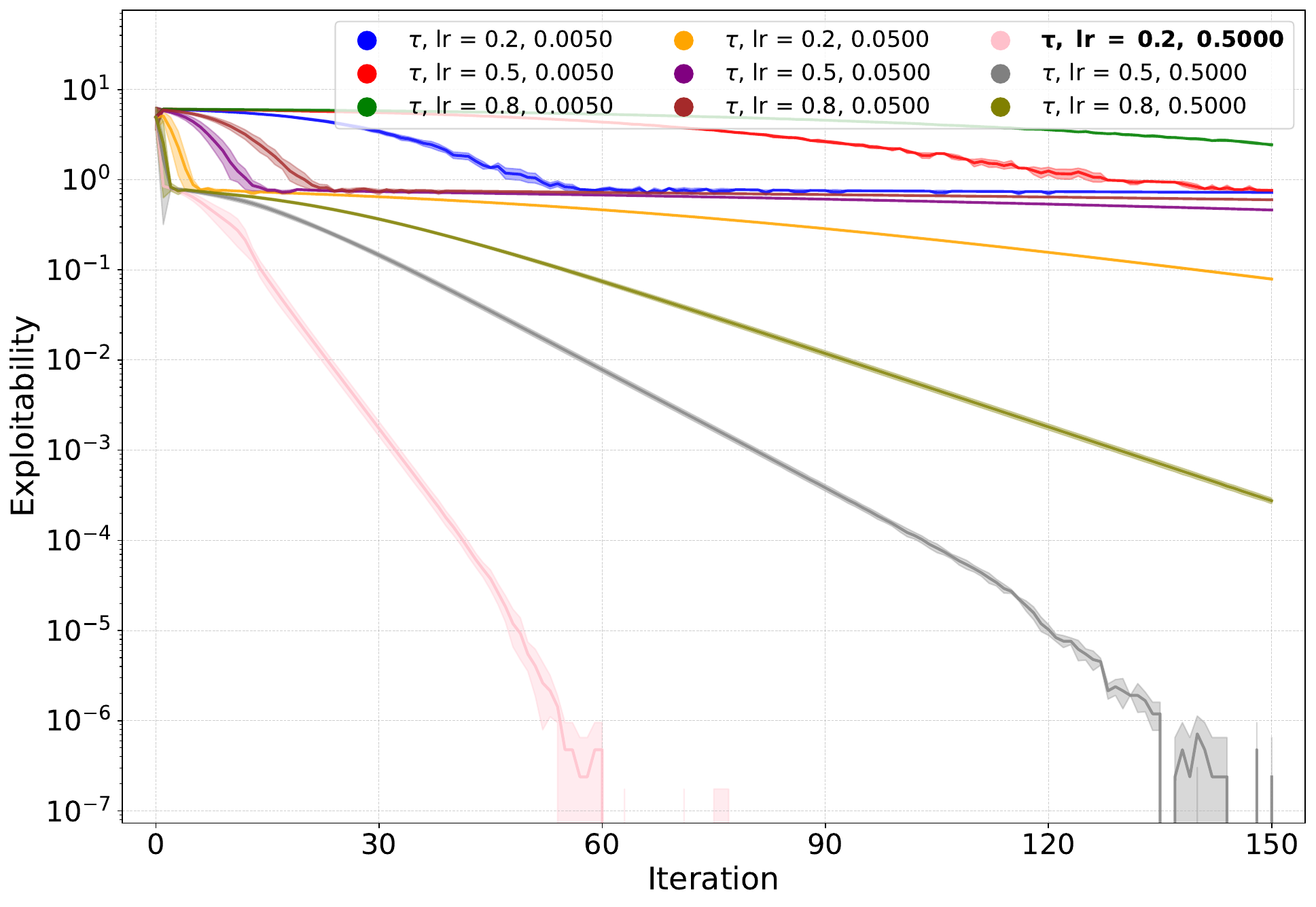}
    \includegraphics[width=0.49\linewidth]{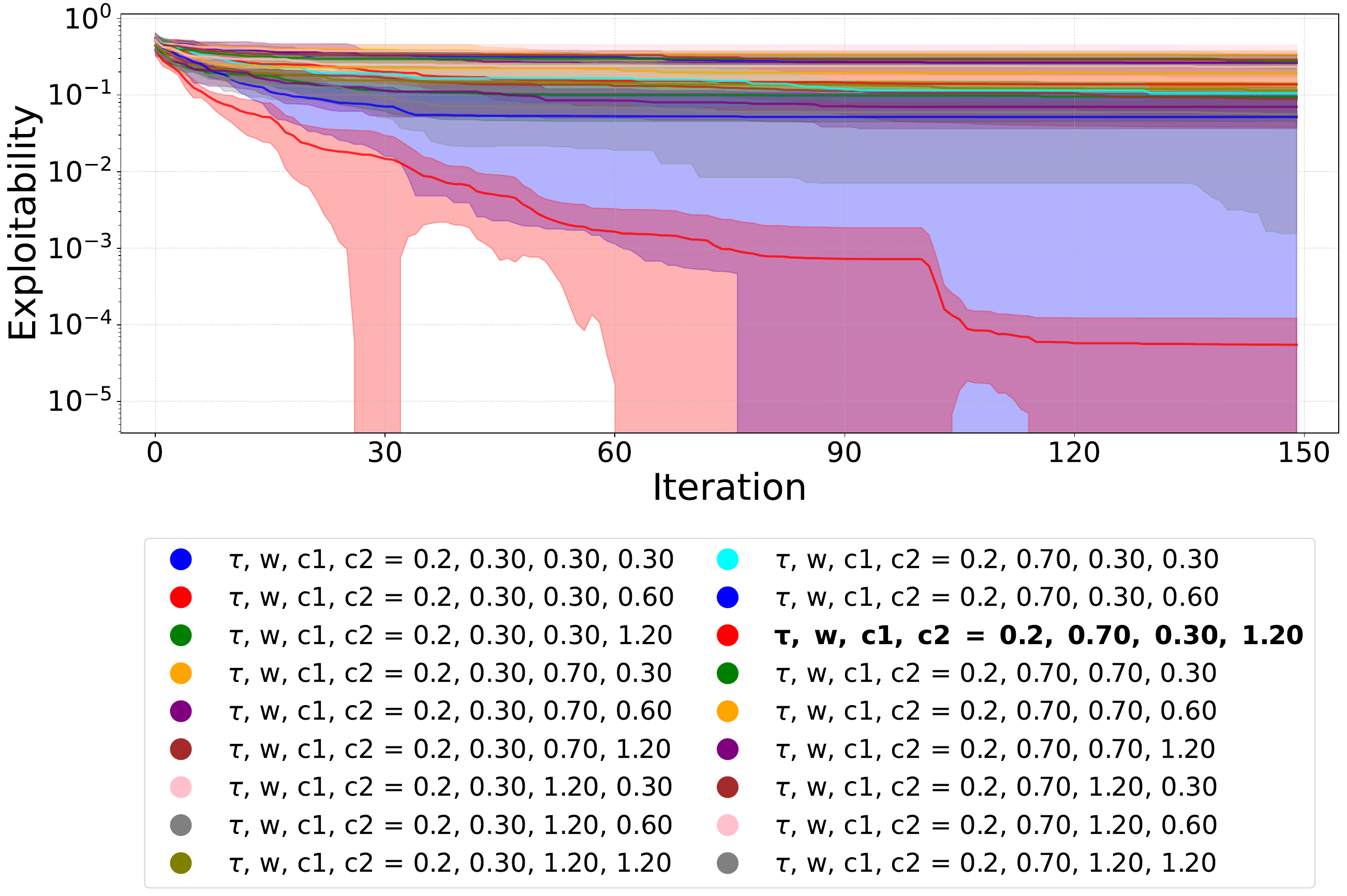}
    \includegraphics[width=0.49\linewidth]{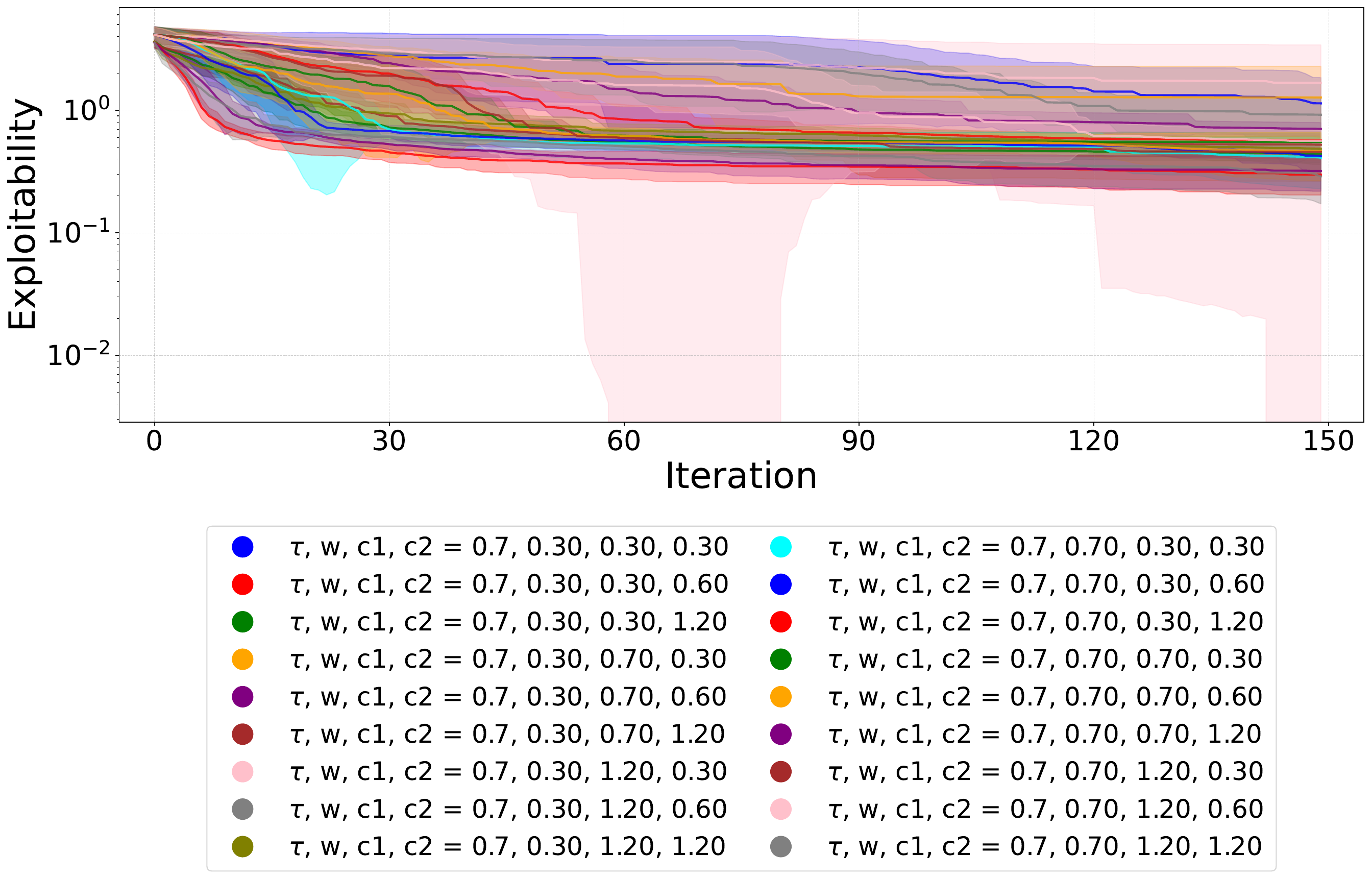}
    \caption{\textbf{KineticCongestion}. Sensitivity of the algorithms wrt the hyperparameters}
    \label{fig:KineticCongestion_sweep}
\end{figure}

\clearpage

\twocolumn

\section{Environment Parameters}\label{app:env_sweep}

\begin{figure}[h!]
    \centering
    \includegraphics[width=0.9\linewidth]{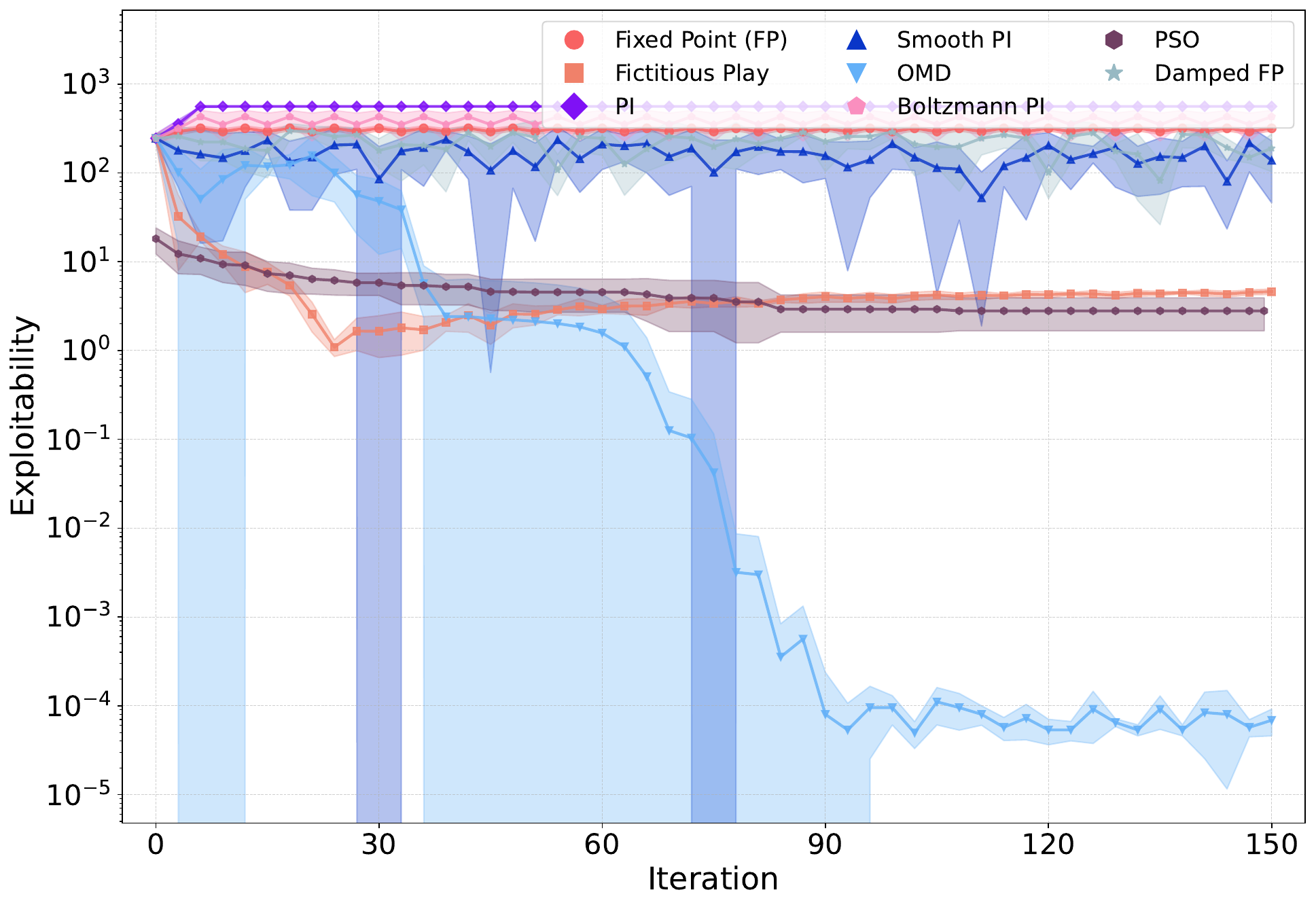}
    \includegraphics[width=0.49\linewidth]{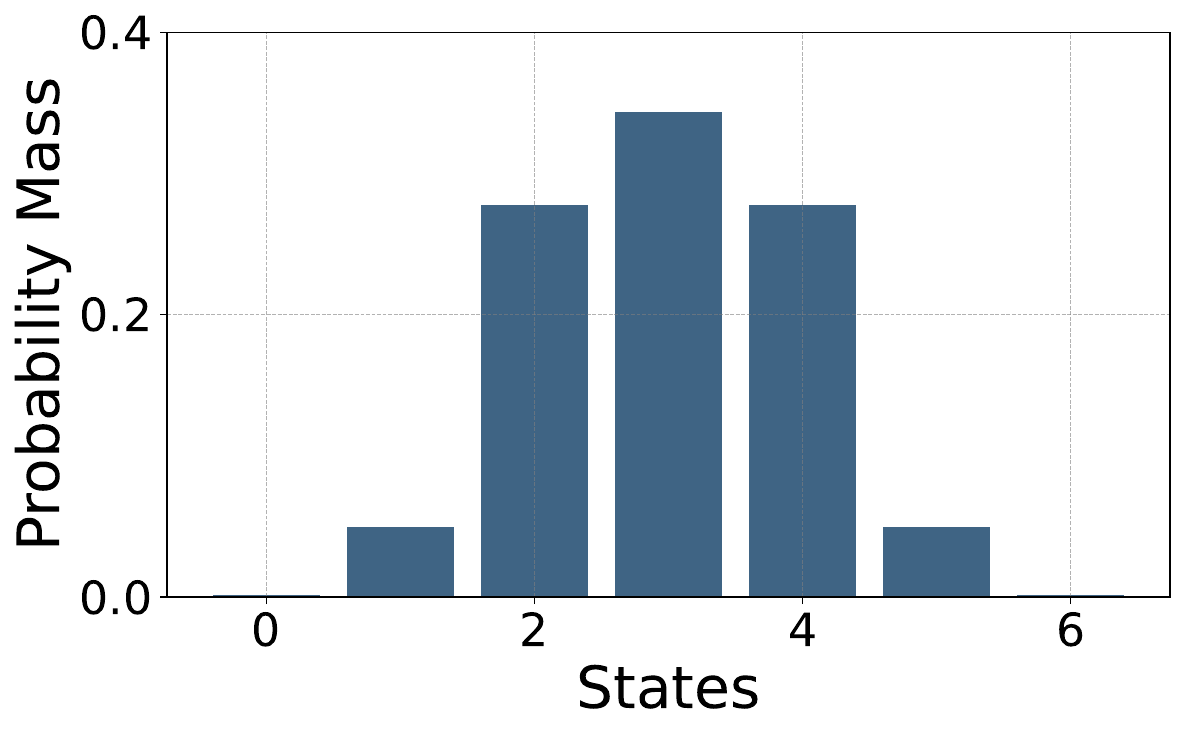}
    \includegraphics[width=0.49\linewidth]{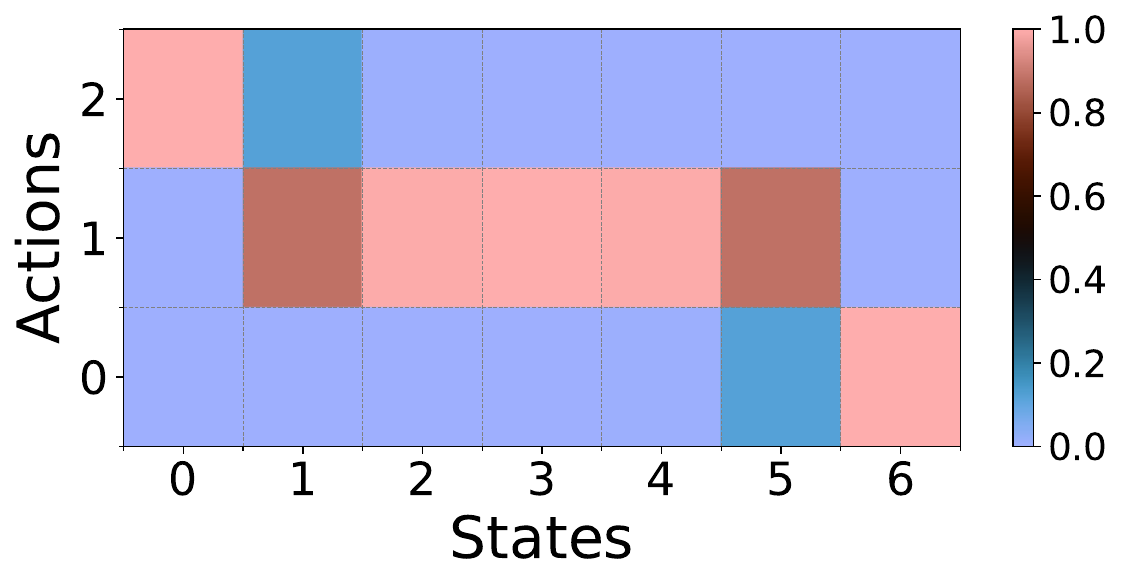}
    \caption{\textbf{Beach Bar Problem} Params. $\alpha=40, c_2=3, c_1=2$}
    \label{fig:LL_env_sweep}
\end{figure}

\begin{figure}[h!]
    \centering
    \includegraphics[width=0.9\linewidth]{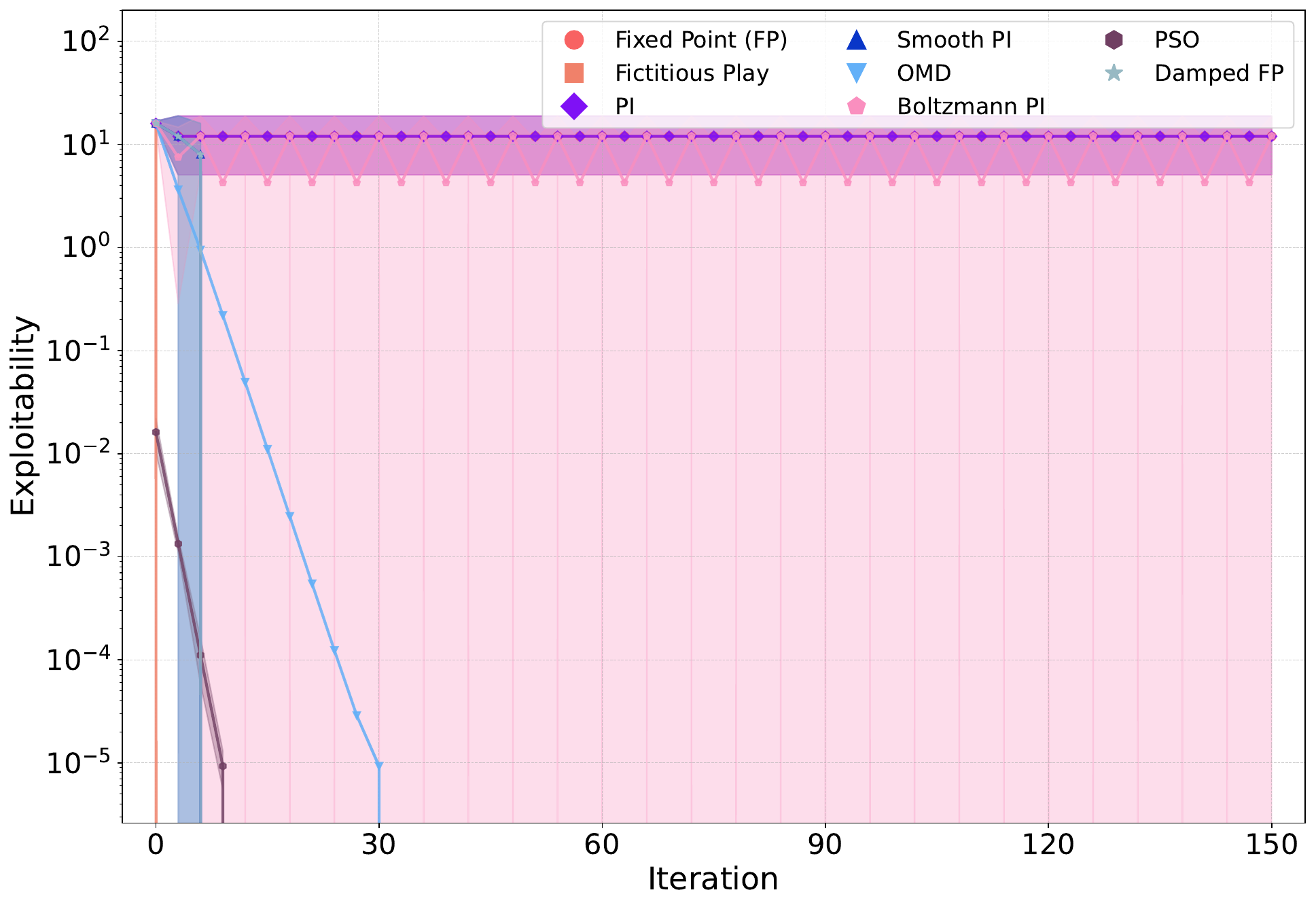}
    \includegraphics[width=0.49\linewidth]{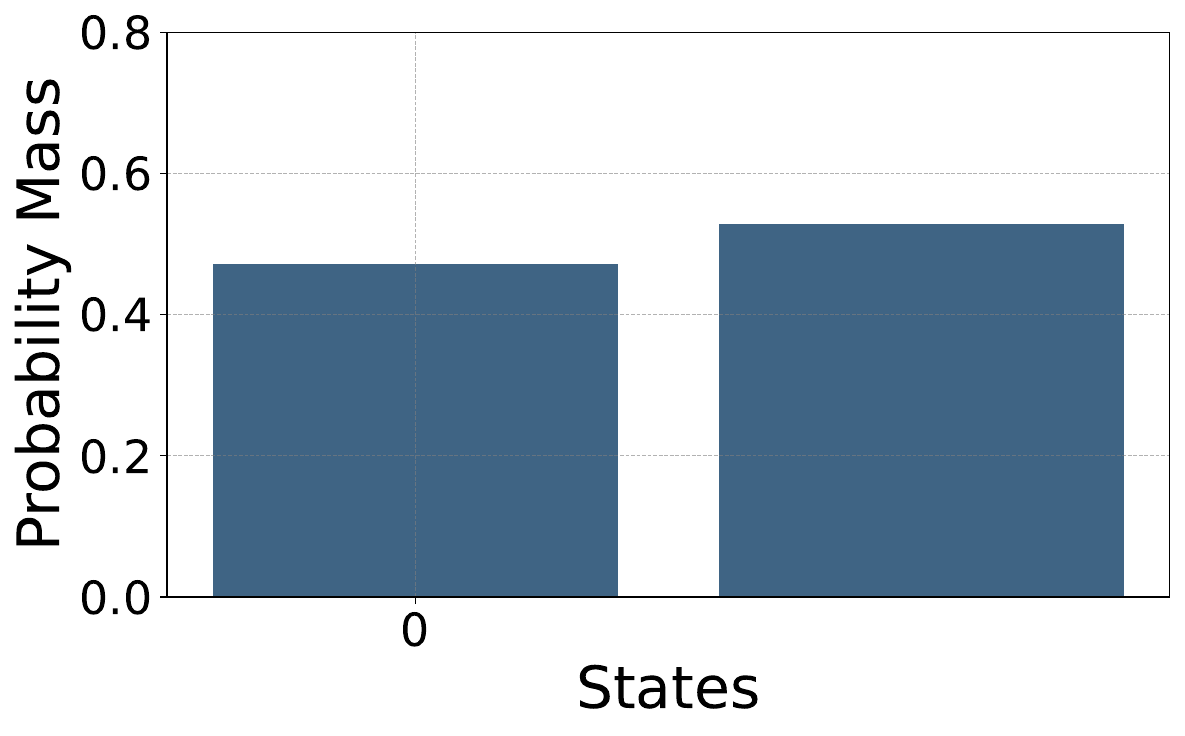}
    \includegraphics[width=0.49\linewidth]{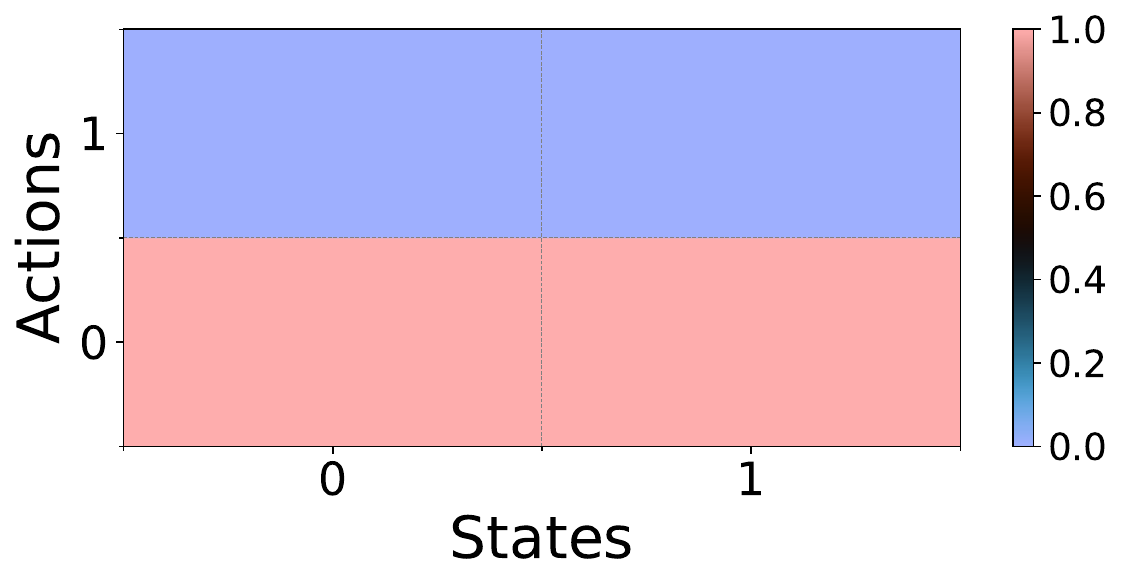}
    \caption{\textbf{Coordination Game} Params. $C=2, \alpha=2$}
    \label{fig:Contraction_env_sweep}
\end{figure}
\begin{figure}[h!]
    \centering
    \includegraphics[width=0.9\linewidth]{figures/Multiple_Equilibria/main_exp/multiple_versions_pure_fp_sweep_fplay_sweep_policy_iteration_sweep_temp0p20_and_5_more.pdf}
    \includegraphics[width=0.49\linewidth]{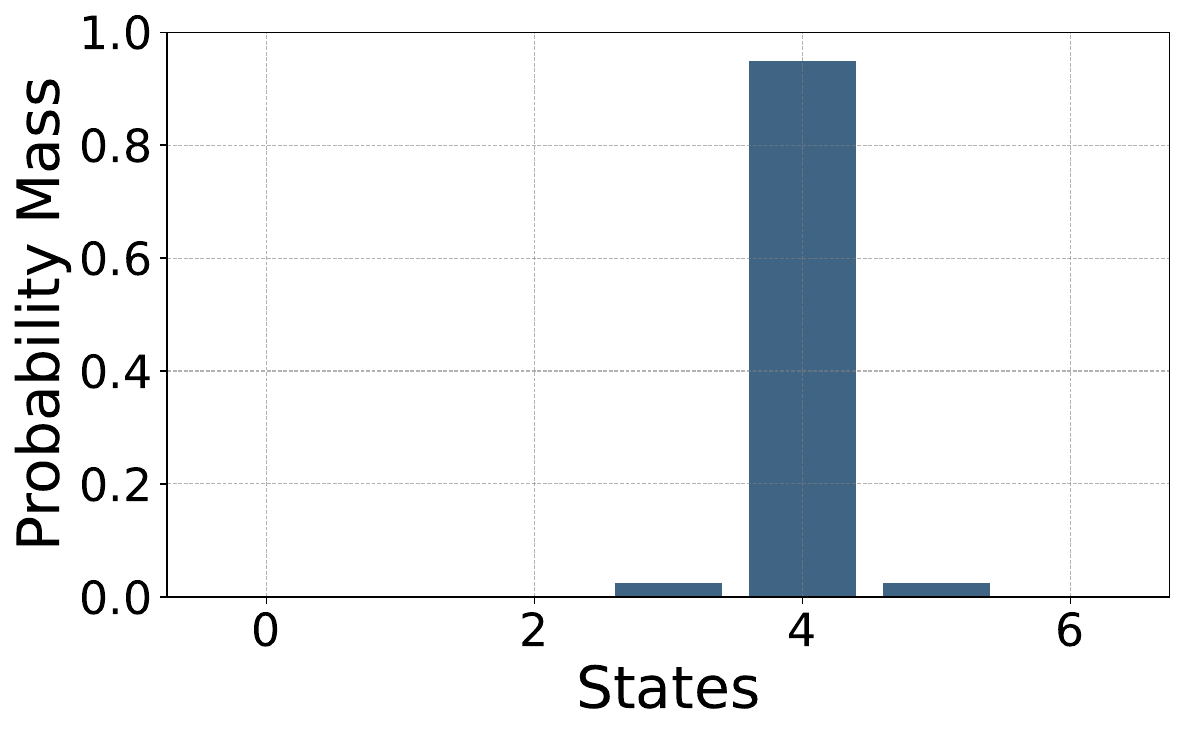}
    \includegraphics[width=0.49\linewidth]{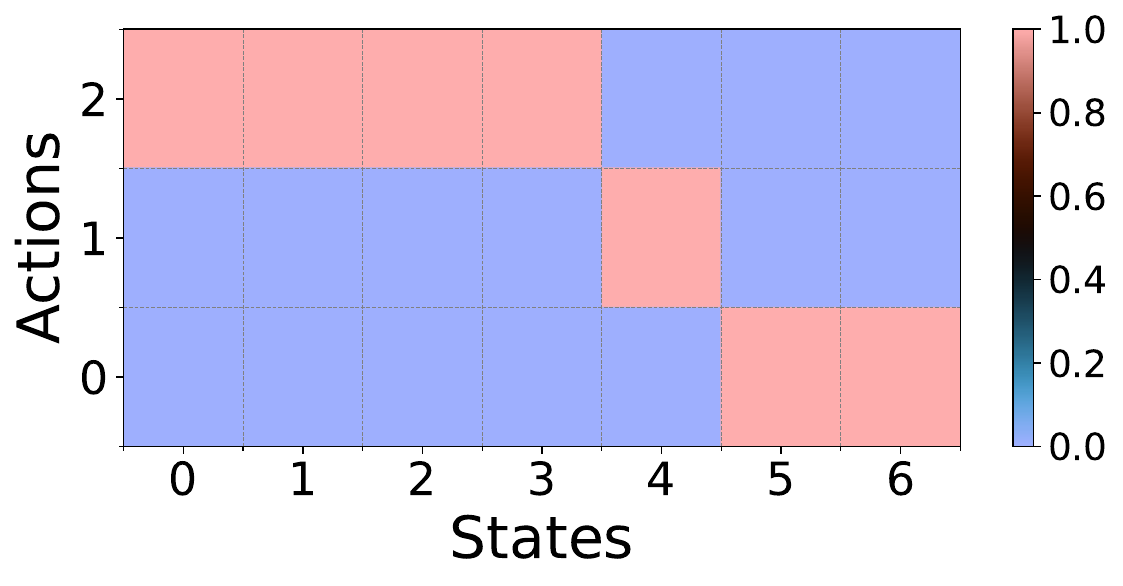}
    \caption{\textbf{Two Beach Bars Problem} Params. $\alpha=5, c_2=20, c_1=4$}
    \label{fig:Multiple_env_sweep}
\end{figure}
\end{document}